\newcommand{\nth}[1]{#1\textsuperscript{th}\xspace} 
\let\oldS\S
\renewcommand{\S}{\oldS\!}
\newcolumntype{R}{>{$}r<{$}} 
\newcolumntype{C}{>{$}c<{$}} 
\newcommand\numberthis{\addtocounter{equation}{1}\tag{\theequation}}
\newcommand{\reals}{\mathbb{R}}
\newcommand{\prop}[1]{\mathrm{prop}[#1]}
\newcommand{\faces}{\mathrm{faces}}
\newcommand{\simplex}{\Delta_\Y}
\newcommand{\A}{\mathcal{A}}
\newcommand{\B}{\mathcal{B}}
\newcommand{\C}{\mathcal{C}}
\newcommand{\E}{\mathbb{E}}
\newcommand{\F}{\mathcal{F}}
\newcommand{\I}{\mathcal{I}}
\newcommand{\R}{\mathcal{R}}
\newcommand{\Sc}{\mathcal{S}}
\newcommand{\U}{\mathcal{U}}
\newcommand{\V}{\mathcal{V}}
\newcommand{\X}{\mathcal{X}}
\newcommand{\Y}{\mathcal{Y}}
\newcommand{\inprod}[2]{\langle #1, #2 \rangle}
\newcommand{\relint}{\mathrm{relint}}
\newcommand{\toto}{\rightrightarrows}
\newcommand{\conv}{\mathrm{conv}\,}
\newcommand{\cone}{\mathrm{cone}\,}
\newcommand{\ones}{\mathbbm{1}}
\newcommand{\sign}{\mathrm{sign}}
\newcommand{\signstar}{\mathrm{sign}^*}
\renewcommand{\emptyset}{\varnothing}
\newcommand{\onespi}[2]{\ones_{#1,#2}}
\renewcommand{\boxed}[1]{\overline{#1}}
\newcommand{\Vfaces}{\V^{\textnormal{face}}}
\newcommand{\ellabs}{\ell_{\textnormal{abs}}^f}
\newcommand{\ellabsg}{\ell_{\textnormal{abs}}^g}
\newcommand{\fzo}{{f^{\textnormal{0-1}}}}
\newcommand{\Spi}[1]{\{\pi_1,\ldots,\pi_{#1}\}}
\newcommand{\ignore}[1]{}
\DeclareMathOperator*{\argmin}{arg\,min}
\newcommand{\Jac}{\mathrm{Jac}}
\newcommand{\mis}{\mathrm{mis}}
\newcommand{\abs}{\mathrm{abs}}
\newcommand{\Lvecf}{L^{\vec f}}
\newcommand{\ellvecf}{\ell^{\vec f}}
\newcommand{\ellabsvecf}{\ell_{\textnormal{abs}}^{\vec f}}
\newcommand{\ellabsvecg}{\ell_{\textnormal{abs}}^{\vec g}}
\newcommand{\Iue}{\I^\epsilon_u}
\newcommand{\elljac}{\ell^{\vec J}}
\newcommand{\block}[1]{\llcorner #1 \urcorner}
\newcommand{\ellabsfdy}[1]
\newcommand{\psithresh}{\psi^\tau}
\newtheorem{example}{Example} 
\newtheorem{theorem}{Theorem}
\newtheorem{lemma}[theorem]{Lemma} 
\newtheorem{proposition}[theorem]{Proposition}
\newtheorem{definition}[theorem]{Definition}
\renewcommand{\boxed}[1]{\overline{#1}}
\renewcommand{\emptyset}{\varnothing}
\newtheorem{lemmac}[theorem]{Lemma}
\newtheorem{propositionc}[theorem]{Proposition}
\newtheorem{condition}{Condition}
\begin{document}


\title{Structured Prediction with Abstention via the Lov\'asz Hinge}

\author{\name Jessie Finocchiaro  \email finocch@bc.edu \\
       \addr Department of Computer Science\\
       Boston College
       \AND
       \name Rafael Frongillo \email raf@colorado.edu \\
       \addr Department of Computer Science\\
       University of Colorado Boulder
	  \AND
	   \name Enrique Nueve \email enrique.nueveiv@colorado.edu \\
       \addr Department of Computer Science\\
       University of Colorado Boulder 
}

\editor{My editor}

\maketitle


\begin{abstract}
  The Lov\'asz hinge is a convex loss function proposed for binary structured classification, in which $k$ related binary predictions jointly evaluated by a submodular function.
  Despite its prevalence in image segmentation and related tasks, the consistency of the Lov\'asz hinge has remained open.
  We show that the Lov\'asz hinge is inconsistent with its desired target unless the set function used for evaluation is modular.
  Leveraging the embedding framework of \citet{finocchiaro2024embeddingJMLR}, we find the target loss for which the Lov\'asz hinge is consistent.
  This target, which we call the structured abstain problem, is a variant of selective classification for structured prediction that allows one to abstain on any subset of the $k$ binary predictions.
  We derive a family of link functions, each of which is simultaneously consistent for all polymatroids, a subset of submodular set functions.
  We then give sufficient conditions on the polymatroid for the structured abstain problem to be tightly embedded by the Lov\'asz hinge, meaning no target prediction is redundant.
  We experimentally demonstrate the potential of the structured abstain problem for interpretability in structured classification tasks. 
  Finally, for the multiclass setting, we show that one can combine the binary encoding construction of \citet{ramaswamy2018consistent} with our link construction to achieve an efficient consistent surrogate for a natural multiclass generalization of the structured abstain problem.
\end{abstract}

\begin{keywords}
  Structured Prediction, Consistency, Classification with Abstention, Selective Classification, Polyhedral Loss
\end{keywords}

\section{Introduction}

Structured prediction addresses a wide variety of machine learning tasks in which the error of several related predictions is best measured jointly, according to some underlying structure of the problem, rather than independently~\citep{osokin2017structured,gao2011consistency,hazan2010direct,tsochantaridis2005large}.
This structure could be spatial (e.g., images and video), sequential (e.g., text), combinatorial (e.g., subgraphs), or a combination of the above.
As traditional target losses measure error independently, such as 0-1 loss, more complex target losses are often introduced to capture the joint structure of these problems.

As with most classification-like settings, optimizing a given discrete target loss is typically intractable.
We therefore seek surrogate losses which are both convex, and thus efficient to optimize, and statistically consistent, roughly meaning that minimizing the surrogate and applying a link function yields the same predictions as if one had minimized the discrete loss directly.
A third important consideration, especially in structured prediction, is the dimension of the surrogate prediction space.
In structured prediction, the number of possible labels and/or target predictions is often exponentially large.
For example, in the structured binary classification problem, one makes $k$ simultaneous binary predictions, yielding $2^k$ possible labels.
In these settings, it is crucial to find a surrogate whose prediction space is low-dimensional relative to the number of labels.

In general, however, we lack surrogates satisfying all three desiderata: convex, consistent, and low-dimensional~\citep{mcallester2007generalization,nowozin2014optimal}.
One promising low-dimensional surrogate for structured binary classification, the Lov\'asz hinge, is convex via the Lov\'asz extension for submodular set functions~\citep{yu2018lovasz,yu2015lovaszarxiv}.
Despite the fact that this surrogate and its generalizations~\citep{berman2018lovasz} have been widely used, e.g.\ in image segmentation and processing~\citep{athar2020stem,chen2020afod,neven2019instance}, its consistency has thus far not been established.

Using the embedding framework of \citet{finocchiaro2024embeddingJMLR}, we show the inconsistency of Lov\'asz hinge for structured binary classification (\S~\ref{sec:inconsistency}).
Our proof proceeds by first determining a problem for which the Lov\'asz hinge is actually consistent: the \emph{structured abstain problem}, a variation of structured binary prediction in which one may abstain on a subset of the predictions (\S~\ref{sec:our-embedding}).
While the embedding framework shows that a calibrated link must exist, in our case actually deriving such a link is nontrivial.
We derive a family of link functions, each of which is calibrated simultaneously for all polymatroids, the subset of submodular functions under consideration (\S~\ref{sec:constructing-link}).
We also prove that the structured abstain problem is ``tightly'' embedded by the Lov\'asz hinge, after culling joint predictions that abstain on exactly one individual prediction (\S~\ref{sec:tightness}).
Turning to experiments, we demonstrate that abstention regions often correspond to regions with high uncertainty, which could be useful for interpretability in structured classificaton tasks (\S~\ref{sec:experiments}).
Finally, in the multiclass setting, we combine the binary encoding construction of \citet{ramaswamy2018consistent} with our link construction to achieve a dimension-efficient and consistent surrogate for a natural multiclass generalization of the structured abstain problem (\S~\ref{sec:multiclass}).

\section{Background}

We begin with notation, relevant definitions, and running examples. 

\subsection{Notation}

See Tables~\ref{tab:notation} and~\ref{tab:notation-proofs} in \S~\ref{app:omitted-proofs} for full tables of notation.
Throughout, we consider joint predictions over $k$ related individual predictions, yielding $n = 2^k$ total labels, each with label $y \in \Y := \{-1,1\}^k$.
Predictions are denoted $r\in\R$; we often take $\R=\Y$, or consider predictions $v \in \V := \{-1,0,1\}^k$ or $u \in \reals^k$.
Loss functions measure the quality of these predictions against the observed label $y\in\Y$.
In general, we denote a discrete loss $\ell : \R \times \Y \to \reals_+$ and surrogate $L : \reals^k \times \Y \to \reals_+$.
We also occasionally restrict a general loss $L$ to a domain $\Sc \subseteq \R$ and define $L|_{\Sc} : (u,y) \mapsto L(u,y)$ for all $u \in \Sc$.

Let $[k] := \{1,\ldots,k\}$.
When translating from vector functions to set functions, we use the shorthand $\{ u \leq c \} := \{ i \in [k] \mid u_i \leq c\}$ for $u\in\reals^k$, $c\in\reals$.
Additionally, for any $S\subseteq [k]$, we let $\ones_S \in \{0,1\}^k$ with $(\ones_S)_i = 1 \iff i\in S$ be the 0-1 indicator for $S$.
Let $\Sc_k$ denote the set of permutations of $[k]$.
For any permutation $\pi \in \Sc_k$, and any $i\in\{0,1,\ldots,k\}$, define $\onespi{\pi}{i} = \ones_{\Spi{i}}$, where $\onespi{\pi}{0} = 0 \in \reals^k$.

For $u,u'\in\reals^k$, the Hadamard (element-wise) product $u\odot u'\in\reals^k$ is given by $(u \odot u')_i = u_iu'_i$.
We extend $\odot$ to sets in the natural way; e.g., for $U\subseteq \reals^k$ and $u'\in\reals^k$, we define $U \odot u' = \{u\odot u' \mid u\in U\}$.

We often decompose elements of $u \in \reals^k$ by their sign and absolute value.
To this end, we define $\sign : \reals^k \to \V$ to be the (element-wise) sign of $u$, noting $\sign(0) = 0$.
In contrast, we use the function $\signstar : \reals^k \to \Y$ to denote an arbitrary function that agrees with $\sign$ when $|u_i|\neq 0$ and break ties arbitrarily (but deterministically) at $0$.
We frequently use the fact that $|u| = u\odot\signstar(u) = u\odot\sign(u)$.
We define $\boxed u = \sign(u) \odot \min(|u|, \ones)$ to ``clip'' $u$ to $[-1,1]^k$.
Finally, for $u\in\reals^k$ we define $u_+\in\reals^k$ by $(u_+)_i = \max(u_i,0)$.

\subsection{Submodular functions and the Lov\'asz extension}

A set function $f:2^{[k]}\to\reals$ is \emph{submodular} if for all $S,T\subseteq [k]$ we have $f(S) + f(T) \geq f(S\cup T) + f(S\cap T)$.
If this inequality is strict whenever $S$ and $T$ are incomparable, meaning $S\not\subseteq T$ and $T\not\subseteq S$, then we say $f$ is \emph{strictly submodular}.
A function is \emph{modular} if the submodular inequality holds with equality for all $S,T\subseteq [k]$.
The function $f$ is \emph{increasing} if we have $f(S\cup T) \geq f(S)$ for all disjoint $S,T\subseteq [k]$, and \emph{strictly increasing} if the inequality is strict whenever $T\neq\emptyset$.
Also, we say $f$ is \emph{normalized} if $f(\emptyset) = 0$.
Holistically, we say $f$ is a  \emph{polymatroid} if it is submodular, normalized, and increasing for all $S, T \subseteq [k]$ \citep{bach2013learning}.

We let $\vec f = \{f_y \mid 2^{[k]}\to\reals_{+}\}_{y\in\Y}$ denote a collection of polymatroid functions, so that each may be differently conditioned on $y\in\Y$, where recall $\Y = \{-1,1\}^k$.
Denote the set of polymatroid collections by $\vec \F_k$.
Lastly, we let $\F_k \subset 
\vec \F_k$ to be the set of polymatroids $\vec f\in\vec\F_k$ where $f_y$ is the same for all $y$; formally, $\F_k= \{\vec{f}\mid f_y =f_{y'} \;\forall \; y,y'\in \Y\}$.
Given that polymatroids in $\mathcal{F}_k$ do not depend on $y$, we will sometimes write $f \in \mathcal{F}_k$.
See \S~\ref{polyexamples} for examples differentiating $\vec \F_k$ from $\F_k$.

The structured binary classification problem is given by the discrete loss $\ell^{\vec f}:\R\times\Y\to\reals$ with $\R=\Y$, given by
\begin{equation}
\label{eq:discrete-set-loss}
\ell^{\vec f}(r,y) = f_y(\{ r\odot y < 0 \}) = f_y(\mis (r,y))~,
\end{equation}
where $\mis (r,y)=\{i \in [k] \mid   r_i\neq y_i \}$.
In words, $\ell^{\vec f}$ measures the joint error of the $k$ predictions by applying $f_y$ to the set of mispredictions when $y$ is the true label, i.e., indices corresponding to incorrect individual binary predictions.

A classic object related to submodular functions is the \emph{Lov\'asz extension} to $\reals^k$~\citep{lovasz1983submodular}, which is known to be convex when (and only when) $f$ is submodular~\cite[Proposition 3.6]{bach2013learning}.
For any permutation $\pi\in\Sc_k$, define $P_\pi = \{x\in\reals^k_+ \mid x_{\pi_1} \geq \cdots \geq x_{\pi_k}\}$, the set of nonnegative vectors ordered by $\pi$.
The Lov\'asz extension $F$ of a polymatroid function $f:2^{[k]}\to\reals$ can be formulated in several equivalent ways \citep[Definition 3.1]{bach2013learning}, including the following which we take to be our definition:
\begin{equation}\label{eq:lovasz-ext}
  F(x) = \max_{\pi\in \Sc_k} \sum_{i=1}^k x_{\pi_i} (f(\Spi{i})-f(\Spi{i-1}))~.
\end{equation}
Given any $x\in\reals^k_+$, the argmax in eq.~\eqref{eq:lovasz-ext} is the set $\{\pi\in\Sc_k \mid x \in P_\pi\}$, i.e., the set of all permutations that order the elements of $x$.
For any $\pi \in \Sc_k$ such that $x\in P_\pi$, we may therefore write
\begin{equation}
\label{eq:lovasz-ext-pi-u}
F(x) = \sum_{i=1}^k x_{\pi_i} (f(\Spi{i})-f(\Spi{i-1}))~.
\end{equation}

For fixed $\vec f$, let $F_y$ denote the Lov\'asz extension for any $f_y \in \vec f$.
\citet{yu2018lovasz} define the \emph{Lov\'asz hinge} as the loss $\Lvecf:\reals^d\times\Y\to\reals_+$ given as follows.
\begin{equation}
\label{eq:lovasz-hinge}
\Lvecf (u,y) =  F_y((\ones - u \odot y)_+)~.
\end{equation}
If $\vec{f}\in \vec \F_k \setminus \F_k$, the evaluation of $F_y$ is dependent on the label $y\in\Y$ given to $\Lvecf$.
The Lov\'asz hinge is proposed as a surrogate for the structured binary classification problem in eq.~\eqref{eq:discrete-set-loss}, using the link $\signstar$ to map surrogate predictions $u \in \reals^k$ back to the discrete prediction space $\R = \Y$.
From eq.~\eqref{eq:lovasz-ext}, the Lov\'asz extension is polyhedral (piecewise-linear and convex) as the pointwise maximum of a finite collection of affine functions.
Since this statement holds for each $y \in \Y$, the loss $L^{\vec f}$ is polyhedral as well.

\subsection{Running examples}\label{polyexamples}


We will refer to three running examples of submodular functions and their corresponding surrogates.

\subsubsection{Weighted Hamming loss}\label{sec:wh-loss}

First, consider the case where $f \in \F_k $  is modular. 
Modular set functions can be parameterized by any $w\in\reals^k_+$, so that $f^{w} (S) = \sum_{i\in S} w_i \in \F_k$. 
In this case $\ell^f$ reduces to \emph{weighted Hamming loss}, and $L^f$ to weighted hinge, the consistency of which is known~\cite[Theorem 15]{gao2011consistency}.
\begin{align}
  L^{f^w}(u,y)
  &= \max_{\pi\in \Sc_k} \sum_{i=1}^k ((1-u\odot y)_+)_{\pi_i} (f^{w} (\Spi{i})-f^{w} (\Spi{i-1}))
    \nonumber
  \\
  &= \sum_{i=1}^k (1-u_i y_i)_+ (w_i)~.
  \label{eq:weighted-hamming}
\end{align}

\subsubsection{The Binary Encoded Prediction (BEP) surrogate}\label{sec:bep-example}
For the other example, consider $\fzo$ given by $\fzo(\emptyset)=0$ and $\fzo(S)=1$ for all $S\neq\emptyset$.
Here the Lov\'asz hinge reduces to
\begin{align}
  L^{\fzo}(u,y)
  &= \max_{\pi\in \Sc_k} \sum_{i=1}^k ((1-u\odot y)_+)_{\pi_i} (\fzo(\Spi{i})-\fzo(\Spi{i-1}))
    \nonumber
  \\
  &= \max_{\pi \in \Sc_k} \left((1 - u \odot y)_+\right)_{\pi_1} \nonumber \\
  &= \max_{i\in[k]} \; (1-u_i y_i)_+~.
  \label{eq:BEP-simple-form}
\end{align}

$L^\fzo$ is equivalent to the BEP surrogate proposed by \citet{ramaswamy2018consistent} for multiclass classification with an abstain option.
The target loss for this problem is $\ell_{1/2}:[n]\cup \{\bot \} \times [n] \to \reals_{+}$ defined by $\ell_{1/2}(r,y)=0$ if $r=y$, $1/2$ if $r=\bot$, and $1$ otherwise.
Here, the prediction $\bot$ corresponds to ``abstaining'' if no $y\in \Y$  has $p_y \geq 1/2$.
The BEP surrogate is given by 
\begin{align}
  \label{eq:BEP}
L_\frac{1}{2}(u,\hat y) &= \left(\max_{j\in [k]}\; B(\hat y)_j u_j +1\right)_+
\end{align}
where $B:[n]\rightarrow \{-1,1\}^k$ is an arbitrary injection.
Substituting $y=-B(\hat y)$ in eq.~\eqref{eq:BEP}, and moving the $(\cdot)_+$ inside, we recover eq.~\eqref{eq:BEP-simple-form}.


\subsubsection{Jaccard loss}
A common metric for image segmentation is Jaccard loss, which is defined as 1 minus the Intersection over Union (IoU) between the sets of predicted and true foreground pixels.
Unlike the previous two examples $f^{w}$ and $\fzo$, which were both elements of $\F_{k}$, the polymatroid corresponding to Jaccard loss depends nontrivially on $y$~\citep{yu2018lovasz,yu2015lovaszarxiv}.

Formally, let $k$ be the number of pixels in an image.
The set of possible labels is $\Y = \{-1,1\}^k$ where each $y\in\Y$ corresponds to a segmentation: $y_i =1$ means the $i$-th pixel is labeled foreground, and labeled background if $y_i=-1$.
Jaccard loss $\Jac:\Y\times\Y\to\reals_+$ is given by 
\begin{align}\label{eq:jaccard-loss-mispred}
\Jac(y',y) = \frac{|\mis(y',y)|}{|\{i\in[k] \mid  y_i = 1\}\cup \mis(y',y)|}~,
\end{align}
where $y'\in \Y$ expresses a prediction and $y$ is the true label, with the convention $0/0=0$.
For fixed $y$, Jaccard loss is submodular in the set $\mis(y',y)$ \citep{berman2018yes}. 
We can rewrite eq.~\eqref{eq:jaccard-loss-mispred} as $\Jac(y',y) = J_y(\mis(y',y))$ where $J_y(S) = |S|/|S\cup\{i\in[k] \mid  y_i=1\}|$, where $S\in 2^{[k]}$ corresponds to the set of mispredictions.
In other words, we have $\mathrm{Jac} = \elljac$ for $\vec J = \{J_y\}_{y\in\Y}$.


\subsection{Property elicitation and calibration}

Property elicitation is a useful tool to analyze the consistency of surrogates.
Indeed, for polyhedral (piecewise linear and convex) surrogates such as the Lov\'asz hinge~\eqref{eq:lovasz-hinge}, \citet[Theorem 8]{finocchiaro2022embedding} shows that indirect property elicitation is equivalent to statistical consistency.
Informally, a property is elicited by a loss if it captures the exact minimizers of the loss given a (conditional) label distribution.

\begin{definition}[Elicitation]\label{def:prop-elicits}
	A property $\Gamma : \simplex \to 2^{\R} \setminus \{\emptyset\}$ is a function mapping distributions over labels to predictions.
	A loss $L : \R \times \Y \to \reals_+$ \emph{elicits} a property $\Gamma$ if, for all $p \in \simplex$,
	\begin{align*}
	\Gamma(p) = \argmin_{r \in \R} \E_{Y \sim p}L(r,Y)~.~
	\end{align*}
	Moreover, if $\E_{Y \sim p}L(\cdot,Y)$ attains its infimum for all $p \in \simplex$, we say $L$ is \emph{minimizable}, and elicits some unique property, denoted $\prop{L}$.
\end{definition}

In order to connect property elicitation to statistical consistency, we work through the notion of calibration, which is equivalent to consistency in our setting~\citep{bartlett2006convexity,zhang2004statistical,ramaswamy2016convex}.
One advantage of calibration over consistency is that it eliminates the need to consider features $x \in \X$, allowing us to focus on expected loss over labels through the conditional distribution $p \in \simplex$.
We often denote $L(u;p) := \E_{Y \sim p} L(u,Y)$, and $\ell(r;p) := \E_{Y \sim p}\ell(r,Y)$.

\begin{definition}[Calibration]\label{def:calibration}
	Let $\ell:\R\times\Y\to\reals$ with $|\R|<\infty$.
	A surrogate $L : \reals^d \times \Y \to \reals_+$ and link $\psi : \reals^d \to \R$ pair $(L, \psi)$ is \emph{calibrated} with respect to $\ell$ if for all $p\in\simplex$,
	\begin{align*}
	\inf_{u : \psi(u) \not \in \prop{\ell}(p)} L(u;p) > \inf_{u \in \reals^d} L(u;p)~.
	\end{align*}
\end{definition}

We use calibration as an equivalent property to study statistical consistency in this paper,
and when restricting to polyhedral losses, (indirect) property elicitation implies calibration.

\begin{definition}[Indirect elicitation]\label{def:indirectelicit}
Let minimizable loss $L:\reals^{d}\times \Y \to \reals_{+}$ and link $\psi :\reals^{d} \to  \R$ be given.
The pair $(L,\psi )$ indirectly elicits a property $\gamma :\Delta_{\Y}\toto \R$ if for all $u\in \reals^{d}$, we have $\Gamma_{u}\subseteq \gamma_{\psi (u)}$, where $\Gamma =\prop{L}$.
Moreover, we say $L$ indirectly elicits $\gamma$ if such a $\psi$ exists, i.e., if for all $u\in \reals^{d}$ there exists $r\in \R$ such that $\Gamma_{u}\subseteq \gamma_{r}$.
\end{definition}

A useful observation is that, when target predictions are \emph{dominated}, we can eliminate them from the link function.

\begin{lemma}\label{lemma:safe-link}
Given $(L,\psi)$ calibrated with respect to $\ell$ (eliciting $\gamma$). 
If prediction $r\in\R$ is dominated by prediction $r'\in\R$, e.g., $\gamma_{r}\subseteq \gamma_{r'}$, the the pair $(L,\psi')$ is calibrated with respect to $\ell$, where the link $\psi'$ is defined as $\psi' (u)=r'$ if $\psi (u)=r$ and $\psi'(u)=\psi(u)$ otherwise.
\end{lemma}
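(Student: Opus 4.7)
The plan is to show that replacing $r$ by $r'$ in the link can only shrink the set of points $u$ at which the link produces a suboptimal prediction, so that the calibration inequality for $(L,\psi')$ follows from the one already assumed for $(L,\psi)$ simply by taking infima over a smaller set. Because $L$ itself is unchanged, there is no need to revisit its loss landscape, and the entire argument turns on the single hypothesis $\gamma_r \subseteq \gamma_{r'}$.

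Concretely, I would fix an arbitrary $p \in \simplex$ and establish the containment
\[
\{u \in \reals^d : \psi'(u) \notin \prop{\ell}(p)\}
\;\subseteq\;
\{u \in \reals^d : \psi(u) \notin \prop{\ell}(p)\}.
\]
To prove it, I would pick any $u$ on the left and split on whether $\psi(u) = r$. If $\psi(u) \neq r$, then by construction $\psi'(u) = \psi(u)$, so membership on the right is immediate. If instead $\psi(u) = r$, then $\psi'(u) = r' \notin \prop{\ell}(p)$, i.e.\ $p \notin \gamma_{r'}$; the domination hypothesis $\gamma_r \subseteq \gamma_{r'}$ then yields $p \notin \gamma_r$, which is exactly $\psi(u) = r \notin \prop{\ell}(p)$.

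Given this set inclusion, infimizing $L(\cdot;p)$ over the smaller set can only enlarge the infimum, so
\[
\inf_{u:\,\psi'(u)\notin\prop{\ell}(p)} L(u;p)
\;\geq\;
\inf_{u:\,\psi(u)\notin\prop{\ell}(p)} L(u;p)
\;>\; \inf_{u\in\reals^d} L(u;p),
\]
where the strict inequality is the assumed calibration of $(L,\psi)$. Since $p \in \simplex$ was arbitrary, $(L,\psi')$ is calibrated with respect to $\ell$, as required.

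The argument is a short bookkeeping step; the only point demanding care is keeping straight the two complementary uses of $\gamma$, namely $\prop{\ell}(p) \subseteq \R$ (the optimal predictions at $p$) and $\gamma_r \subseteq \simplex$ (the distributions at which $r$ is optimal), and translating the domination hypothesis into a statement about the latter. Beyond this notational unpacking, I do not anticipate any genuine obstacle.
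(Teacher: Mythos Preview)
Your proposal is correct and follows essentially the same approach as the paper: fix $p$, establish the set containment $\{u:\psi'(u)\notin\gamma(p)\}\subseteq\{u:\psi(u)\notin\gamma(p)\}$ via the same case split on whether $\psi(u)=r$, and conclude by comparing infima. The paper's argument is slightly terser but structurally identical.
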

\begin{proof}
    Fix $p\in \Delta_{\Y}$. If $\psi (u)=v$ is not dominated, then $\psi (u)\in\gamma (p) \Leftrightarrow \psi' (u)\in \gamma (p)$ by equality. 
    If $\psi (u)=r$, then $\psi (u) \in \gamma (p) \Rightarrow \psi'(u)\in\gamma (p)$.
    Contrapositively, $\psi'(u)\notin \gamma (p) \Rightarrow \psi (u)\notin \gamma (p)$.
    Thus, $S':=\{u \mid\psi' (u)\notin \gamma (p) \}\subseteq S := \{u\mid\psi (u)\notin \gamma (p) \}$.
    As $S'\subseteq S$,
    $$\inf_{u\in S'}\mathbb{E}_{p}L(u,Y)\geq  \inf_{u\in S}\mathbb{E}_{p}L(u,Y)> \inf_{u}\mathbb{E}_{p}L(u,Y)~.$$
    As $p$ was arbitrary, $(L,\psi )$ is calibrated with respect to $\ell$.
\end{proof}

\subsection{The embedding framework}

We will lean heavily on the embedding framework of~\citet{finocchiaro2019embedding,finocchiaro2024embeddingJMLR}.
Given a discrete target loss, and a surrogate loss over $\reals^k$, an embedding maps target predictions $r \in \R$ into $\reals^k$ so that the surrogate behaves the same as the target on the embedded points.
The authors show that every polyhedral surrogate embeds some discrete loss, and show that an embedding implies consistency.
To define embeddings, we first need a notion of representative sets, which allows one to ignore some target predictions that dominated.

\begin{definition}\label{def:representative-set}
Let $\Gamma :\Delta_{\Y}\toto \R$.
We say $\Sc \subseteq \R$ is \emph{representative} for $\Gamma$ if we have $\Gamma (p)\cap \Sc \neq \emptyset$  for all $p\in \simplex$.
We further say $\Sc$ is a minimum representative set if it has the smallest cardinality among all representative sets. 
Given a minimizable loss $L:\R \times \Y \to \reals_{+} $, we say $\Sc$ is a (minimum) representative set for $L$ if it is a (minimum) representative set for $\prop L$.
\end{definition}

\begin{definition}[Embedding]\label{def:loss-embed}
	The loss $L:\reals^d \times \Y \to\reals_+$ \emph{embeds} a loss $\ell:\R \times \Y\to\reals_+$ if there exists a representative set $\Sc$ for $\ell$ and an injective embedding $\varphi:\Sc\to\reals^d$ such that
	(i) $L(\varphi(r),y) = \ell(r,y)$ for all $r\in\Sc$ and $y \in \Y$, and
	(ii) for all $p\in\simplex,r\in\Sc$ we have
	\begin{equation}\label{eq:embed-loss}
	r \in \prop{\ell}(p) \iff \varphi(r) \in \prop{L}(p)~.
	\end{equation}
If $\Sc$ is a minimal representative set, we say $L$ tightly embeds $\ell$.
\end{definition}

Embeddings are intimately tied to polyhedral losses as they have finite representative sets, i.e., representative sets with finite cardinality.
A central tool for the present work, however, is the converse: every polyhedral loss embeds some discrete target loss, namely, its restriction to a finite representative set.

\begin{theorem}[{\citep[Thm.\ 3, Prop.\ 1]{finocchiaro2022embedding}}]\label{thm:poly-embeds-discrete}
  A loss $L$ with a finite representative set $\Sc$ embeds $L|_\Sc$.
  Moreover, every polyhedral $L$ has a finite representative set.
\end{theorem}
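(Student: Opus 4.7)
The plan is to prove the two claims separately and then observe that combining them yields the full statement. For the embedding claim, given a finite representative set $\Sc\subseteq\reals^d$ for $L$, I would take the embedding map $\varphi:\Sc\to\reals^d$ to be the inclusion $\varphi(u)=u$. This is trivially injective, and condition~(i) of Definition~\ref{def:loss-embed} holds by definition of $L|_{\Sc}$. For condition~(ii), the crucial observation is that representativeness of $\Sc$ means $\prop{L}(p)\cap\Sc\neq\emptyset$ for every $p\in\simplex$, so the global infimum of $L(\cdot;p)$ over $\reals^d$ is attained within $\Sc$. Hence for $u\in\Sc$, being a minimizer of $L|_{\Sc}(\cdot;p)$ over $\Sc$ is equivalent to being a global minimizer of $L(\cdot;p)$ over $\reals^d$, giving $u\in\prop{L|_{\Sc}}(p)\iff u\in\prop{L}(p)$, which is exactly~\eqref{eq:embed-loss}.

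The second claim, that every polyhedral $L$ has a finite representative set, is the main technical content. The strategy is to construct a finite polyhedral complex $\mathcal{K}$ on $\reals^d$ that is adapted to all slices $L(\cdot,y)$. Since $L$ is polyhedral, for each $y\in\Y$ I can write $L(u,y)=\max_{j\in J_y}\inprod{a_{j,y}}{u}+b_{j,y}$ for finite index sets $J_y$. Partitioning $\reals^d$ by which affine piece achieves the maximum yields a finite polyhedral complex $\mathcal{K}_y$; I would take $\mathcal{K}$ to be the common refinement across all $y\in\Y$. The essential property is that on each cell $C\in\mathcal{K}$, every $L(\cdot,y)$ is affine, hence $L(\cdot;p)=\sum_y p_y L(\cdot,y)$ is affine on $C$ for every $p\in\simplex$. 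I would then choose $\Sc$ by selecting one point in the relative interior of each cell of $\mathcal{K}$ across all dimensions, obtaining a finite set.

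To verify $\Sc$ is representative, fix $p\in\simplex$. Since $L\geq 0$ is polyhedral, $L(\cdot;p)$ attains its infimum, so $\prop{L}(p)$ is nonempty; pick any $u^*\in\prop{L}(p)$ and let $C\in\mathcal{K}$ be the unique cell whose relative interior contains $u^*$. Because $L(\cdot;p)$ is affine on $C$ and $u^*$ is a global minimizer lying in $\relint(C)$, the restriction of $L(\cdot;p)$ to $C$ must be constant: any nonconstant affine direction along $C$ would yield a strictly smaller value at a nearby interior point, contradicting global minimality. Consequently $C\subseteq\prop{L}(p)$, so the representative chosen from $C$ lies in $\Sc\cap\prop{L}(p)$. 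The main obstacle is the last step, especially ensuring the argument handles unbounded cells (where a nonconstant affine function would be unbounded below, again contradicting global minimality) and the degenerate case where $\mathcal{K}$ has nontrivial lineality; in both subcases one reduces to the same relative-interior argument. Finally, combining Part~1 and Part~2 immediately yields the theorem: the polyhedral $L$ admits a finite representative set $\Sc$, and hence embeds $L|_{\Sc}$.
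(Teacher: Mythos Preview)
The paper does not give its own proof of this statement; it is quoted verbatim from \citet{finocchiaro2022embedding} (their Theorem~3 and Proposition~1), so there is no in-paper argument to compare against.

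On its own merits, your proposal is correct and is essentially the argument used in the cited source. For the first claim, the inclusion embedding together with the observation that representativeness forces $\inf_{\reals^d} L(\cdot;p) = \min_{\Sc} L(\cdot;p)$ is exactly the standard proof. For the second claim, building the common refinement $\mathcal{K}$ of the affine-piece partitions and picking one point per cell is again the standard route; your constancy-on-$C$ step is the key fact (and is closely related to the paper's Lemma~\ref{lem:subgradients-on-relint-affine}). One small cleanup: your separate handling of unbounded cells is unnecessary, since $u^*\in\relint(C)$ already lets you move a small amount in both $\pm d$ for any direction $d$ in the lineality/affine span of $C$, so a nonconstant affine restriction immediately yields a strictly smaller nearby value regardless of boundedness. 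Also make sure when you say ``each cell of $\mathcal{K}$ across all dimensions'' you really mean the full face lattice of the refinement, so that every point of $\reals^d$ lies in the relative interior of exactly one cell; this is needed for the ``unique cell containing $u^*$ in its relative interior'' step to be well-defined.
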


In particular, if a surrogate $L:\reals^k \times \Y \to \reals_+$ embeds a discrete target $\ell:\R\times \Y\to\reals_+$, then there exists a calibrated link function $\psi:\reals^k\to\R$ such that $(L,\psi)$ is consistent with respect to $\ell$.
The proof is constructive, via their notion of \emph{separated link} functions, a fact we will make use of in Theorem~\ref{thm:embed-implies-consistent}.

\section{Lov\'asz hinge embeds the structured abstain problem}\label{sec:our-embedding}

Since the Lov\'asz hinge is polyhedral, Theorem~\ref{thm:poly-embeds-discrete} implies that it embeds a discrete loss, which may or may not be the same as the intended target $\ellvecf$.
As we saw in \S~\ref{sec:bep-example}, one special case, $L^\fzo$, reduces to the BEP surrogate, which is consistent with respect to multiclass classification with an abstain option $\bot$.
Moreover, the abstain report $\bot$ is not redundant, which implies that $\Lvecf$ cannot embed $\ellvecf$ in general.
Intuitively, whatever $\Lvecf$ embeds, it must allow the algorithm to abstain in some sense.
We formalize this intuition by showing $\Lvecf$ embeds the discrete loss $\ellabsvecf$, a variant of structured binary classification which allows abstention on any subset of the $k$ labels.

Our proof proceeds by constructing a finite representative set $\V$ for $\Lvecf$.
We first show that the filled hypercube $[-1,1]^k$ is representative (\S~\ref{subsec:filled-hypercube-rep}).
We then provide an affine decomposition of $\Lvecf$ on this hypercube (\S~\ref{sec:affine-decomposition}).
This observation implies that, for every $p \in \simplex$, some element of $\V$ must minimize $\Lvecf(\cdot; p)$ (\S~\ref{subsec:embedding-SAP}).
Thus, $\Lvecf$ embeds $\Lvecf|_\V$.
Defining $\ellabsvecf := \Lvecf|_\V$, consistency with respect to $\ellabsvecf$ follows.
See \S~\ref{app:omitted-sec-3} for all omitted proofs.

\subsection{The filled hypercube is representative}\label{subsec:filled-hypercube-rep}

As a first step, we show that the filled hypercube $R := [-1,1]^k$ is representative for $L^{\vec f}$. We then use this fact to find a \emph{finite} representative set for $L^{\vec f}$ and apply Theorem~\ref{thm:poly-embeds-discrete}.
In fact, we show a stronger statement: surrogate predictions outside the filled hypercube $[-1,1]^k$ are dominated on each label.
Recall that $\boxed u := \min(|u|, \ones) \odot \sign (u)$ is the ``clipping'' of $u$ to the hypercube.

\ignore{\begin{lemmac}\label{lem:lovasz-hypercube-dominates}\end{lemmac}}
\begin{restatable}{lemmac}{lovaszhypercubedominates}
  \label{lem:lovasz-hypercube-dominates}
  For any $u\in\reals^k$, we have $L^{\vec f}(\boxed u,y) \leq L^{\vec f}(u,y)$ for all $y\in\Y$.
\end{restatable}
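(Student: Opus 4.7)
The plan is to reduce the inequality to a coordinate-wise comparison of the arguments to $F_y$ and then invoke monotonicity of the Lov\'asz extension on $\reals_+^k$. Writing $x := (\ones - u \odot y)_+$ and $\bar x := (\ones - \boxed u \odot y)_+$, we have $L^{\vec f}(u,y) = F_y(x)$ and $L^{\vec f}(\boxed u, y) = F_y(\bar x)$, with both $x, \bar x \in \reals_+^k$ by construction. It therefore suffices to prove (i) $0 \leq \bar x_i \leq x_i$ for every coordinate $i$, and (ii) $F_y$ is nondecreasing on $\reals_+^k$.

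For step (i) I would perform a short case split on $|u_i|$. When $|u_i| \leq 1$, clipping is inert and $\bar x_i = x_i$. When $|u_i| > 1$, we have $\boxed u_i = \sign(u_i)$ and $|u_i y_i| > 1$ since $y_i \in \{-1,1\}$, so either $u_i y_i > 1$, giving $x_i = \bar x_i = 0$, or $u_i y_i < -1$, giving $\bar x_i = 1 - \sign(u_i)\, y_i = 2$ and $x_i = 1 - u_i y_i > 2$. In every case $0 \leq \bar x_i \leq x_i$.

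For step (ii) I would lean on the polymatroid hypothesis: because $f_y$ is increasing, each coefficient $f_y(\Spi{i}) - f_y(\Spi{i-1})$ appearing in~\eqref{eq:lovasz-ext} is nonnegative. Hence, for each permutation $\pi$, the linear form $x \mapsto \sum_{i=1}^k x_{\pi_i}(f_y(\Spi{i}) - f_y(\Spi{i-1}))$ is nondecreasing in every coordinate on $\reals_+^k$, and taking the pointwise maximum over $\pi$ preserves this monotonicity. Applying monotonicity to $0 \leq \bar x \leq x$ yields $F_y(\bar x) \leq F_y(x)$, which is the claim. The only step demanding any care is the case split in (i), since clipping interacts asymmetrically with the sign of $y_i$; step (ii) uses only that $f_y$ is increasing (not submodular), so no convex-analytic machinery is needed.
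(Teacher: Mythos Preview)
Your proof is correct and follows essentially the same approach as the paper: a coordinate-wise case analysis showing $(\ones-\boxed u\odot y)_+ \leq (\ones-u\odot y)_+$, followed by monotonicity of $F_y$ on $\reals_+^k$ via the increasing property of $f_y$. The only cosmetic difference is in how monotonicity is invoked: the paper first observes that a single permutation orders both vectors (since one is $\min(\cdot,2)$ of the other) and then subtracts using eq.~\eqref{eq:lovasz-ext-pi-u}, whereas you argue directly from the $\max$ formulation~\eqref{eq:lovasz-ext} that each linear form has nonnegative coefficients. Your route is slightly cleaner in that it sidesteps the shared-ordering observation, but both arguments rest on the same idea.
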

\noindent
Using this result, we may now simplify the Lov\'asz hinge.
When $u\in[-1,1]^k$, we have $(\ones - u \odot y)_+ = (\ones - u \odot y)$, so
\begin{equation}\label{eq:lovasz-hinge-simplified}
  L^{\vec f}|_{R}(u,y) = F_y(\ones - u \odot y)~.
\end{equation}

\subsection{Affine decomposition of $L^{\vec f}$}\label{sec:affine-decomposition}

Having simplfied the form of the Lov\'asz hinge, we now give an affine decomposition of $L^{\vec{f}}$ on $[-1,1]^k$, which we use to attain a finite representative set for $\Lvecf$.
Recall that for any $\pi\in\Sc_k$ we define $P_\pi = \{x\in\reals^k_+ \mid x_{\pi_1} \geq \cdots \geq x_{\pi_k}\}$ be the $x \in \reals^k_+$ ordered by the permutation $\pi$.
Letting $V_{\pi} = \{\onespi{\pi}{i} \mid i \in \{0,\ldots,k\}\} \subset \V$ be the indicators of vertices of the first $j$ coordinates of $\pi$,
we have $P_\pi = \cone (V_\pi )$, the conic hull of $V_\pi$, meaning every $x \in P_\pi$ can be written as a conic combination of elements of $V_\pi$.
For all $i\in\{0,\ldots,k\}$, define $\alpha_i : \reals^k_+ \to \reals$ as follows: for any $x\in\reals^k_+$, define $\alpha_0(x) = 1-x_{[1]} \in \reals$, $\alpha_k(x) = x_{[k]} \geq 0$, and $\alpha_i(x) = x_{[i]} - x_{[i+1]} \geq 0$ for $i\in\{1,\ldots,k-1\}$ as the conic weights defining $x$, where $x_{[k]}$ denotes the $k^{th}$ largest element of $x$.
Then
\begin{align}
x
&= \sum_{i=1}^{k} \alpha_i(x) \, \onespi{\pi}{i}
= \alpha_0(x) 0 + \sum_{i=1}^k \alpha_i(x) \onespi{\pi}{i}
= \sum_{i=0}^{k} \alpha_i(x) \, \onespi{\pi}{i}
~,
\label{eq:alpha-combination}
\end{align}
where we recall that $\ones_{\pi,0} = \vec 0 \in\reals^k$.
Since $\alpha_i(x) \geq 0$ for all $i \in \{1,\ldots,k\}$, the first equality implies that $x$ can be written as a conic combination of elements of $V_\pi$.
In the case $x_{[1]} \leq 1$, we have $\alpha_i(x) \geq 0$ for all $i\in\{0,\ldots,k\}$.
Since $\sum_{i=0}^k \alpha_i(x) = 1$, in that case the latter equality in eq.~\eqref{eq:alpha-combination} is a convex combination, and therefore,
$P_\pi \cap [0,1]^k = \conv (V_\pi )$.

It is clear from eq.~\eqref{eq:lovasz-ext-pi-u} that $F_y$ is affine on $P_\pi$ for each $\pi\in\Sc_k$.
We now identify the regions within $[-1,1]^k$ where $L^{\vec f}(\cdot,y)$ is affine for all labels $y\in\Y$ simultaneously, leveraging these polyhedra and symmetry in $y$.

Motivated by the above, for any $y\in\Y$ and $\pi\in\Sc_k$, define
\begin{align}
	\label{eq:V-pi}
  V_{\pi,y} &= V_\pi \odot y = \{\onespi{\pi}{i} \odot y \mid i \in \{0,\ldots,k\}\} \subset \V~,
  \\
  \label{eq:P-pi}
  P_{\pi,y} &= \conv (V_{\pi,y}) = \conv (V_\pi) \odot y \subset [-1,1]^k~.
\end{align}
Since $V_{\pi,y}$ is a set of affinely independent vectors, each $P_{\pi,y}$ is a simplex.
Observe that for the case $y=\ones$, we have $P_{\pi,\ones} = P_\pi \cap [0,1]^k$.
The other $P_{\pi,y}$ sets are simply reflections of $P_{\pi,\ones}$, so we may write $P_{\pi,y} = P_{\pi,\ones} \odot y$.
We now show that these regions union to the filled hypercube $[-1,1]^k$, and $L^{\vec f}(\cdot,y)$ is affine on $P_{\pi,y}$ for each $y \in \Y$.

\begin{figure}[h]
\centering
	\begin{minipage}[b]{0.45\textwidth}
	\centering
	\includegraphics[width=\textwidth]{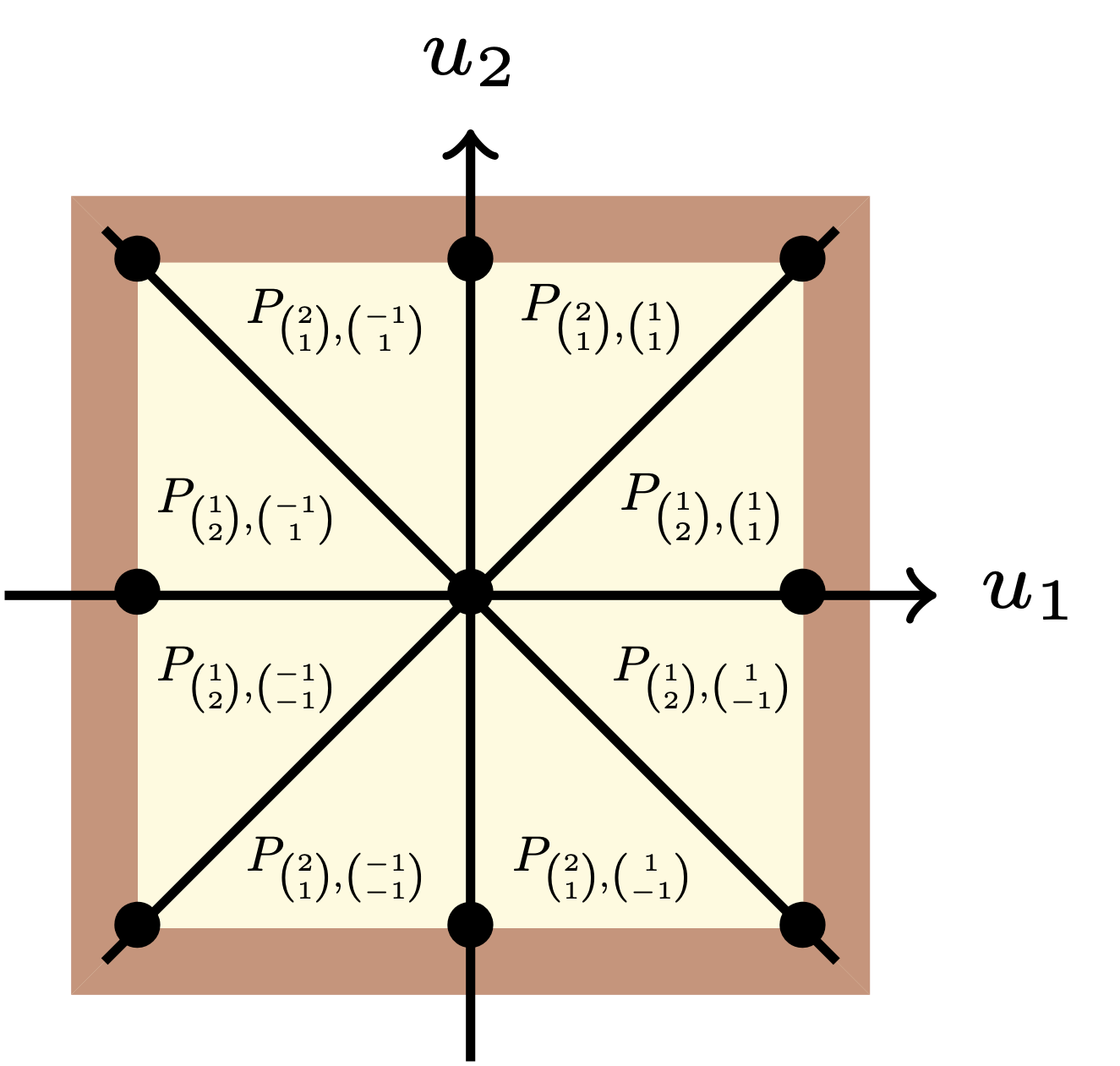}
\end{minipage}
\hfill
  \begin{minipage}[b]{0.48\textwidth}
  \centering
    \includegraphics[width=\textwidth]{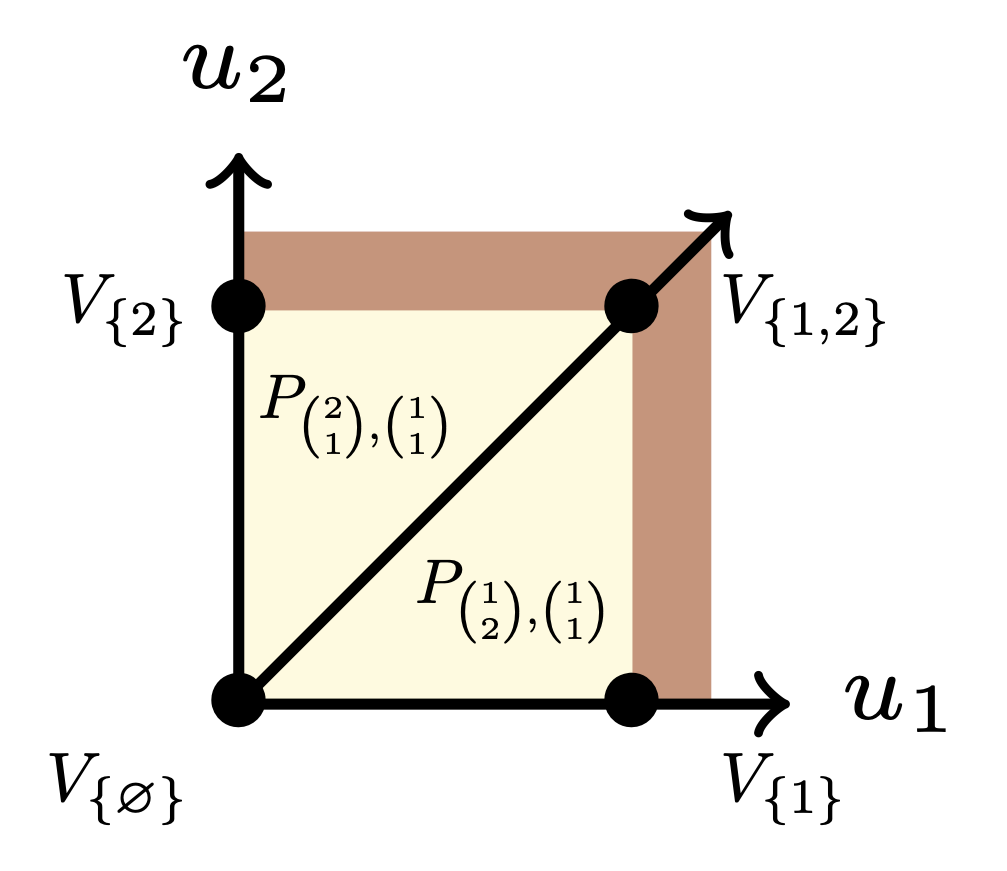}
  \end{minipage}
  \caption{(Left) A plot of all $P_{\pi ,y}$ regions  and (Right) $V_{\pi}$ labels in the positive orthant where $k=2$.}
\label{fig:pi-regions}
\end{figure}

\ignore{\begin{lemmac}\label{lem:p-pi-y}\end{lemmac}}
\begin{restatable}{lemmac}{ppiy}\label{lem:p-pi-y}
	The sets $P_{\pi,y}$ satisfy the following.
	\begin{enumerate}
		\item[(i)]  $\cup_{y\in\Y,\pi\in\Sc_k} P_{\pi,y} = [-1,1]^k$.
		\item[(ii)] For all $\vec{f} \in \vec\F_k$, $y,y',y'' \in \Y$, and $\pi\in \Sc_k$, the function $L^{\vec f}(u,y')=F_{y''}((\ones -u\odot y')_{+})$ is affine on $P_{\pi,y}$ with respect to $u\in\reals^{k}$.
	\end{enumerate}
\end{restatable}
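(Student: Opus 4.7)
For part (i), given any $u \in [-1,1]^k$, I will set $y^\star = \signstar(u) \in \Y$, so that $|u| = u \odot y^\star \in [0,1]^k$ by the identity $|u| = u \odot \signstar(u)$ recorded in the notation. Let $\pi \in \Sc_k$ be any permutation that sorts $|u|$ in decreasing order; then $|u| \in P_\pi \cap [0,1]^k = P_{\pi,\ones}$, and hence $u = |u| \odot y^\star \in P_{\pi,\ones} \odot y^\star = P_{\pi,y^\star}$. The reverse inclusion $P_{\pi,y} \subset [-1,1]^k$ is immediate from $P_{\pi,y} = \conv (V_{\pi,y})$ and $V_{\pi,y} \subset \{-1,0,1\}^k$.

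For part (ii), fix $\pi \in \Sc_k$ and $y, y', y'' \in \Y$, and let $s := y \odot y' \in \Y$ with $S_\pm := \{i \in [k] : s_i = \pm 1\}$. Any $u \in P_{\pi,y}$ can be written $u = x \odot y$ with $x := u \odot y \in P_{\pi,\ones} = P_\pi \cap [0,1]^k$, so $\ones - u \odot y' = \ones - x \odot s \in [0,2]^k$ and therefore $(\ones - u \odot y')_+ = \ones - u \odot y'$, which is an affine function of $u$. My plan is to exhibit a single permutation $\pi'$, depending only on $\pi$ and $s$, such that $\ones - x \odot s \in P_{\pi'}$ for every $x \in P_{\pi,\ones}$. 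Once this is in place, eq.~\eqref{eq:lovasz-ext-pi-u} expresses $F_{y''}$ as a fixed linear function of the coordinates of its argument on $P_{\pi'}$, and composing with the affine map $u \mapsto \ones - u \odot y'$ yields an affine function of $u$ on $P_{\pi,y}$.

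To construct $\pi'$, I observe that for $i \in S_-$ the $i$-th coordinate of $\ones - x \odot s$ equals $1 + x_i \in [1,2]$, while for $i \in S_+$ it equals $1 - x_i \in [0,1]$. Thus placing all of $S_-$ before all of $S_+$ respects the desired decreasing order across the two groups uniformly in $x$. Within $S_-$, decreasing order of $1+x_i$ coincides with decreasing order of $x_i$, so I list the elements of $S_-$ in the order they appear along $\pi$. Within $S_+$, decreasing order of $1-x_i$ coincides with increasing order of $x_i$, so I list the elements of $S_+$ in the \emph{reverse} of their order of appearance along $\pi$. Since $x \in P_\pi$ forces $x_{\pi_1} \geq \cdots \geq x_{\pi_k}$, both subsequences have the required monotonicity for every such $x$, completing the construction of $\pi'$.

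The main obstacle is the case analysis in part (ii): one must carefully interleave the effect of the sign-flip pattern $s = y \odot y'$ with the ordering imposed by $\pi$, and verify that the resulting $\pi'$ does not depend on the particular $u \in P_{\pi,y}$. Part (i) is essentially bookkeeping, and once $\pi'$ is in hand, affineness follows directly from the linear representation of $F_{y''}$ on $P_{\pi'}$ in eq.~\eqref{eq:lovasz-ext-pi-u}; in particular, the labels $y$, $y'$, $y''$ enter only through the choice of affine preimage and the fixed linear form on $P_{\pi'}$, which is why (ii) may allow them to be arbitrary and independent.
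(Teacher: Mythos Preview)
Your proposal is correct and follows essentially the same approach as the paper's own proof. In particular, the construction of $\pi'$ by first listing $S_- = \{y\odot y' < 0\}$ in $\pi$-order and then $S_+ = \{y\odot y' > 0\}$ in reverse $\pi$-order is exactly what the paper does, and your justification via the coordinate bounds $1+x_i\in[1,2]$ on $S_-$ versus $1-x_i\in[0,1]$ on $S_+$ is arguably cleaner than the paper's wording.
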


\subsection{Embedding the structured abstain problem}\label{subsec:embedding-SAP}

Leveraging the above affine decomposition, we now show that the finite set $\V = \{-1,0,1\}^k$ is representative for $L^{\vec f}$.
By Theorem~\ref{thm:poly-embeds-discrete}, it will then follow that $L^{\vec f}$ embeds $\ellabsvecf =: L^{\vec f}|_\V$.
We call $\ellabsvecf$ the \emph{structured abstain problem} because the predictions $v\in\V$ allow one to ``abstain'' on an index $i$ by predicting $v_i = 0$.
We define the abstain set via $\abs (v) =\{i\in[k]\mid v_i=0  \}$.

\ignore{\begin{lemmac}\label{lem:UcapR-union-faces}\end{lemmac}}
\begin{restatable}{lemmac}{UcapRunionfaces}\label{lem:UcapR-union-faces}
	Given a polyhedral loss function $L : \reals^k \times \Y \to \reals_+$, let $\C$ be a collection of polyhedral subsets of $\reals^k$ such that for all $y\in\Y$, $L(\cdot,y)$ is affine on each $C\in\C$, and denote $\faces(C)$ as the set of faces of $C$.
	Let $\hat{C} = \cup \C$ be the union of these polyhedral subsets.
	Then for all $p\in\simplex$, $\prop{L}(p)\cap \hat{C} = \cup \mathfrak{F}$ for some $\mathfrak{F} \subseteq \cup_{C \in \C} \faces(C)$.
\end{restatable}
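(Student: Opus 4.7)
The plan is to reduce the claim to a per-cell analysis: for each $C \in \C$, I will show that $\prop{L}(p) \cap C$ is either empty, all of $C$, or a proper face of $C$, after which the lemma follows by taking the union over $\C$ and collecting the nonempty pieces into $\mathfrak F$.

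First, I would observe that $L(\cdot;p) = \sum_{y \in \Y} p_y \, L(\cdot, y)$ is a convex combination of functions each of which is affine on $C$ by hypothesis, so $L(\cdot;p)$ is itself affine on each $C \in \C$. Let $m^\star := \inf_{u \in \reals^k} L(u;p)$. If the infimum is not attained, $\prop{L}(p) = \emptyset$ and the claim holds trivially with $\mathfrak F = \emptyset$, so I may assume $\prop{L}(p) = \{u \in \reals^k : L(u;p) = m^\star\}$.

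Next, fix $C \in \C$ and consider $S_C := \prop{L}(p) \cap C$. Since $L(\cdot;p)$ is affine on $C$, write its restriction as $u \mapsto \langle a_C, u \rangle + b_C$. If the minimum of this affine function over $C$ strictly exceeds $m^\star$, then $S_C = \emptyset$. Otherwise its minimum on $C$ equals $m^\star$ (since $m^\star$ is the global infimum and $C \subseteq \reals^k$), and
\[ S_C = C \cap \{u \in \reals^k : \langle a_C, u \rangle + b_C = m^\star\}, \]
which is the intersection of $C$ with a supporting hyperplane, i.e., a face of $C$. The degenerate case $a_C = 0$ reduces to $S_C \in \{C, \emptyset\}$, both of which count as faces under the standard convention.

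Finally, since $\hat C = \cup \C$, I have $\prop{L}(p) \cap \hat C = \bigcup_{C \in \C} S_C$, so setting $\mathfrak F := \{S_C : C \in \C,\; S_C \neq \emptyset\} \subseteq \bigcup_{C \in \C} \faces(C)$ yields the desired decomposition. The only real subtlety is handling the boundary cases ($S_C = \emptyset$ when $C$ contains no global minimizer, and $S_C = C$ when $L(\cdot;p)$ is constantly $m^\star$ on $C$); once the convention that $\emptyset$ and $C$ are both faces of $C$ is fixed, the argument reduces to the standard observation that the argmin of an affine function on a polyhedron is a face.
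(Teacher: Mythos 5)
Your proof is correct, and it takes a genuinely different route from the paper's. The paper argues point-by-point: for each optimal $u \in \prop{L}(p)\cap \hat C$ and each cell $C_j$ containing it, it locates the unique face $F_j$ with $u\in\relint(F_j)$ and invokes an auxiliary subgradient lemma (Lemma~\ref{lem:subgradients-on-relint-affine}: on a region where a polyhedral function is affine, subdifferentials at relative-interior points are contained in subdifferentials everywhere on the region) to propagate $0\in\partial L(u;p)$ from $u$ to all of $F_j$, concluding $F_j\subseteq\prop{L}(p)$ and then collecting these faces. You instead work cell-by-cell and observe directly that $\prop{L}(p)\cap C$ is the level set $\{u\in C : \langle a_C,u\rangle + b_C = m^\star\}$ of an affine function that is everywhere $\geq m^\star$ on $C$, i.e.\ the intersection of $C$ with a valid hyperplane, hence a single face of $C$ (possibly $\emptyset$ or $C$ itself). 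Your argument is more elementary --- it needs no subgradient calculus and makes the paper's auxiliary lemma unnecessary for this purpose --- and it yields the slightly sharper conclusion that $\prop{L}(p)\cap C$ is exactly \emph{one} face of $C$ per cell, rather than merely a union of faces. Both arguments rely on the same conventions (that $C$ counts as a face of itself, and that expected loss inherits affineness on each cell from the per-label losses), and both handle the boundary cases identically by discarding empty intersections. The one cosmetic wrinkle in your write-up is the phrase ``its minimum on $C$ equals $m^\star$'': on an unbounded cell the infimum could equal $m^\star$ without being attained, in which case $S_C=\emptyset$; since you discard empty $S_C$ from $\mathfrak F$ anyway, this does not affect the conclusion.
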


In light of Lemma~\ref{lem:p-pi-y}, we may apply Lemma~\ref{lem:UcapR-union-faces} to the Lov\'asz hinge.
Restricting to $[-1,1]^k$, and using the fact that the vertices of the $P_{\pi,y}$ regions of $\V := \{-1,0,1\}^k$, we can show that $\V$ is representative.

\begin{proposition}\label{prop:lovasz-V-rep}
For any collection of polymatroid functions $\vec f$,
$\V=\{-1,0,1 \}^k$ is a finite representative set for $\Lvecf$.
\end{proposition}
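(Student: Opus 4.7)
The plan is to chain together the three preceding lemmas: use Lemma~\ref{lem:lovasz-hypercube-dominates} to localize minimizers to $[-1,1]^k$, use Lemma~\ref{lem:p-pi-y} to view $[-1,1]^k$ as a union of simplices on which $L^{\vec f}(\cdot, y')$ is affine for every $y'\in\Y$, and apply Lemma~\ref{lem:UcapR-union-faces} to conclude that $\prop{L^{\vec f}}(p)$ contains a face of some simplex, hence at least one of its vertices, all of which lie in $\V$.

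Concretely, fix $p\in\simplex$. Since $L^{\vec f}$ is polyhedral and nonnegative, $L^{\vec f}(\cdot;p)=\sum_{y} p_y L^{\vec f}(\cdot, y)$ is polyhedral and attains its minimum, so $\prop{L^{\vec f}}(p)$ is nonempty. Applying Lemma~\ref{lem:lovasz-hypercube-dominates} pointwise and taking expectations, $L^{\vec f}(\boxed{u}; p)\leq L^{\vec f}(u;p)$ for every $u\in\reals^k$. Hence any minimizer can be clipped into $[-1,1]^k$, giving $\prop{L^{\vec f}}(p)\cap[-1,1]^k\neq\emptyset$.

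Next, take $\C=\{P_{\pi,y}\mid \pi\in\Sc_k,\,y\in\Y\}$. By Lemma~\ref{lem:p-pi-y}(i), $\hat C:=\cup\C=[-1,1]^k$, and by Lemma~\ref{lem:p-pi-y}(ii), each $L^{\vec f}(\cdot,y')$ is affine on every $C\in\C$, so $L^{\vec f}(\cdot; p)$ is affine on each $C\in\C$ as a convex combination of affine functions. Lemma~\ref{lem:UcapR-union-faces} then yields
\[
\prop{L^{\vec f}}(p) \cap [-1,1]^k \;=\; \bigcup \mathfrak{F}
\]
for some collection $\mathfrak{F}$ of faces of the simplices $P_{\pi,y}$. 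Since the left-hand side is nonempty, $\mathfrak{F}$ contains at least one nonempty face $F$ of some $P_{\pi,y}$.

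Finally, $F$ is itself a polytope and hence contains at least one vertex; because $F$ is a face of the simplex $P_{\pi,y}=\conv(V_{\pi,y})$, its vertices lie in $V_{\pi,y}=V_\pi\odot y\subseteq \{-1,0,1\}^k=\V$. Any such vertex $v$ satisfies $v\in F\subseteq \prop{L^{\vec f}}(p)$, so $\V\cap\prop{L^{\vec f}}(p)\neq\emptyset$. As $p$ was arbitrary and $|\V|=3^k$, the set $\V$ is a finite representative set for $L^{\vec f}$. The only subtlety worth checking carefully is the vertex extraction in the last step: it is essential that $P_{\pi,y}$ is a simplex (so its faces are themselves simplices spanned by subsets of $V_{\pi,y}$), which is why the affine independence of $V_{\pi,y}$ noted before Lemma~\ref{lem:p-pi-y} is doing real work here.
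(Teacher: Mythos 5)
Your proof is correct and follows essentially the same route as the paper's: decompose $[-1,1]^k$ into the simplices $P_{\pi,y}$ via Lemma~\ref{lem:p-pi-y}, apply Lemma~\ref{lem:UcapR-union-faces} to express $\prop{\Lvecf}(p)\cap[-1,1]^k$ as a union of faces, and extract a vertex in $\V$. If anything, your version is slightly more careful than the paper's, since you explicitly justify via Lemma~\ref{lem:lovasz-hypercube-dominates} that $\prop{\Lvecf}(p)\cap[-1,1]^k$ is nonempty (without which the union of faces could be vacuous) and explicitly use the simplex structure to guarantee the face's vertices lie in $V_{\pi,y}$.
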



\begin{proof}
    Fix $\vec{f}=\{f_y\}_{y\in\Y}\in \vec{\F_k}$ and 
    define $L^{\vec{f}}$ as the Lov\'asz hinge on $\vec f$.
    Let $\C=\{P_{\pi,y} \mid\forall \; \pi \in \Sc_k,y\in\Y\}$ and $\hat{C}=\cup \C=\cup_{\pi\in\Sc_k ,y\in\Y}P_{\pi ,y}=[-1,1]^k$ by Lemma~\ref{lem:p-pi-y}(i). 
    Since $f_{y'}$ is a polymatroid for all $y' \in \Y$, the function $ \Lvecf (\cdot,y')$ is affine on $P_{\pi, y}$ for all $y, y' \in \Y$ and $\pi\in \Sc_k$ by Lemma~\ref{lem:p-pi-y}(ii).
    Furthermore, since $\Lvecf (u;p) = \sum_{y\in\Y}p_y \Lvecf (u,y)$ is the sum of positively weighted polyhedral losses, the expected loss $\Lvecf (u;p)$ is also affine on $P_{\pi , y}$ for all $y\in \Y$ and $\pi \in \Sc_k$.
    We now apply Lemma~\ref{lem:UcapR-union-faces} to Lov\'asz hinge with $\C$ and $\hat{C}$.
    Applying Lemma~\ref{lem:UcapR-union-faces} yields $\prop{\Lvecf}(p)\cap \hat{C} = \cup \mathfrak{F}$ for all $p \in \simplex$, where $\mathfrak{F} \subseteq \cup_{\pi,y} \faces(P_{\pi,y})$.
    Recall from the construction of $P_{\pi,y}$ that for every $(\pi, y)$ and $\mathfrak{F} \in \faces(P_{\pi,y})$, there is some element $v \in \V$ such that $v$ is a facet of $F$ for some $F \in \mathfrak{F}$.
    Therefore, for all $p\in\simplex$, $\prop{\Lvecf}(p)\cap \V\neq \emptyset$, which in turn implies that $\V$ is representative for $\Lvecf$.
\end{proof}

From Proposition~\ref{prop:lovasz-V-rep} and Theorem~\ref{thm:poly-embeds-discrete}, the Lov\'asz hinge $\Lvecf$ embeds $\Lvecf|_\V$.
Our main result of this section simply rewrites $\Lvecf|_\V$ to reveal a more intuitive target problem.

\begin{theorem}
	\label{thm:lovasz-embeds}
	The Lov\'asz hinge $L^{\vec f}$ embeds $\ellabsvecf:\V\times\Y\to\reals_+$ given by
	\begin{equation}
    \label{eq:lovasz-embeds}
    \ellabsvecf (v,y) = f_y(  \mis (v,y)\setminus \abs (v) ) + f_y(  \mis (v,y) )~.
	\end{equation}
\end{theorem}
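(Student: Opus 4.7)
The plan is to leverage Proposition~\ref{prop:lovasz-V-rep} together with Theorem~\ref{thm:poly-embeds-discrete}: since $\V=\{-1,0,1\}^k$ is a finite representative set for $\Lvecf$, that pair of results immediately gives that $\Lvecf$ embeds $\Lvecf|_{\V}$. Defining $\ellabsvecf := \Lvecf|_{\V}$, it then remains only to compute $\Lvecf(v,y)$ in closed form for $v\in\V$ and check that the answer matches the expression in eq.~\eqref{eq:lovasz-embeds}. So the whole theorem reduces to one evaluation of the Lov\'asz extension on a vector with entries in $\{0,1,2\}$.

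For the computation, fix $v\in\V$ and $y\in\Y$. Since $\V\subseteq[-1,1]^k$, I would invoke eq.~\eqref{eq:lovasz-hinge-simplified} to write $\Lvecf(v,y)=F_y(\ones-v\odot y)$, and let $w=\ones-v\odot y$. Because $v_iy_i\in\{-1,0,1\}$, the entries of $w$ lie in $\{0,1,2\}$, and a direct case check identifies the level sets:
\begin{align*}
A &:= \{i:w_i=2\} = \mis(v,y)\setminus\abs(v), \\
B &:= \{i:w_i=1\} = \abs(v), \\
C &:= \{i:w_i=0\} = [k]\setminus \mis(v,y),
\end{align*}
and in particular $A\cup B=\mis(v,y)$.

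Next, I pick any permutation $\pi\in\Sc_k$ that lists the elements of $A$ first, then those of $B$, then those of $C$, so that $w\in P_\pi$. Plugging into the Lov\'asz extension formula eq.~\eqref{eq:lovasz-ext-pi-u} and writing $S_i=\{\pi_1,\ldots,\pi_i\}$, the contributions from indices in $C$ vanish because $w_{\pi_i}=0$, and the remaining sum splits as
\begin{equation*}
F_y(w) \;=\; 2\sum_{i=1}^{|A|}\bigl(f_y(S_i)-f_y(S_{i-1})\bigr) \;+\; \sum_{i=|A|+1}^{|A|+|B|}\bigl(f_y(S_i)-f_y(S_{i-1})\bigr).
\end{equation*}
Both sums telescope, and using the normalization $f_y(\emptyset)=0$ together with $S_{|A|}=A$ and $S_{|A|+|B|}=A\cup B$, this simplifies to $2f_y(A)+f_y(A\cup B)-f_y(A)=f_y(A)+f_y(A\cup B)$. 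Substituting the identifications of $A$ and $A\cup B$ above yields exactly eq.~\eqref{eq:lovasz-embeds}.

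There is no real obstacle here: the only thing to be careful about is the bookkeeping between the level sets of $w$ and the sets $\mis(v,y),\abs(v)$, and specifically that $\abs(v)\subseteq\mis(v,y)$ so that $A\cup B=\mis(v,y)$. The proof is essentially a one-line telescoping argument after the embedding result does the heavy lifting.
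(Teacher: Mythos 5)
Your proposal is correct and follows essentially the same route as the paper: the embedding itself comes from Proposition~\ref{prop:lovasz-V-rep} plus Theorem~\ref{thm:poly-embeds-discrete}, and the rest is the closed-form evaluation of $F_y(\ones - v\odot y)$ on a $\{0,1,2\}$-valued vector. The only difference is that you carry out that evaluation by hand via eq.~\eqref{eq:lovasz-ext-pi-u} and telescoping, whereas the paper cites \citet[Prop 3.1(h)]{bach2013learning} for the same identity; your level-set identifications $A=\mis(v,y)\setminus\abs(v)$ and $A\cup B=\mis(v,y)$ match the paper's exactly.
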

\begin{proof}
  From Proposition~\ref{prop:lovasz-V-rep} and Theorem~\ref{thm:poly-embeds-discrete}, $L^{\vec f}$ embeds $L^{\vec f}|_\V$.
  It therefore remains only to establish the set-theoretic form of $L^{\vec f}|_\V$ as the loss $\ellabsvecf$ in eq.~\eqref{eq:lovasz-embeds}.

  Let $v\in\V,y\in\Y$ be given.
  We may write
  \begin{align*}
    \ones - v\odot y = 0\cdot\ones_{\{ v\odot y > 0 \}} + 1\cdot\ones_{\{ v\odot y = 0 \}} + 2\cdot\ones_{\{ v\odot y < 0 \}}~.
  \end{align*}
	Now combining eq.~\eqref{eq:lovasz-hinge-simplified} and \citet[Prop 3.1(h)]{bach2013learning}, we may therefore write
  \begin{align*}
    L^{\vec f}(v,y)
    &= F_y(\ones - v\odot y)
    \\
    &= (2-1) f_y(\{ v\odot y < 0 \}) + (1-0) f_y(\{ v\odot y < 0 \}\cup\{ v\odot y = 0 \}) + 0 f_y([k])
    \\
    &= f_y(\{ v\odot y < 0 \}) + f_y(\{ v\odot y \leq 0 \})  \\
    &= f_y( \mis (v,y)\setminus \abs (v)  ) + f_y(   \mis (v,y))~.
  \end{align*}
\end{proof}

We can interpret $\ellabsvecf$ as a structured abstain problem, where an algorithm is allowed to abstain by predicting zero instead of $\pm 1$.
To make this interpretation more clear, for any $v \in \V$, let $r := \signstar(v)$, which is forced to choose a label $\pm 1$ for each zero prediction.
The corresponding set of mispredictions for fixed $y \in \Y$ would be $\mis (r,y)= \{ r \odot y < 0 \}$.
We can rewrite eq.~\eqref{eq:lovasz-embeds} in terms of these sets as $\ellabsvecf (v,y) = f_y(\mis (r,y) \setminus \abs (v)) + f_y(\mis (r,y) \cup \abs (v))$.
Contrasting with $\ellabsvecf(r,y) = 2 f_y(\mis (r,y)) = f_y(\mis (r,y)) + f_y(\mis (r,y))$, the abstain option allows one to reduce loss in the first term at the expense of a sure loss in the second term.
Intuitively, when there is uncertainty about the labels of a set of indices $\abs (v) \subseteq [k]$, by submodularity the algorithm would prefer to abstain on $\abs (v)$ than take a chance on predicting.

For $r\in\Y$, we have $\ellabsvecf(r,y) = 2\ellvecf(r,y)$, meaning $\ellabsvecf$ matches (twice) $\ellvecf$ on $\Y$.
If the ``abstain'' predictions $v \in \V \setminus \Y$ are dominated, then we would indeed have consistency.
However, we show in the sequel that whenever each $f_y \in \vec f$ is strictly submodular, there are situations where abstaining is uniquely optimal (relative to $\V$), implying inconsistency with respect to $\ellvecf$.


\section{Inconsistency for structured binary classification}\label{sec:inconsistency}

We have shown in \S~\ref{sec:our-embedding} that the Lov\'asz hinge embeds the structured abstain problem $\ellabsvecf$.
This result is not sufficient to conclude that the Lov\'asz hinge is inconsistent for $\ellvecf$, i.e., structured binary classification, the initially proposed task for the Lov\'asz hinge. 
In this section, we turn to proving inconsistency.

We first focus on the symmetric case, showing that the pair $(L^f, \psi)$ is inconsistent with respect to $\ell^f$ when $f$ is not modular, for any link $\psi$, and is consistent in the trivial case where every $f$ is modular and $\psi = \signstar$.
For the non-symmetric case, we prove the inconsistency of the Lov\'asz hinge and $\psi=\signstar$ for a particular class of non-symmetric polymatroids which includes Jaccard loss, a common polymatroid used with the Lov\'asz hinge for image segmentation.

Our proof technique for the symmetric case in \S~\ref{subsec:symmetric-Fk} leverages the fact that the average value of $f$ is strictly greater than $f([k]) / 2$ unless $f$ is modular.
This technique does not carry over to the non-symmetric case, however, necessitating a different approach in \S~\ref{subsec:asymmetric-Fk}.

\subsection{The symmetric case $\mathcal{F}_k$}\label{subsec:symmetric-Fk}
For $f\in\mathcal{F}_k$, we leverage the embedded loss $\ellabs$ to show that $L^{f}$ is inconsistent for its intended target $\ell^{f}$, except when $f$ is modular.
While the modular case is already well understood (\S~\ref{sec:wh-loss}), this result says that $L^{f}$ is inconsistent for all other cases.

As $L^{f}$ embeds $\ellabs$ by Theorem~\ref{thm:lovasz-embeds}, we focus on predictions $v \in \V \setminus \Y$ to show inconsistency, i.e., those that abstain on at least one individual prediction.
Intuitively, if such a prediction is ever optimal, then $(L^{ f}, \signstar)$ has a ``blind spot'' with respect to the indices in $\abs (v)$.
We can leverage this blind spot to ``fool'' $L^{ f}$, by making it link to an incorrect prediction.
In particular, we will focus on the uniform distribution $\bar p$ on $\Y$, and perturb it slightly to $p'$ to find an $L^{f}$-optimal prediction $v\in\V$ which maps to a $\ell$-suboptimal prediction $\signstar(v)$.
In fact, we will show that one can always find such a counterexample violating consistency, unless $f$ is modular.

Given our focus on the uniform distribution, denoted by $\bar p$, the following definition will be useful: for any set function $f$, let $\bar f := 2^{-k} \sum_{S\subseteq [k]} f(S) \in\reals$.
The next two lemmas relate $\bar f$ and $f([k])$ to expected loss and modularity.
The proofs follow from adding the submodularity inequality over all possible subsets, and observing that at least one of them is strict when $f$ is non-modular.

\ignore{\begin{lemmac}\label{lem:2-bar-f}\end{lemmac}}
\begin{restatable}{lemmac}{twobarf}
	\label{lem:2-bar-f}
	For all $f\in\mathcal{F}_k$ and $v \in \V$, $\ellabs(v;\bar p) \geq f([k])$.
 	For all $r \in \Y$, $\ellabs(r;\bar p) = 2\bar f$.
\end{restatable}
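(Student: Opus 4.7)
The plan is to use the explicit set-theoretic form of $\ellabs$ from Theorem~\ref{thm:lovasz-embeds}, combined with the symmetry of the uniform distribution $\bar p$ over $\Y = \{-1,1\}^k$. Both claims reduce to averaging $f$ over a structured family of subsets.

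For the second claim, when $r \in \Y$ we have $\abs(r) = \emptyset$, so eq.~\eqref{eq:lovasz-embeds} collapses to $\ellabs(r,y) = 2 f(\mis(r,y)) = 2 f(\{r \odot y < 0\})$. Since $Y \mapsto r \odot Y$ is a bijection on $\{-1,1\}^k$, the set $\{r \odot Y < 0\}$ is uniform over $2^{[k]}$ under $\bar p$, and a direct expectation gives $\ellabs(r;\bar p) = 2 \cdot 2^{-k} \sum_{S \subseteq [k]} f(S) = 2\bar f$.

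For the first claim, let $T := \abs(v)$ and $\bar T := [k] \setminus T$. For $i \in T$ we have $(v\odot Y)_i = 0$, while for $i \in \bar T$ the coordinate $(v\odot Y)_i$ is uniform on $\{-1,1\}$ independently across $i$. Hence $A := \{v \odot Y < 0\}$ is uniform over subsets of $\bar T$, and $\{v\odot Y \leq 0\} = A \cup T$. Substituting into eq.~\eqref{eq:lovasz-embeds} yields
\[
\ellabs(v;\bar p) = 2^{-|\bar T|} \sum_{S \subseteq \bar T} \bigl[f(S) + f(S \cup T)\bigr].
\]

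The key idea is to pair each $S \subseteq \bar T$ with its complement $S' := \bar T \setminus S$ and bound the four terms jointly. Two applications of submodularity give $f(S) + f(S') \geq f(\bar T) + f(\emptyset) = f(\bar T)$ and $f(S \cup T) + f(S' \cup T) \geq f([k]) + f(T)$ (using $(S\cup T)\cap(S'\cup T) = T$); adding these and then invoking submodularity once more on the disjoint pair $T, \bar T$ to get $f(T) + f(\bar T) \geq f([k])$ shows the four terms sum to at least $2 f([k])$. Averaging over the $2^{|\bar T|-1}$ disjoint pairs yields $\ellabs(v;\bar p) \geq f([k])$. The only obstacle is the degenerate case $\bar T = \emptyset$ (i.e., $v = \vec 0$), where the pairing structure collapses; this case is handled directly by noting $\ellabs(\vec 0; \bar p) = f(\emptyset) + f([k]) = f([k])$, which still meets the bound.
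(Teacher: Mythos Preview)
Your proof is correct and follows essentially the same approach as the paper: both reduce the expected loss to $2^{-|\bar T|}\sum_{S\subseteq\bar T}\bigl[f(S)+f(S\cup T)\bigr]$, pair each $S$ with its complement in $\bar T$, and apply submodularity three times (once to $f(S)+f(S')$, once to $f(S\cup T)+f(S'\cup T)$, and once to $f(T)+f(\bar T)$). The only cosmetic differences are that the paper uses a double-counting rewrite $\tfrac{1}{2}\sum_{S}[\cdots]$ that absorbs the $v=\vec 0$ case uniformly, whereas you handle it separately, and the paper reads off the second claim as the special case $T=\emptyset$ of the intermediate formula rather than arguing it directly.
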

\vspace*{-15pt}
\ignore{\begin{lemmac}\label{lem:bar-f}\end{lemmac}}
\begin{restatable}{lemmac}{barf}
	\label{lem:bar-f}
	Let $f \in \mathcal{F}_{k}$.
	Then $\bar f \geq f([k])/2$, and $\bar f = f([k])/2$ if and only if $f$ is modular.
\end{restatable}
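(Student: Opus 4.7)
My plan is to use the submodular inequality in its ``complementary'' form. Since $f$ is submodular and normalized, for any $S \subseteq [k]$ we have
\begin{equation*}
f(S) + f([k]\setminus S) \;\geq\; f(S \cup ([k]\setminus S)) + f(S \cap ([k]\setminus S)) \;=\; f([k]) + f(\emptyset) \;=\; f([k])~.
\end{equation*}
Summing this over all $S \subseteq [k]$, and using that $\sum_{S\subseteq[k]} f([k]\setminus S) = \sum_{S\subseteq[k]} f(S)$ by the substitution $S \mapsto [k]\setminus S$, the left-hand side equals $2 \sum_S f(S) = 2^{k+1} \bar f$, while the right-hand side equals $2^k f([k])$. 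Dividing by $2^{k+1}$ yields $\bar f \geq f([k])/2$, which is the first claim. The ``if'' direction of the equivalence is immediate: when $f$ is modular, $f(S) = \sum_{i\in S} w_i$ for some $w\in\reals^k_+$, and the average of $\sum_{i\in S} w_i$ over all $S$ is $\frac{1}{2}\sum_i w_i = f([k])/2$.

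For the ``only if'' direction, suppose $\bar f = f([k])/2$. Then the inequality in the previous display must be tight for \emph{every} $S \subseteq [k]$, since any strict inequality in a single term would push the overall sum strictly above $2^k f([k])$. So we have
\begin{equation*}
f(S) + f([k]\setminus S) \;=\; f([k]) \quad\text{for all } S \subseteq [k]~.
\end{equation*}
Now fix arbitrary $S, T \subseteq [k]$ and write the submodular inequality both for the pair $(S,T)$ and for the pair $([k]\setminus S,[k]\setminus T)$; by De Morgan the latter reads
\begin{equation*}
f([k]\setminus S) + f([k]\setminus T) \;\geq\; f([k]\setminus(S\cap T)) + f([k]\setminus(S\cup T))~.
\end{equation*}
Adding these two inequalities and applying the tight complementation identity to each of the four complemented terms, both sides collapse to $2 f([k])$, so the sum is an equality. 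Since each of the two inequalities is individually of the form $\geq$, each must in fact be an equality. In particular, $f(S) + f(T) = f(S \cup T) + f(S \cap T)$ for all $S, T \subseteq [k]$, so $f$ is modular.

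The proof is essentially a two-line averaging/pairing argument plus a clever ``add the submodular inequality to its complement'' trick for the equality case; I do not foresee a serious obstacle, though the slightly nontrivial step is recognizing that the equality case forces every $(S, [k]\setminus S)$ instance of submodularity to be tight, and then bootstrapping from pair-complement tightness to full modularity via the De Morgan pairing above.
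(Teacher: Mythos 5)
Your proof is correct, and while the inequality $\bar f \geq f([k])/2$ and the ``if'' direction match the paper's route (the paper derives the inequality by the same complementary pairing, packaged inside its Lemma~\ref{lem:2-bar-f}), your argument for the ``only if'' direction is genuinely different. The paper proves the contrapositive: it localizes non-modularity to a singleton marginal gap $f(S)-f(S\setminus\{i\}) < f(\{i\})$, propagates this via diminishing returns to $f(\{i\}) + f([k]\setminus\{i\}) > f([k])$, and concludes that one term in the averaged sum is strict, hence $\bar f > f([k])/2$. You instead argue directly from equality: tightness of every complementary pair $f(S)+f([k]\setminus S)=f([k])$, combined with adding the submodular inequality for $(S,T)$ to the one for $([k]\setminus S, [k]\setminus T)$ and observing that both sides collapse to $2f([k])$, forces every instance of submodularity to be an equality. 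Both are sound; the paper's version has the advantage of exhibiting a concrete witness of strictness (a singleton $\{i\}$), which it reuses in the inconsistency argument of Theorem~\ref{thm:inconsistent-sym}, whereas your De Morgan pairing is arguably cleaner and more self-contained as a characterization of the equality case, requiring no appeal to the diminishing-returns reformulation of submodularity.
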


Typical proofs of inconsistency identify a particular pair of distributions $p,p'\in\simplex$ for which the same surrogate prediction $u$ is optimal, yet two distinct target predictions are uniquely optimal for each, $r$ for $p$ and $r'$ for $p'$.
As $u$ cannot link to both $r$ and $r'$, one concludes that the surrogate cannot be consistent.
We follow this same general approach, but face one additional hurdle: we wish to show inconsistency of $L^{f}$ for \emph{all} non-modular $f$ simultaneously.
In particular, the distributions $p,p'$ may need to depend on the choice $f$, so at first glance it may seem that such an argument would be quite complex.
We achieve a relatively straightforward analysis by defining $p,p'$ based on only a single parameter of $f$, the ratio of $f([k])$ to $\bar f$.
The optimal surrogate prediction itself may be entirely governed by $f$, but will lead to inconsistency regardless.

In Lemma~\ref{lem:lovasz-symmetry}, we observe a strong symmetry in $L^f$ that $L^{f}(u\odot y',y\odot y') = L^{ f}(u,y)$.
In what follows, we observe that $\prop{L^{f}}$ has the same symmetry.
For $p\in\Delta(\Y)$ and $r\in\Y$, define $p\odot r \in \Delta(\Y)$ by $(p\odot r)_y = p_{y\odot r}$.
\ignore{\begin{lemmac}\label{lem:lovasz-property-symmetry}\end{lemmac}}
\begin{restatable}{lemmac}{lovaszpropertysymmetry}\label{lem:lovasz-property-symmetry}
  For all $f\in\mathcal{F}_k$, $p\in\Delta(\Y)$, and $r\in\Y$, $\prop{L^{f}}(p\odot r) = \prop{L^{ f}}(p)\odot r$.
\end{restatable}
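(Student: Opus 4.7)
The plan is to unfold the definition of $\prop{L^f}$ via expected loss, apply a change of variables in the sum over $\Y$, invoke the pointwise symmetry of Lemma~\ref{lem:lovasz-symmetry} to transfer the $r$-action from the label argument to the surrogate argument, and finally use that $u \mapsto u \odot r$ is an involution on $\reals^k$.

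First, I would write
\[
\prop{L^f}(p \odot r) \;=\; \argmin_{u \in \reals^k} \sum_{y \in \Y} (p \odot r)_y \, L^f(u, y) \;=\; \argmin_{u \in \reals^k} \sum_{y \in \Y} p_{y \odot r}\, L^f(u, y),
\]
and then substitute $y' = y \odot r$ in the sum. Since $r \in \Y = \{-1,1\}^k$ satisfies $r \odot r = \ones$, the map $y \mapsto y \odot r$ is a bijection on $\Y$ with inverse itself, so the sum becomes $\sum_{y' \in \Y} p_{y'}\, L^f(u, y' \odot r)$.

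Next I would invoke Lemma~\ref{lem:lovasz-symmetry}, which asserts $L^f(u \odot y'', y \odot y'') = L^f(u, y)$ for all $u, y, y''$. Specializing $y'' = r$ and $y \mapsto y'$ yields $L^f(u \odot r, y' \odot r) = L^f(u, y')$; reparametrizing $u \mapsto u \odot r$ (another involution) gives the form I need, $L^f(u, y' \odot r) = L^f(u \odot r, y')$. Substituting back into the expectation,
\[
\sum_{y' \in \Y} p_{y'}\, L^f(u, y' \odot r) \;=\; \sum_{y' \in \Y} p_{y'}\, L^f(u \odot r, y') \;=\; L^f(u \odot r; p).
\]

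The final step is to conclude $\argmin_{u} L^f(u \odot r; p) = \prop{L^f}(p) \odot r$. Because $u \mapsto u \odot r$ is a bijection on $\reals^k$, the argmin over $u$ of $L^f(u \odot r; p)$ is exactly $\{v \odot r : v \in \argmin_{v} L^f(v; p)\}$, which is $\prop{L^f}(p) \odot r$ by our extension of $\odot$ to sets. There is no real obstacle here: once Lemma~\ref{lem:lovasz-symmetry} is in hand, the argument is a routine double change of variables, one on the label side and one on the surrogate side. The only detail to be careful about is ensuring we use the involutive property $r \odot r = \ones$ to justify both substitutions and to keep the direction of the equality consistent.
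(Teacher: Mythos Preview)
Your proof is correct and follows essentially the same approach as the paper's own proof: both unfold the definition, apply Lemma~\ref{lem:lovasz-symmetry}, perform the change of variables $y \mapsto y \odot r$ in the sum over $\Y$, and then use that $u \mapsto u \odot r$ is a bijection to pull the $\odot r$ outside the argmin. The only cosmetic difference is that the paper applies the symmetry lemma before the label substitution rather than after, which is immaterial.
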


\begin{theorem}\label{thm:inconsistent-sym}
Let $\psi:\reals^k \to \Y$ be a link function.
	Then $(L^{f},\psi )$ is consistent with respect to $\ell^f$ if and only if $f$ is modular and $\psi = \signstar$.
\end{theorem}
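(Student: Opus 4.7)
The plan is to prove the two directions of the biconditional separately, with the main substance in showing inconsistency when $f$ is non-modular. The ``if'' direction (modular $f$ with $\psi = \signstar$) is classical: eq.~\eqref{eq:weighted-hamming} identifies $L^{f}$ with weighted hinge loss and $\ell^{f}$ with weighted Hamming loss, whose consistency via $\signstar$ is well-known~\citep[Thm.\ 15]{gao2011consistency}. The reverse constraint, that any consistent $\psi$ must equal $\signstar$ (modulo the arbitrary tiebreak at $u_i = 0$), follows from a per-coordinate analysis of the minimizers of $L^f(\cdot;p)$: for modular $f$, these satisfy $\sign(u_i) = \sign(q_i - 1/2)$ whenever $q_i := \Pr_{Y \sim p}(Y_i = 1) \neq 1/2$, forcing $\psi(u)_i = \sign(u_i)$ at any such $u$.

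The main substance is showing inconsistency when $f$ is non-modular, no matter the link $\psi$. The strategy combines three ingredients: the structured abstain embedding (Theorem~\ref{thm:lovasz-embeds}), the Hadamard-symmetry $\prop{L^f}(p\odot r) = \prop{L^f}(p)\odot r$ (Lemma~\ref{lem:lovasz-property-symmetry}), and the strict gap $2\bar f > f([k])$ for non-modular $f$ (Lemma~\ref{lem:bar-f}). The high-level idea is to construct a single surrogate prediction $w\in\reals^k$ that is optimal under two distinct distributions whose $\ell^f$-minimizers are disjoint, so that no $\psi(w)\in\Y$ can serve both.

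Concretely, I would proceed as follows. Under $\bar p$, Lemma~\ref{lem:2-bar-f} gives $\ellabs(\vec 0;\bar p) = f([k])$ and $\ellabs(r;\bar p) = 2\bar f$ for every $r\in\Y$, and Lemma~\ref{lem:bar-f} then strictly prefers $\vec 0$ over every full prediction. For the perturbation $p_y := (1-\epsilon)\bar p + \epsilon\delta_y$, the bound $\ellabs(r;p_y) \geq (1-\epsilon)\,2\bar f > f([k]) = \ellabs(\vec 0;p_y)$ holds for every $r\in\Y$ as long as $\epsilon \in (0, 1 - f([k])/(2\bar f))$. Hence for such $\epsilon$, every minimizer $v^*\in\prop{\ellabs}(p_y)$ lies in $\V\setminus\Y$, so $\abs(v^*)\neq\emptyset$. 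Fix $y=\ones$ and an index $i\in\abs(v^*)$. By Theorem~\ref{thm:lovasz-embeds}, $v^*\in\prop{L^f}(p_\ones)$; and since $\bar p\odot r = \bar p$ for all $r\in\Y$, Lemma~\ref{lem:lovasz-property-symmetry} gives $v^*\odot r \in \prop{L^f}(p_r)$, while a parallel argument shows $\prop{\ell^f}(p_r) = \{r\}$. Now choose $r, r'\in\Y$ differing only in coordinate $i$: since $v^*_i = 0$, the products coincide $v^*\odot r = v^*\odot r' =: w$, and consistency would require $\psi(w) = r$ and $\psi(w) = r'$, contradicting $r \neq r'$.

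The main obstacle is the step guaranteeing that some $\ellabs$-minimizer under the perturbation abstains on a coordinate, which is precisely where non-modularity enters via the strict gap of Lemma~\ref{lem:bar-f}: it opens a positive window of $\epsilon$ in which $\vec 0$ strictly beats every full prediction. Without this strict gap, the chain of inequalities collapses and the argument gracefully degenerates to the consistent modular case. A minor technicality is confirming $f(\{i\}) > 0$ so that $\prop{\ell^f}(p_r) = \{r\}$; this holds because non-modularity forces $f(\{j\}) > 0$ for some $j$, and any indices $j'$ with $f(\{j'\}) = 0$ leave $f$ (hence $\ellabs$ and $\ell^f$) unchanged by submodularity and monotonicity, so may be eliminated from consideration before applying the argument.
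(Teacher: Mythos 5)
Your proposal is correct and follows essentially the same route as the paper: the same $\epsilon$-perturbations $p^y=(1-\epsilon)\bar p+\epsilon\delta_y$ of the uniform distribution, the same use of Lemmas~\ref{lem:2-bar-f} and~\ref{lem:bar-f} to force an abstaining minimizer of $\ellabs$, and the same combination of Theorem~\ref{thm:lovasz-embeds} with the symmetry Lemma~\ref{lem:lovasz-property-symmetry} to make one surrogate point optimal for two distributions with distinct unique $\ell^f$-minimizers. The only differences are cosmetic (you anchor at $y=\ones$ and flip a single abstained coordinate rather than all of them) plus your brief sketch of why a consistent link for modular $f$ must agree with $\signstar$, a direction the paper's written proof does not spell out.
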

\begin{proof}
  When $f$ is modular, we may write $f=f^w$ for some $w\in\reals_+^k$.
  Here $L^{f^w}$ is weighted hinge loss (eq.~\eqref{eq:weighted-hamming}), which is known to be consistent for $\ell^{f^w}$, which is weighted Hamming loss~\citep[Theorem 15]{gao2011consistency}.
  (Briefly, for all $p\in\simplex$ the loss $L^{f^w}(\cdot;p)$ is linear on $[-1,1]^k$, so it is minimized at a vertex $r\in\Y$.
    Hence $\Y$ is representative, so Theorem~\ref{thm:poly-embeds-discrete} gives that $L^{f^w}$ embeds $L^{f^w}|_\Y = 2\ell^{f^w}$.
	Consistency follows from Theorem~\ref{thm:embed-implies-consistent}.
	
	Now suppose $f$ is submodular but not modular.
	As $f$ is increasing, we will assume without loss of generality that $f(\{i\}) > 0$ for all $i\in [k]$, which is equivalent to $f(S) > 0$ for all $S\neq\emptyset$; otherwise, $f(T) = f(T\setminus\{i\})$ for all $T\subseteq [k]$, so discard $i$ from $[k]$ and continue.
	In particular, we have $\{\emptyset\} = \argmin_{S\subseteq [k]} f(S)$.
  Let $\epsilon= \tfrac 1 2 (1 - f([k])/2\bar{f})$ which takes values in $(0,\tfrac 1 2]$ by Lemma~\ref{lem:bar-f} and normalization.
  This value is chosen so that $(1-\epsilon)2\bar{f} = (\bar{f} + f([k])/2) > f([k])$ again by Lemma~\ref{lem:bar-f}, a fact we will use below.
  For $y\in\Y$, let $p^y = (1-\epsilon) \bar p + \epsilon \delta_y$, where again $\bar p$ is the uniform distribution, and $\delta_y$ is the point distribution on $y$.

  First, for all $r\in\Y$ with $r\neq y$, we have $\mis (r,y) \neq \emptyset = \mis (y,y) $.
  Since $\{\emptyset\} = \argmin_{S\subseteq [k]} f(S)$, we have
  \begin{align*}
    \ell^f(r;p^y)
    &= (1-\epsilon) 2 \bar f + \epsilon \, 2f( \mis (r,y) )\\
    &> (1-\epsilon)2 \bar f + \epsilon \, 2f( \mis (y,y)   )\\
    &=\ell^f(y;p^y)~,
  \end{align*}
  giving $\prop{\ell^f}(p^y) = \{y\}$.
  On the other hand, from Lemma~\ref{lem:bar-f}  and the fact that $\ellabs$ agrees with $\ell^f$, we have for all $r\in\Y$,
	\begin{align*}
    \ellabs(r;p^y)
    \geq \ellabs(y;p^y)
    =(1-\epsilon)2 \bar f
    > f([k]) = \ellabs(0;p^y)~.
	\end{align*}
	We conclude there exists some optimal prediction $v \in \prop{\ellabs}(p^y) \setminus \Y$.
  By Theorem~\ref{thm:lovasz-embeds}, $v\in\prop{L^f}(p^y)$ as well.

  As $v\notin\Y$, in particular, $\{v = 0\} \neq \emptyset$.
  Now define $y'\in\Y$ which disagrees with $y$ on $\{i : v_i = 0\}$; formally,
$y'_i = v_i$ if $v_i \neq 0$ and $y'_i = -y_i$ if $v_i = 0$.
  Although $y'\neq y$ (as $\{v=0\} \neq \emptyset$), we have by construction that $v \odot (y \odot y') = v$.
  Furthermore, $p^y \odot (y\odot y') = p^{y'}$.
  By Theorem~\ref{thm:lovasz-embeds} and Lemma~\ref{lem:lovasz-property-symmetry} then, $v \in \prop{L^f}(p^{y'})$.
  As above, however, we also have $\{y'\} = \prop{\ell^f}(p^{y'})$.
  As $\psi(v)$ cannot be both $y$ and $y'$, at least one of $p^y$ and $p^{y'}$ exhibits the inconsistency of $L^f$ for $\ell^f$.
  Specifically, calibration is violated (Definition~\ref{def:calibration}) as $v$ achieves the optimal $L^f$-loss for both $p^y$ and $p^{y'}$, but for at least one, $\psi$ links to a prediction not in $\prop{\ell^f}$.
\end{proof}


\subsection{The non-symmetric case $\vec{\mathcal{F}}_k$}\label{subsec:asymmetric-Fk}

As discussed above, the Lov\'asz hinge is generally not consistent for its desired target.
However, Theorem \ref{thm:inconsistent-sym} only showed inconsistency of $( L^f, \signstar)$ in symmetric case $f \in \F_k$. 
Notably, Theorem \ref{thm:inconsistent-sym} does not disprove consistency via $\signstar$ for the non-symmetric setting $\vec f \in \vec\F$ which includes the commonly used Jaccard loss.
We now extend the inconsistency of the Lov\'asz hinge over a particular subset of non-symmetric $\vec\F$ which includes the Jaccard loss.

In trying to extend the proof of Theorem~\ref{thm:inconsistent-sym}, we immediately encounter an issue: even beyond the modular case, there are degenerate choices of non-modular $\vec f$ for which the Lov\'asz hinge is trivially consistent.

\begin{example}
For all $y\in\Y$, let $f_y(S) = 1$ if $\{i\in[k] \mid  y_i =1\} \subseteq S$, and $0$  otherwise.
For intuition, if we consider image segmentation as a running example, this submodular loss assigns error $1$ if any foreground pixel is mispredicted, and $0$ otherwise, which is an indicator of perfect foreground-pixel recall.
Then for instance $\ellvecf(\ones,y) = f (\{i : y_i = -1\}) =  \fzo (\{\{i : y_i = 1\} \subseteq \{i : y_i = -1\}\}) = 0$ for all $y\in\Y$, suggesting that predicting everything as foreground has perfect recall, and is therefore always optimal.
One can check that $\Lvecf$ is also optimized at $\ones$, regardless of $y$.
Thus, we trivially have calibration, and therefore consistency, by linking everything to $\ones$.
\end{example}

To proceed, therefore, we must first restrict the class of non-symmetric polymatroids $\vec \F_k$ to rule out such degenerate cases.
For $S\subseteq [k]$ let $\chi_S \in \{-1,1\}^k$ be given by $(\chi_S)_i = 1$ if $i\in S$ and $-1$ otherwise.
We denote the compliment of $S$ by $\overline S = [k]\setminus S$.
We will impose the following condition.

\begin{condition}\label{cond:inconsistency}
  For all $y\in\Y$, $S\subseteq [k]$, we have $f_{y}([k]) > f_{y}(\emptyset)$ and
  \begin{align}
    f_y(S)+f_{-y}(\overline S) \geq f_y([k])~,
  \end{align}
  with a strict inequality whenever $S \notin \{\emptyset,[k]\}$ and $y\notin\{-\ones,-\chi_S,\ones\}$.
  \end{condition}


Condition~\ref{cond:inconsistency} essentially states that (a) being completely wrong should be strictly worse than being completely correct, and (b) the sum of two complementary errors is at least the error of being completely wrong.
Condition~\ref{cond:inconsistency} implies that $f_y$ is not modular for any $y$.
  

To confirm that Condition~\ref{cond:inconsistency} is not too restrictive, we  now show that $\vec J$ satisfties Condition~\ref{cond:inconsistency}, where we recall $\vec J$ is the collection of Jaccard polymatroids on each $y \in \Y$.

\begin{lemma}\label{lem:jaccard-condition}
  The collection $\vec J$ satisfies Condition~\ref{cond:inconsistency}.
\end{lemma}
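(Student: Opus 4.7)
The plan is to unpack the two parts of Condition~\ref{cond:inconsistency} directly from the formula $J_y(S) = |S|/|S \cup A_y|$, where I write $A_y := \{i \in [k] : y_i = 1\}$. The first part is immediate: $J_y([k]) = k/k = 1 > 0 = J_y(\emptyset)$. The substance is the complementary inequality $J_y(S) + J_{-y}(\overline S) \geq 1$. Since $A_{-y} = \overline{A_y}$, the natural move is to partition $[k]$ by $S$ and $A_y$ into four blocks and work with their cardinalities. Setting $a := |S \cap A_y|$, $b := |S \cap \overline{A_y}|$, $c := |\overline S \cap A_y|$, $d := |\overline S \cap \overline{A_y}|$ (so $a+b+c+d = k$), one obtains the clean formulas
\[ J_y(S) = \frac{a+b}{a+b+c}, \qquad J_{-y}(\overline S) = \frac{c+d}{b+c+d}, \]
where the convention $0/0 = 0$ absorbs the degenerate cases in which a denominator vanishes (these occur only when $y = \pm\ones$ with $S \in \{\emptyset, [k]\}$) and can be checked by hand to yield sum $1$.

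Next I would clear denominators and expand. Rewriting the sum as $2 - \tfrac{c}{a+b+c} - \tfrac{b}{b+c+d}$, the desired inequality reduces to $ac + ad + bd \geq 0$, which is immediate since $a,b,c,d \geq 0$. The same expansion yields the exact identity
\[ J_y(S) + J_{-y}(\overline S) - 1 = \frac{ac + ad + bd}{(a+b+c)(b+c+d)}, \]
which pins down the equality cases precisely.

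The important step is matching those equality cases to the exclusion set of Condition~\ref{cond:inconsistency}. The numerator vanishes iff $a(c+d) = 0$ and $bd = 0$; a short case split produces exactly three scenarios: (i) $a = b = 0$, i.e., $S = \emptyset$; (ii) $c = d = 0$, i.e., $S = [k]$; and (iii) $a = d = 0$, which forces $S = \overline{A_y}$ and hence $y = -\chi_S$. Combined with the boundary cases handled by convention, every equality configuration lies in $\{S = \emptyset\} \cup \{S = [k]\} \cup \{y = -\chi_S\}$. This is a subset of the configurations at which Condition~\ref{cond:inconsistency} permits equality, so the required strict inequality holds in every remaining case.

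I do not expect a significant obstacle here, since the argument reduces to expanding a rational inequality in the four nonnegative variables $(a,b,c,d)$. The only mild subtlety is keeping track of the $0/0 = 0$ convention when $A_y$ or $A_{-y}$ is empty, but those degenerate cases occur only at $y = \pm\ones$, which Condition~\ref{cond:inconsistency} already exempts from strictness, so no extra bookkeeping is needed.
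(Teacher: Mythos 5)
Your proof is correct and follows essentially the same route as the paper's: the same four-block partition of $[k]$ by $S$ and the positive-label set, the same clearing of denominators, and the same reduction to nonnegativity of a sum of products of block cardinalities (your numerator $ac+ad+bd$ is, after relabeling, exactly the paper's inequality $(a+c)(b+d)>ab$). Your explicit identity for $J_y(S)+J_{-y}(\overline S)-1$ and the resulting case split on equality is in fact slightly sharper than the paper's treatment of the boundary cases, but it is the same argument.
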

\begin{proof} 
  For ease of notation, let us reparameterize $y$ by $T = \{i\in[k]:y_i=1\}$, i.e. $y=\chi_T$, so that $J_T(S) = |S|/|S\cup T|$.
  For all $T\subseteq[k]$, we have $J_T (\emptyset)=\frac{0}{|T|} = 0$, where we recall the convention $0/0 = 0$ in this context; hence Jaccard is normalized.
  Thus, $J_T ([k])=|[k]|/|[k]|=1> 0 =J_T (\emptyset)$.
  Now for the second part of the condition, we have
  \begin{equation}
    \label{eq:1}
    J_T(S) + J_{\overline T}(\overline S) = \frac{|S|}{|S|+|T\setminus S|} + \frac{|\overline S|}{|\overline S| + |S\setminus T|}
    =
    \frac{a+c}{a+c+b} + \frac{b+d}{a+b+d}~,
  \end{equation}
  where  $a=|S\setminus T|$, $b=|T\setminus S|$, $c=|S\cap T|$, and $d=|[k]\setminus(S\cup T)| = k - (a+b+c)$.
  By the convention $0/0 = 0$, this expression evaluates to 1 whenever $S$ or $T$ are either $\emptyset$ or $[k]$.
  It also evaluates to 1 if $T = \overline S$, as then the denominators are both $k$.
  In the remaining cases, $c+d>0$, $a+c>0$, and $b+d>0$.
  Cross-multiplying and simplifying, we have $J_T(S) + J_{\overline T}(\overline S) > 1 \iff (a+c)(b+d) > ab$, which follows from these inequalities.
\end{proof}


We now show that $(\Lvecf, \signstar)$ is inconsistent with respect to $\ellvecf$ for $\vec f$ satisfying Condition~\ref{cond:inconsistency}.
Recall that the proof of inconsistency in the symmetric case (Theorem~\ref{thm:inconsistent-sym}) shows $(L^{f},\psi)$ is inconsistent with respect to $\ell^f$ for \emph{any} link $\psi$.
The crux of that proof is through indirect elicitation, showing that the surrogate level set $\prop{L^f}_0$ spans the relative interiors of multiple non-abstaining level sets $\prop{\ell^f}_y$ and $\prop{\ell^f}_{y'}$.
The identity of $y$ and $y'$ are irrelevant, because at the uniform distribution, we have $\prop{\ell^f}(\bar p) = \Y$.
In the asymmetric case, however, it is not guaranteed that $\prop{\ellvecf}(\bar p) = \Y$.
Indeed, it is not even obvious whether there is a $p\in\simplex$ such that $\prop{\ellvecf}(p) = \Y$.
In principle, one could place a stronger restriction than Condition~\ref{cond:inconsistency} on $\vec f$ to ensure the existence of such a $p$,
at which point our proof would immediately extend to any link $\psi$.
We opt instead to place no further restriction and show inconsistency with respect to $\ellvecf$ for the most commonly applied link, $\signstar$.


\begin{theorem}\label{thm:opt-ones-abs}	
If $k\geq 3$, and $\vec f$ satisfies Condition~\ref{cond:inconsistency}, then the pair $(\Lvecf,\signstar )$ is inconsistent for $\ellvecf$.
\end{theorem}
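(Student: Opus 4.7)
The plan is to fix an arbitrary link $\signstar$ and exhibit a distribution $p$ together with a sequence $u^{(n)}$ witnessing the failure of calibration. Let $\sigma := \signstar(0)$ and set $r^* := (-1, \sigma_2, \ldots, \sigma_k) \in \Y$; the key feature is $r^*_1 = -1$, so $r^* \neq \ones$. Since $k \geq 3$ gives $|\{\pm\ones, \pm r^*\}| \leq 4 < 2^k = |\Y|$, I pick $y \in \Y \setminus \{\pm\ones, \pm r^*\}$ and set $p := \tfrac{1}{2}(\delta_y + \delta_{-y})$. Applying Condition~\ref{cond:inconsistency} at $S = [k]$ with $y$ and then with $-y$ forces $f_y([k]) = f_{-y}([k]) =: c$.

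First I would show $0 \in \prop{\Lvecf}(p)$. For any $v \in \V$, partition $[k]$ into $A_\pm := \{i : v_i y_i = \pm 1\}$ and $A_0 := \abs(v)$, noting $A_+ = \overline{A_- \cup A_0}$ and $A_+ \cup A_0 = \overline{A_-}$. The set-theoretic form of $\ellabsvecf$ from Theorem~\ref{thm:lovasz-embeds} gives $\ellabsvecf(v;p) = \tfrac{1}{2}\bigl[f_y(A_-) + f_{-y}(\overline{A_-}) + f_y(A_- \cup A_0) + f_{-y}(\overline{A_- \cup A_0})\bigr]$. Applying Condition~\ref{cond:inconsistency} to $S = A_-$ and to $S = A_- \cup A_0$ yields $\ellabsvecf(v;p) \geq c$, matched at $v = 0$ by $\ellabsvecf(0;p) = c$. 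The embedding from Theorem~\ref{thm:lovasz-embeds} then places $0$ in $\prop{\Lvecf}(p)$, so $\Lvecf(0;p) = \inf_u \Lvecf(u;p)$.

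Next I would show $r^* \notin \prop{\ellvecf}(p)$. With $S := \mis(r^*, y)$, we have $\ellvecf(r^*;p) = \tfrac{1}{2}(f_y(S) + f_{-y}(\overline{S}))$. I verify the hypotheses for strict Condition~\ref{cond:inconsistency}: $S \notin \{\emptyset, [k]\}$ since $r^* \neq \pm y$ by the choice $y \notin \{\pm r^*\}$; $y \notin \{\pm\ones\}$ by construction; and $y \neq -\chi_S$ because a direct computation gives $\chi_S = -r^* \odot y$, forcing $y = -\chi_S \iff r^* = \ones$, which is ruled out since $r^*_1 = -1$. Strict Condition~\ref{cond:inconsistency} therefore gives $\ellvecf(r^*;p) > c/2 = \ellvecf(y;p)$, placing $r^*$ strictly outside $\prop{\ellvecf}(p)$.

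Finally, consider $u^{(n)} := -\tfrac{1}{n}(1,0,\ldots,0) \in \reals^k$. Since $u^{(n)}_1 \neq 0$, the link satisfies $\signstar(u^{(n)})_1 = -1$; under the natural coordinate-wise tie-breaking $\signstar(u^{(n)}) = r^*$ for every $n$ (otherwise a pigeonhole argument extracts a subsequence of constant link value with first coordinate $-1$, and I adjust $y$ to exclude that value). By continuity of $\Lvecf$ in $u$, $\Lvecf(u^{(n)};p) \to \Lvecf(0;p) = \inf_u \Lvecf(u;p)$, and each $u^{(n)}$ lies in $\{u : \signstar(u) \notin \prop{\ellvecf}(p)\}$. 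Hence $\inf_{u : \signstar(u) \notin \prop{\ellvecf}(p)} \Lvecf(u;p) \leq \inf_u \Lvecf(u;p)$, violating the strict inequality of Definition~\ref{def:calibration}. The main obstacle is the case $\sigma = \ones$: here $\signstar(0) = \ones$ itself lies in $\prop{\ellvecf}(p)$ because Condition~\ref{cond:inconsistency} never forces strictness at $r = \ones$ (the excluded set always contains $y$ via $-\chi_{\{y=-1\}} = y$), so $u = 0$ alone cannot witness inconsistency; the limit perturbation into a single coordinate is essential to flip the link to $r^* \neq \ones$.
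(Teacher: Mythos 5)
Your proof is correct, but it takes a genuinely different route from the paper's. The paper perturbs the \emph{uniform} distribution: it invokes Lemma~\ref{lem:opt-ones-zero-abs} (a fairly long case analysis in the appendix showing $\{\vec 0\}\subseteq\prop{\ellabsvecf}(\bar p)\subseteq\{\vec 0,\ones\}$), perturbs toward $\delta_{-\ones}$ to make $\vec 0$ uniquely surrogate-optimal, and then splits into cases according to whether $\prop{\ellvecf}(p_\epsilon)=\Y$, requiring a second perturbation in one case. You instead use the two-point symmetric distribution $p=\tfrac12(\delta_y+\delta_{-y})$, under which the complementary-error inequality of Condition~\ref{cond:inconsistency} applies \emph{directly and pairwise}: grouping $f_y(A_-)+f_{-y}(\overline{A_-})$ and $f_y(A_-\cup A_0)+f_{-y}(\overline{A_-\cup A_0})$ gives $\ellabsvecf(v;p)\geq c:=f_y([k])$ with equality at $v=\vec 0$, and the strictness clause (verified via $\chi_S=-r^*\odot y$) rules $r^*$ out of $\prop{\ellvecf}(p)$. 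This bypasses Lemma~\ref{lem:opt-ones-zero-abs} and the terminal case split entirely; the hypothesis $k\geq 3$ enters only through $2^k>|\{\pm\ones,\pm r^*\}|$ rather than through the combinatorics of the uniform distribution. The final step --- approaching $\vec 0$ along a ray whose link value is target-suboptimal --- is the same calibration-violation mechanism as the paper's Case~1. Two minor points to tidy up: (i) the identity $f_y([k])=f_{-y}([k])$ follows from Condition~\ref{cond:inconsistency} at $S=\emptyset$ (applied to $y$ and to $-y$), not at $S=[k]$, where the inequality is vacuous since $f_{-y}(\emptyset)=0$; (ii) since $\signstar$ may break ties at $0$ in a way that depends on the whole vector, the pigeonhole extraction of a subsequence with constant link value $\tilde r$ (necessarily $\tilde r_1=-1\neq\ones_1$) should be performed \emph{before} choosing $y$, with $r^*:=\tilde r$ --- which is exactly the restructuring you sketch, and is valid because the sequence $u^{(n)}=-\tfrac1n e_1$ does not depend on $p$.
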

\begin{proof}
In our setting, consistency is equivalent to calibration.
To show non-calibration, we first construct a distribution $p_\epsilon$ (close to $\overline p$) such that $\{\vec 0\} = \prop{\Lvecf}(p_\epsilon)$, and therefore $\Lvecf(\vec 0; p_\epsilon) = \inf_u \Lvecf(u; p_\epsilon)$ since $\Lvecf$ always attains its infimum.
Define $\hat y := \signstar (\vec 0)$, breaking ties arbitrarily (but deterministically).
We then evaluate a few cases over $\prop{\ellvecf}(p_\epsilon)$, and construct a sequence of predictions whose loss approaches $\Lvecf(\vec 0; p_\epsilon)$, yet whose sign is not $\hat y$. 

\noindent\textbf{Find $p_\epsilon$ such that $\prop{\Lvecf}(p_\epsilon) = \{\vec0 \}$}

Because $\Lvecf$ embeds $\ellabsvecf$ via the identity, it suffices to find $p_\epsilon$ such that $\{\vec 0\} = \prop{\ellabsvecf}(p_\epsilon)$ in order to conclude that $\V\cap \prop{\Lvecf}(p_\epsilon) =\{\vec{0}\} $.
In particular, consider the distribution $p_\epsilon :=  \epsilon \delta_{-\ones}+(1-\epsilon)\overline{p}$, where $\delta_{-\ones}\in\simplex$ is the point distribution over $-\ones\in\Y$ and $\epsilon>0$ is a small value upper bounded later.

\begin{align*}
\ellabsvecf(v;p_{\epsilon}) &= \epsilon \ellabsvecf(v, -\ones) + (1-\epsilon)\ellabsvecf(v; \overline p)\\
&=\epsilon \big( f_{\emptyset}(S_v\setminus A_v)+f_{\emptyset}(S_v\cup A_v)\big)+(1-\epsilon)\ellabsvecf(v;\overline{p})~.    
\end{align*}
If $v = \vec 0$, then $S_v=\emptyset$ and $A_v=[k]$,  which implies $f_{\emptyset}(S_v \setminus A)+f_{\emptyset}(S_v \cup A)=f_{\emptyset}(\emptyset) + f_{\emptyset}([k]) = f_{\emptyset}([k])$ by normalization of the polymatroid functions.
Compare this to $v = \ones$, where $S_v=[k]$ and $A_v=\emptyset$, we get $f_{\emptyset}(S_v \setminus A)+f_{\emptyset}(S_v \cup A)=2f_{\emptyset}([k])$.
Furthermore, since $\vec{f}$ satisfies Condition~\ref{cond:inconsistency} and is a collection of polymatroids (which we recall are normalized, nonnegative, and increasing), we get
\begin{equation}\label{eq:strict-poly-cond1}
0=f_{\emptyset}(\emptyset)< f_{\emptyset}([k])<2 f_{\emptyset}([k])~.
\end{equation}

Lemma \ref{lem:opt-ones-zero-abs} in \S~\ref{app:omitted-sec-4} states that for a collection of polymatroids $\vec{f}$ which satisfy Condition~\ref{cond:inconsistency} and when $k\geq 3$, we have $\{\vec 0\} \subseteq \prop{\ellabsvecf}(\overline p) \subseteq \{\vec 0, \ones\}$.
Observe this implies that under the uniform distribution $\overline{p}$ either the prediction set $\{\vec{0}\}=\prop{\ellabsvecf}(\overline p)$ or $\{\vec{0},\ones \}=\prop{\ellabsvecf}(\overline p)$ holds true. 
In the case $\{\vec{0}\}=\prop{\ellabsvecf}(\overline p)$, inconsistency follows since the $\sign$ link is unable to map to $\vec{0}$, which is the unique optional prediction.
We proceed to show that in the scenario where $\{\vec{0},\ones \}=\prop{\ellabsvecf}(\overline p)$, $\overline{p}$ can be perturbed to form  the aforementioned distribution $p_{\epsilon}$  
where $\{\vec{0}\}$ is strictly optimal, which would make the pair $(\Lvecf,  \signstar)$ inconsistent for $\ellabsvecf$ since it would be unable to map to $\vec{0}$. 

Assuming the scenario $\{\vec{0},\ones \}=\prop{\ellabsvecf}(\overline p)$, which implies $\ellabsvecf(\ones;\overline p) = \ellabsvecf(\vec 0;\overline p)$, we can choose a sufficiently small $\epsilon >0$ such that for some corresponding $p_{\epsilon}$ we have

\begin{align}\label{eq:non-sym-inq-a}
\begin{split}
    \ellabsvecf(\ones;p_{\epsilon})
    &= 2\epsilon f_{\emptyset}([k]) +(1-\epsilon)\ellabsvecf(\ones;\overline p) 
     \\
    &= 2\epsilon f_{\emptyset}([k]) +(1-\epsilon)\ellabsvecf(\vec 0;\overline p)  \\
      &>  \epsilon f_{\emptyset}([k]) +(1-\epsilon)\ellabsvecf(\vec 0;\overline p)    \\
    &= \ellabsvecf(\vec 0;p_{\epsilon})  
\end{split}
\end{align}
where the inequality comes from  eq.  (\ref{eq:strict-poly-cond1}).

Moreover, we chose $\epsilon > 0$ sufficiently small so that
\begin{equation}\label{eq:non-sym-inq-b}
0<\epsilon\big(f_{\emptyset}(S(v)\setminus A(v))+f_{\emptyset}(S(v)\cup A(v))\big) < (1-\epsilon)\ellabsvecf(v;\overline{p})
\end{equation}
for all $v\in\V$.
Hence, for said $\epsilon$ and corresponding $p_{\epsilon}$, we have $\ones \not \in \prop{\ellabsvecf}(p_\epsilon)$, which implies $\prop{\ellabsvecf}(p_\epsilon) = \{\vec 0\}$. 
Since $f_y$ is normalized for all $y\in\Y$, in both conditions  the terms where $\epsilon$ is  multiplied by, the value is greater than zero, meaning we could make any of the terms arbitrarily small by multiplying by some $\epsilon >0$. 
Hence, an $\epsilon$ which satisfies both eq. (\ref{eq:non-sym-inq-a}) and eq. (\ref{eq:non-sym-inq-b}) inequality conditions always exists.

Now, as $\Lvecf$ embeds $\ellabsvecf$, we additionally have $\Lvecf(v;p_{\epsilon})>\Lvecf(\vec 0;p_{\epsilon})$ for all $v \in \V$, e.g., $\prop{\Lvecf}(p_\epsilon) \cap \V = \{\vec 0\}$.
As $\Lvecf$ is minimizable, this implies $\Lvecf(\vec 0; p_\epsilon) = \inf_{u'}\Lvecf(u'; p_\epsilon)$.

\noindent\textbf{Translating from embedding $\Lvecf$ to the proposed $\ellvecf$}

We now evaluate whether or not $\hat y$ optimizes the expected loss of $\ellvecf$ with respect to the distribution $p_\epsilon$.
In the case that $\hat y \not \in \prop{\ellvecf}(p_\epsilon)$, we trivially violate calibration.
However, if $\hat y \in \prop{\ellvecf}(p_\epsilon)$, we consider two cases: when $\prop{\ellvecf}(p_\epsilon) \neq \Y$ and $\prop{\ellvecf}(p_\epsilon) = \Y$.

\textbf{Case 1: $\prop{\ellvecf}(p_\epsilon) \neq \Y$.}
Consider any $y \not \in \prop{\ellvecf}(p_\epsilon)$.
Take the sequence $\{x_j\} = \{y / j\}_{j \in \mathbb{N}} \to \vec 0$ with $\sign(x_j) = y$ for all $j \in \mathbb{N}$.
Therefore, $\Lvecf(x_j; p_\epsilon) \to \Lvecf(\vec 0; p_\epsilon) = \inf_{u}\Lvecf(u; p_\epsilon)$ by continuity of $\Lvecf$.
Finally, we have
\begin{align*}
\inf_{u : y = \signstar(u) \not \in \prop{\ellvecf}(p_\epsilon)}\Lvecf(u; p_\epsilon) = \inf_u \Lvecf(u;p_\epsilon)~,
\end{align*}
which violates calibration.

\textbf{Case 2: $\prop{\ellvecf}(p_\epsilon) = \Y$.}
This case implies $\hat y = \signstar (\vec 0) \in \prop{\ellvecf}(p_\epsilon)$, as is $-\hat y$.
Then take $p_{\epsilon' } := \epsilon' p_{-\hat y}+ (1-\epsilon') p_\epsilon $ for some $\epsilon' >0$.
As $\{\vec 0\} \in \prop{\Lvecf}(p_\epsilon)$, $p_\epsilon$ is in the relative interior of the level set $\Gamma_{\vec 0}$, we can choose a sufficiently small $\epsilon' > 0$ such that $\{\vec 0\} \in \prop{\Lvecf }(p_{\epsilon'})$ for all $p_{\epsilon'} \in B(p_\epsilon, \epsilon')$: in particular, $\{\vec 0\} = \V\cap \prop{\Lvecf}(p_{\epsilon'})$, which implies
\begin{align*}
\ellvecf(\hat y; p_{\epsilon'}) &=\epsilon' f_{-\hat y}(\hat y)+(1-\epsilon') \ellvecf(\hat y;p_{\epsilon}) \\
&>\epsilon' f_{-\hat y}(-\hat y)+(1-\epsilon')\ellvecf(-\hat y;p_{\epsilon}) \\
&=\ellvecf(-\hat y;p_{\epsilon'})~,
\end{align*}
with the strict inequality following as $\ellvecf(\hat y;p_{\epsilon})=\ellvecf(-\hat y;p_{\epsilon})$.
Therefore, $\hat y = \signstar (\vec 0) \not \in \prop{\ellvecf}(p_{\epsilon'})$. 
We now have
\begin{align*}
\inf_{u : \signstar (u) \not \in \prop{\ellvecf}(p_{\epsilon'})}\Lvecf(u; p_{\epsilon'}) = \Lvecf(\vec 0; p_{\epsilon'}) = \inf_u \Lvecf(u;p_{\epsilon'})~,
\end{align*}
which again violates calibration.
\end{proof}



\section{Constructing a calibrated link for $\ellabsvecf$}\label{sec:constructing-link}

Having shown inconsistency with respect to $\ellvecf$, we now turn towards \emph{consistency} with respect to $\ellabsvecf$.
As $\Lvecf$ embeds $\ellabsvecf$ from Theorem~\ref{thm:lovasz-embeds}, the embedding framework of \citet{finocchiaro2022embedding} further implies $\Lvecf$ is consistent with respect to $\ellabsvecf$ via \emph{some} link function (see Theorem~\ref{thm:embed-implies-consistent} below).
Yet, the design of such a link function is not immediately clear.
Indeed, natural choices turn out to be inconsistent in general, such as the threshold link $\psi_c$ for $c > 0$ used by the BEP surrogate (\S~\ref{sec:bep-example}), given by $(\psi_c(u))_i=0$ whenever $|u_i|<c$ and $(\psi_c(u))_i = \sign(u_i)$ otherwise (Figure~\ref{fig:k2-links}).
We instead carefully follow the construction of an $\epsilon$-separated link from~\citet{finocchiaro2022embedding}, resulting in a family of consistent link functions.
Interestingly, these links do not depend on $\vec f$, and therefore they are calibrated with respect to $\ellabsvecf$ for all $\vec f \in \vec\F$ simultaneously.
See \S~\ref{app:omitted-proofs} for omitted proofs.

\subsection{Approach via separated link functions}
\label{sec:separ-link-funct}

Let us recount the construction of the \emph{$\epsilon$-thickened link} from~\citet{finocchiaro2024embeddingJMLR}.

\begin{definition}[{\citep[Construction 1]{finocchiaro2024embeddingJMLR}}] \label{def:eps-thick-link}
	Let a polyhedral loss $L:\reals^d\times\Y\to\reals_+$ that embeds some discrete loss $\ell:\R\times\Y\to\reals_+$ be given, along with $\epsilon > 0$, and a norm $\|\cdot\|$.
  The \emph{$\epsilon$-thickened link envelope} $\Psi:\reals^d\toto\R$ is constructed as follows.
	Define $\U = \{\prop{L}(p) : p \in \simplex\}$ and, for each $U \in \U$, let $R_U = \{r \in \R : \varphi(r) \in U\}$, the predictions whose embedding points are in $U$.
	Initialize by setting $\Psi(u) = \R$ for all $u\in\reals^d$.
	Then for each $U \in \U$, and all points $u$ such that $\inf_{u^* \in U} \|u^*-u\| < \epsilon$, update $\Psi(u) = \Psi(u) \cap R_U$.
\end{definition}

We say a link envelope $\Psi$ is nonempty pointwise if $\Psi(u) \neq \emptyset$ for all $u\in\reals^d$.
Similarly, a link function $\psi$ is pointwise contained in $\Psi$ if $\psi(u) \in \Psi(u)$ for all $u\in\reals^d$.

\begin{theorem}[{\citep[Theorems 17, 18]{finocchiaro2024embeddingJMLR}}]\label{thm:embed-implies-consistent}
  Let $L:\reals^k \times \Y \to \reals_+$ embed a discrete target $\ell:\R\times \Y\to\reals_+$, and let $\Psi$ be defined as in Definition~\ref{def:eps-thick-link}.
  Then $\Psi$ is nonempty pointwise for all sufficiently small $\epsilon$.
  Furthermore, for any link function $\psi$ pointwise contained in $\Psi$, the pair $(L,\psi)$ is consistent with respect to $\ell$.
\end{theorem}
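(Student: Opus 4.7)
The plan is to prove the two statements separately, both leveraging the fact that polyhedrality of $L$ combined with the existence of a finite representative set $\Sc$ makes $\U = \{\prop{L}(p) : p \in \simplex\}$ a finite collection, and the monotonicity $U \subseteq U' \Rightarrow R_U \subseteq R_{U'}$, immediate from $R_U = \varphi^{-1}(U)$.

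For nonemptiness of $\Psi(u)$, fix $u \in \reals^k$ and note that by construction $\Psi(u) = \bigcap_{U \in \U_\epsilon(u)} R_U$, where $\U_\epsilon(u) := \{U \in \U : d(u, U) < \epsilon\}$. The strategy is to identify a minimum element $U_*(u)$ of $\U_\epsilon(u)$ under set inclusion: for then monotonicity yields $\Psi(u) = R_{U_*(u)}$, which is nonempty because $U_*(u) = \prop{L}(p_*)$ for some $p_*$ and $R_{U_*(u)} = \prop{\ell}(p_*) \cap \Sc$ is nonempty by representativity of $\Sc$. Such a minimum should exist provided any two members of $\U_\epsilon(u)$ are comparable; to guarantee this uniformly in $u$, I would choose $\epsilon$ strictly smaller than the minimum positive distance between $u$ and any cell whose closure is incompatible with the other cells close to $u$. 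Such an $\epsilon$ exists because $\U$ is finite and the polyhedral decomposition induced by $L$ has only finitely many boundary configurations.

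For consistency, fix $p \in \simplex$ and let $U^* = \prop{L}(p)$, so $R_{U^*} = \prop{\ell}(p) \cap \Sc$. It suffices to show $\inf_{u : \psi(u) \notin \prop{\ell}(p)} L(u; p) > \inf_u L(u; p)$. Suppose toward contradiction there is a sequence $u_n$ with $\psi(u_n) \notin \prop{\ell}(p)$ and $L(u_n; p) \to \inf_u L(u; p)$. Since $L(\cdot; p)$ is polyhedral with argmin set $U^*$, a standard argument splitting into bounded subsequences and recession-cone directions gives $d(u_n, U^*) \to 0$. For large $n$ then $d(u_n, U^*) < \epsilon$, so Definition~\ref{def:eps-thick-link} places $U^*$ in $\U_\epsilon(u_n)$ and forces $\Psi(u_n) \subseteq R_{U^*} = \prop{\ell}(p) \cap \Sc$. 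This yields $\psi(u_n) \in \prop{\ell}(p)$, a contradiction.

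The main obstacle is the nonemptiness step, specifically ensuring that a minimum element of $\U_\epsilon(u)$ exists uniformly over $u$ for a single sufficiently small $\epsilon$. Two cells of $\U$ that are incomparable under inclusion need not be metrically separated in any obvious way, so the proof must exploit the face-lattice structure of the polyhedral decomposition induced by $L$ to show that any point $u$ close to two cells $U_1, U_2$ must also be close to some common cell $U_3 \subseteq U_1 \cap U_2$ in $\U$. Making the uniform choice of $\epsilon$ precise therefore requires a careful combinatorial-metric analysis of $\U$, as carried out in \citet{finocchiaro2024embeddingJMLR}.
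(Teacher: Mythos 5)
First, note that this theorem is imported verbatim from \citet{finocchiaro2024embeddingJMLR} (their Theorems 17 and 18); the present paper gives no proof of it, so there is no in-paper argument to compare against. Judged on its own, your consistency half is essentially the right argument and matches the cited source: polyhedrality of $L(\cdot;p)$ gives linear growth away from its argmin set $U^*$ (a Hoffman-type bound), so any minimizing sequence satisfies $d(u_n,U^*)\to 0$; once $d(u_n,U^*)<\epsilon$ the construction forces $\Psi(u_n)\subseteq R_{U^*}=\prop{\ell}(p)\cap\Sc$, contradicting $\psi(u_n)\notin\prop{\ell}(p)$. That part is sound.

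The nonemptiness half has a genuine gap, and it is exactly where you flag uncertainty. Your strategy requires a minimum element $U_*(u)$ \emph{belonging to} $\U_\epsilon(u)$, and no uniform choice of $\epsilon$ can guarantee this: take two incomparable optimal sets $U_1,U_2\in\U$ (say two facets of the polyhedral complex meeting along a lower-dimensional face $U_3=U_1\cap U_2$). A point $u$ can satisfy $d(u,U_1)<\epsilon$ and $d(u,U_2)<\epsilon$ while $d(u,U_3)\geq\epsilon$; this configuration is scale-invariant near $U_3$, so shrinking $\epsilon$ never removes it, and $\U_\epsilon(u)$ then has no minimum element. The missing idea is that one should not look for a minimum \emph{inside} $\U_\epsilon(u)$ but instead use two facts: (a) $\U$ is closed under nonempty intersections, because $\prop{L}(\tfrac12 p+\tfrac12 p')=\prop{L}(p)\cap\prop{L}(p')$ whenever the right-hand side is nonempty, so $\bigcap\U_\epsilon(u)$, if nonempty, is itself some $\prop{L}(\bar p)$ and hence contains $\varphi(r)$ for some $r\in\Sc$ by representativity and the embedding property; and (b) by finiteness of $\U$ there is a uniform $\epsilon>0$ such that any subfamily of $\U$ whose members all come within $\epsilon$ of a common point has nonempty intersection (this is the ``separation'' lemma of \citet{finocchiaro2024embeddingJMLR}, and is the real content of the nonemptiness claim). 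With (a) and (b), monotonicity of $U\mapsto R_U$ gives $\emptyset\neq R_{\bigcap\U_\epsilon(u)}\subseteq\bigcap_{U\in\U_\epsilon(u)}R_U=\Psi(u)$ even though $\bigcap\U_\epsilon(u)$ need not lie in $\U_\epsilon(u)$. Without (a) and (b) your argument does not go through.
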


Essentially, this construction ``thickens'' around each potentially optimal set and ensures any surrogate prediction that is close to these regions must be linked to a representative prediction contained in that set.
One can consider $\Psi$ the resulting \emph{link envelope}, from which a calibrated link may be chosen pointwise.

To apply this construction to the Lov\'asz hinge $L^{\vec f}$, let $\Psi^{\vec{f}}$ be the envelope $\Psi$ from Definition~\ref{def:eps-thick-link} applied to $L^{\vec{f}}$.
We immediately encounter a complication: 
as the link envelope $\Psi^{\vec f}$ depends on the choice of $\vec{f}$, it is entirely possible that \emph{no single link function} is contained in the envelopes $\Psi^{\vec{f}}$ for all $\vec{f} \in \vec{\F_k}$, i.e., is simultaneously calibrated for $L^{\vec{f}}$ for \emph{all} such $\vec{f}$.
If no simultaneous link exists, the construction and analysis has to be tailored carefully to each $\vec{f}\in\vec{\F_k}$.
Interestingly, we show that such a simultaneous link does exist.

To find a link which is simultaneously calibrated for all $\vec{f}$, we identify certain structure which is common to all Lov\'asz hinges $L^{\vec{f}}$.
We encode this structure in a common link envelope $\hat \Psi$, and then show in Proposition~\ref{prop:psi-containment} that, for all $\vec{f} \in \vec{\F_k}$ and $u \in \reals^k$, we have $\hat \Psi(u) \subseteq \Psi^{\vec{f}}(u)$.
We then show that $\hat \Psi$ is nonempty for sufficiently small $\epsilon$, meaning it contains a link option pointwise.
This link is therefore contained in link envelope $\Psi^{\vec{f}}$ for all $\vec{f}$, which implies it is calibrated with respect to $\ellabsvecf$ for all $\vec{f} \in \vec{\F_k}$ simultaneously.

\subsection{The common link envelope $\hat \Psi$}\label{sec:calibrated-via-separation}

\begin{figure}[!]
	\centering
	\includegraphics[width=0.5\textwidth]{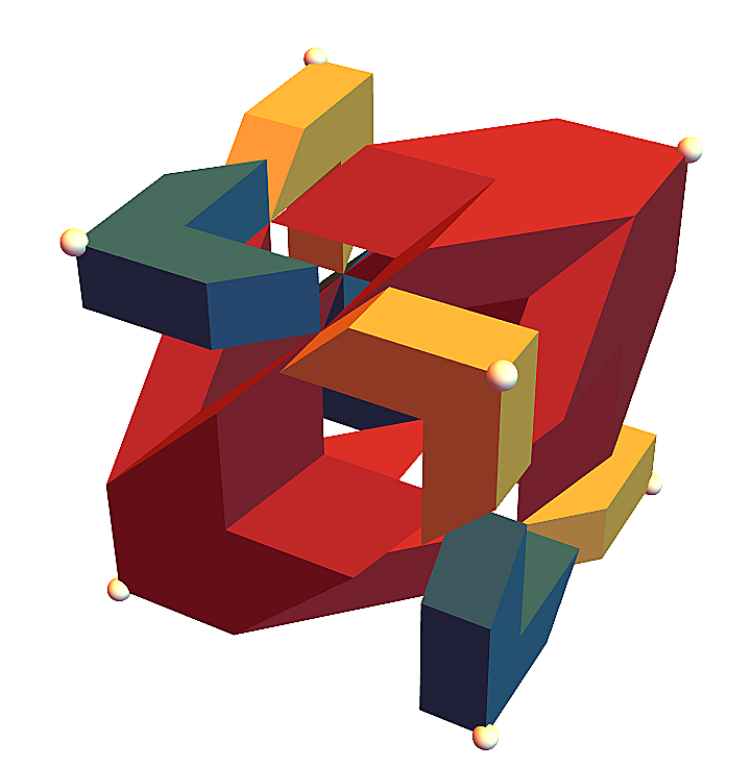}
  \caption{$\hat \Psi(u)$ for $u \in \reals_+^3$ and $\epsilon = \frac{1}{6}$. Each colored region connected to a particular node corresponds to a $v \in \{0,1 \}^3 \subseteq \V$ and at a point $u$, a calibrated link must link to one of the $v$ in the region.}
  		\label{fig:Psi-hat}
\end{figure}


We now present our link envelope $\hat\Psi$, used to construct calibrated links (Figure~\ref{fig:k2-links}, top left).

\begin{definition}\label{def:Psi-candidates}
  Let $\Vfaces := \cup_{\pi \in \Sc_k, y \in \Y} 2^{V_{\pi,y}}$ be the subsets of $\V$ whose convex hulls are faces of some $P_{\pi,y}$ polytope.
	Define $\hat \Psi:\reals^k \to 2^\V$ by $\hat\Psi(u) = \cap \{ V \in \Vfaces \mid d_\infty (\conv (V), \boxed u) < \epsilon\}$.
\end{definition}

Now we show that $\hat \Psi \subseteq \Psi^{\vec{f}}$ pointwise.
The proof uses the fact that both $\hat \Psi(u)$ and $\Psi^{\vec{f}}(u)$ are constructed by the intersections of sets, and shows that the sets generating $\hat \Psi(u)$ are subsets of those generating $\Psi^{\vec{f}}(u)$ for all $\vec{f} \in \vec{\F_k}$.
In particular, every possible optimal set in the range of $\prop{L^{\vec{f}}}$ is a union of faces generated by convex hulls of elements of $\Vfaces$.

\ignore{\begin{propositionc}\label{prop:psi-containment}\end{propositionc}}
\begin{restatable}{propositionc}{psicontainment}\label{prop:psi-containment}
  For all $\vec{f} \in \vec{\F_k}$ and $u\in\reals^k$, we have $\hat\Psi(u) \subseteq \Psi^{\vec{f}}(u)$.
\end{restatable}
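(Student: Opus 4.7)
The plan is to exploit the intersection structure of both envelopes. The containment $\hat\Psi(u) \subseteq \Psi^{\vec f}(u)$ reduces to showing that for every optimal set $U = \prop{L^{\vec f}}(p)$ with $\inf_{u^* \in U}\|u^* - u\|_\infty < \epsilon$, some qualifying $V_0$ from Definition~\ref{def:Psi-candidates} satisfies $V_0 \subseteq R_U$; the intersection over all such $U$ then contains the intersection over all qualifying $V$. When no such $U$ or $V$ exists, both envelopes equal $\V$ and the containment is trivial.

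Fix such a $U$. The first step is to transfer closeness from $u$ to $\boxed u$ and from $U$ to $U \cap [-1,1]^k$. Lemma~\ref{lem:lovasz-hypercube-dominates} implies that if $u^* \in U$ minimizes $L^{\vec f}(\cdot; p)$, then so does $\boxed{u^*}$, since coordinatewise clipping cannot increase the loss; hence $\boxed{u^*} \in U \cap [-1,1]^k$. Because clipping to the hypercube is non-expansive in $\ell_\infty$, the bound $\|u^* - u\|_\infty < \epsilon$ gives $\|\boxed{u^*} - \boxed u\|_\infty < \epsilon$, so $d_\infty(U \cap [-1,1]^k, \boxed u) < \epsilon$.

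The second step is to identify a face close to $\boxed u$. By Lemma~\ref{lem:p-pi-y}, the expected loss $L^{\vec f}(\cdot; p) = \sum_y p_y L^{\vec f}(\cdot, y)$ is affine on each $P_{\pi, y}$, so Lemma~\ref{lem:UcapR-union-faces} applied with $\C = \{P_{\pi, y} \mid \pi \in \Sc_k,\, y \in \Y\}$ yields $U \cap [-1,1]^k = \bigcup \mathfrak{F}$ for some collection $\mathfrak{F}$ of faces of these polytopes. Since each $P_{\pi, y}$ is a simplex with vertex set $V_{\pi, y} \subseteq \V$, every face in $\mathfrak{F}$ has the form $\conv(V)$ for some $V \in \Vfaces$. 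From the distance bound in the previous step, at least one such $V_0$ then satisfies $\conv(V_0) \in \mathfrak{F}$ with $d_\infty(\conv(V_0), \boxed u) < \epsilon$.

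The conclusion then falls out: $V_0$ qualifies in Definition~\ref{def:Psi-candidates}, so $\hat\Psi(u) \subseteq V_0$; and since $\conv(V_0) \subseteq U$ has all of its vertices in $\V$, we have $V_0 \subseteq U \cap \V = R_U$, giving $\hat\Psi(u) \subseteq R_U$ as desired. The main obstacle is the transfer to $\boxed u$: one must verify both that clipping preserves membership in the optimal set $U$ (which relies on Lemma~\ref{lem:lovasz-hypercube-dominates} combined with the optimality of $u^*$) and that clipping is non-expansive in the norm used to define the $\epsilon$-thickening, so that the affine decomposition of \S~\ref{sec:affine-decomposition}, which only holds on $[-1,1]^k$, can be brought to bear on the distance condition.
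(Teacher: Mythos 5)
Your proof is correct and follows essentially the same route as the paper's: reduce the containment of intersections to finding, for each qualifying optimal set $U$, a face $\conv(V_0)$ with $V_0 \in \Vfaces$, $V_0 \subseteq U \cap \V$, and $d_\infty(\conv(V_0), \boxed u) < \epsilon$, via Lemmas~\ref{lem:p-pi-y} and~\ref{lem:UcapR-union-faces}. The only difference is cosmetic: you inline the transfer from $(U,u)$ to $(U\cap[-1,1]^k,\boxed u)$ using the non-expansiveness of coordinatewise clipping, which the paper isolates as Lemma~\ref{lem:truncated-u-also-close} and proves by a lengthier case analysis.
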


We now characterize the link envelope $\hat \Psi(u)$ in terms of the coordinates of $u$.
In particular, $\hat \Psi(u)$ consists of the embedding points $v\in\V$ that make up the intersection of the faces from $\Vfaces$ that are $\epsilon$ close to $u$.
We can express these embedding points in terms of the ordered elements of $|u|$.
In particular, such a point $v\in\V$ appears in the intersection exactly when the corresponding elements of $|u|$ are $2\epsilon$-far from each other, since otherwise we can find a face not containing $v$ which is $\epsilon$-close to $u$ (Proposition~\ref{prop:hat-psi-char}).

Therefore, $\Psi$ is always nonempty when $\epsilon$ is small enough to guarantee a gap of at least $2\epsilon$ in the ordered elements of $|u|$ (Lemma~\ref{lem:hat-Psi-nonempty}).

\begin{restatable}{propositionc}{hatpsichar}\label{prop:hat-psi-char}
  Let $u\in\reals^k$, and let $\pi\in\Sc_k$ order the elements of $|u|$ (descending), and define $|u_{\pi_0}| = 1+\epsilon$ and $|u_{\pi_{k+1}}| = -\epsilon$.
  Then we have
  \begin{equation}\label{eq:hat-psi-char}
    \hat\Psi(u) = \{\onespi{\pi}{i} \odot \signstar(u) \mid i\in\{0,1,\ldots,k\}, \; |u_{\pi_i}| \geq |u_{\pi_{i+1}}| + 2\epsilon \}
  \end{equation}
\end{restatable}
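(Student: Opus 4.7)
My plan is to verify eq.~\eqref{eq:hat-psi-char} by a normalizing reduction followed by double inclusion. Since $\hat\Psi(u)$ depends on $u$ only through $\boxed u$, I assume without loss of generality that $u\in[-1,1]^k$; the $\odot$-symmetry of $\Vfaces$ further lets me replace $u$ by $u\odot\signstar(u)$ and reduce to $\signstar(u)=\ones$, i.e.\ $u\in[0,1]^k$ with $u_{\pi_1}\ge\cdots\ge u_{\pi_k}\ge 0$. Under this normalization, set $v_i:=\onespi{\pi}{i}$ and let $\alpha_0:=1-u_{\pi_1}$, $\alpha_i:=u_{\pi_i}-u_{\pi_{i+1}}$ for $1\le i\le k-1$, and $\alpha_k:=u_{\pi_k}$, so the stated gap condition reads $\alpha_0,\alpha_k\ge\epsilon$ and $\alpha_i\ge 2\epsilon$ for $1\le i\le k-1$.

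Forward direction $(\subseteq)$: Since $V_{\pi,\ones}\in\Vfaces$ contains $u$ in its convex hull, it appears among the sets intersected in Definition~\ref{def:Psi-candidates}, so $\hat\Psi(u)\subseteq V_{\pi,\ones}=\{v_0,\ldots,v_k\}$. If the gap fails at $v_i$, I exhibit the face $V':=V_{\pi,\ones}\setminus\{v_i\}\in\Vfaces$ and compute $d_\infty(\conv V',u)$: the constraint $\mu_i=0$ forces $w_{\pi_i}=w_{\pi_{i+1}}$ (for $1\le i\le k-1$), $w_{\pi_1}=1$ (for $i=0$), or $w_{\pi_k}=0$ (for $i=k$), so optimizing the remaining coordinates (set $w_{\pi_j}=u_{\pi_j}$ for free $j$) yields distances $\alpha_i/2$, $\alpha_0$, and $\alpha_k$ respectively. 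Each is $<\epsilon$ when the gap fails, contradicting $v_i\in\hat\Psi(u)$.

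Reverse direction $(\supseteq)$: Fix $v_i$ satisfying the gap condition and any $V'\subseteq V_{\pi',y'}$ in $\Vfaces$ with a witness $w=\sum_\ell\mu_\ell(\onespi{\pi'}{j_\ell}\odot y')\in\conv V'$ satisfying $\|w-u\|_\infty<\epsilon$. Setting $s(j):=\sum_{\ell:j_\ell\ge(\pi')^{-1}(j)}\mu_\ell$ so that $w_j=y'_j s(j)$, I use two elementary bounds: (a) $y'_j=-1$ forces $s(j)+u_j<\epsilon$, so $u_j<\epsilon$; (b) $y'_j=+1$ gives $|s(j)-u_j|<\epsilon$. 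For $1\le i\le k-1$, the gap $u_{\pi_i}-u_{\pi_{i+1}}\ge 2\epsilon$ yields $s(j)>u_{\pi_{i+1}}+\epsilon$ for every $j$ with $\pi^{-1}(j)\le i$ (via (b) applied with $u_j\ge u_{\pi_i}\ge 2\epsilon$) and $s(j)<u_{\pi_{i+1}}+\epsilon$ for every $j$ with $\pi^{-1}(j)\ge i+1$ (via either (a) or (b), using $u_j\le u_{\pi_{i+1}}$). Since $s$ is decreasing along $\pi'$, this separation forces $\{\pi'_1,\ldots,\pi'_i\}=\{\pi_1,\ldots,\pi_i\}$ together with the strict drop $s(\pi'_i)>s(\pi'_{i+1})$, hence $i\in\{j_\ell\}$ and $v_i=\onespi{\pi'}{i}\odot y'\in V'$. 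The endpoints reduce to one-coordinate contradictions: $v_0\notin V'$ forces $s(\pi'_1)=1$, hence $|w_{\pi'_1}|=1$, incompatible with $u_{\pi_1}\le 1-\epsilon$; $v_k\notin V'$ forces $s(\pi'_k)=0$, hence $w_{\pi'_k}=0$, incompatible with $u_{\pi_k}\ge\epsilon$.

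The principal obstacle is the reverse direction: $V'$ may lie in any simplex $V_{\pi',y'}$, not only in $V_{\pi,\signstar(u)}$, so I must extract a structural constraint on $(\pi',y')$ purely from $\|w-u\|_\infty<\epsilon$. The technical crux is that coordinates $j$ with $u_j<\epsilon$ allow either sign $y'_j$; but both sign choices still give $s(j)<u_{\pi_{i+1}}+\epsilon$, which is what lets the $\pi'$-ordering analysis close the argument.
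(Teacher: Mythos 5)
Your proposal is correct, and its skeleton matches the paper's: reduce to nonnegative, ordered $u$ via the signed-permutation symmetry of $\hat\Psi$ (the paper's Lemma~\ref{lem:hat-Psi-symmetry-y}), then prove the two inclusions of eq.~\eqref{eq:hat-psi-char} separately. Your forward direction is the same argument as the paper's in different clothing: the paper perturbs $u$ to the midpoint $\hat u$ with $\hat u_{\pi_i}=\hat u_{\pi_{i+1}}=(u_{\pi_i}+u_{\pi_{i+1}})/2$ and observes $\hat u\in\conv(V_{\pi,y}\setminus\{\onespi{\pi}{i}\})$, which is exactly your computation that $d_\infty(\conv V',u)=\alpha_i/2<\epsilon$ (your explicit treatment of the endpoint faces $i=0,k$ is a small improvement, as the paper only writes out the interior case). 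Where you genuinely diverge is the reverse inclusion, and there your argument is the stronger one: the paper shows only that any $u'$ with $d_\infty(u,u')<\epsilon$ has $\alpha_i(u')\neq 0$ and then appeals to Lemma~\ref{lem:smallest-vert-rep}, which concerns barycentric coordinates within the \emph{single} simplex $P_{\pi,y}$ containing $u$; it never rules out a nearby face drawn from a different simplex $V_{\pi',y'}$ that omits $\onespi{\pi}{i}\odot\signstar(u)$. Your cumulative-weight function $s(j)$ and the separation $s(j)\gtrless u_{\pi_{i+1}}+\epsilon$ across the gap close exactly this hole, forcing $\{\pi'_1,\ldots,\pi'_i\}=\{\pi_1,\ldots,\pi_i\}$, $y'=\ones$ on that set, and a strict drop at position $i$, hence membership of $v_i$ in \emph{every} qualifying $V'\in\Vfaces$. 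One shared caveat worth flagging: the right-hand side of eq.~\eqref{eq:hat-psi-char} is not invariant under clipping (e.g.\ $u=(5,2)$ versus $\boxed u=(1,1)$ give different gap patterns), so your opening ``WLOG $u\in[-1,1]^k$'' — like the paper's restriction to $u\in\reals^k_+$ — really proves the statement with $|u|$ read as $|\boxed u|$; this is a defect of the proposition's phrasing rather than of your proof, but you should state the convention explicitly.
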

\begin{restatable}{lemmac}{hatPsinonempty}\label{lem:hat-Psi-nonempty}
  $\hat\Psi$ is nonempty pointwise
  if and only if $\epsilon \in (0, \tfrac 1 {2k}]$.
\end{restatable}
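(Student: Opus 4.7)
The plan is to reduce the nonemptiness question to a pigeonhole argument on the $k+1$ consecutive gaps between the ordered magnitudes $|u_{\pi_i}|$, using the explicit description of $\hat\Psi$ from Proposition~\ref{prop:hat-psi-char}. Since $\hat\Psi$ is defined relative to $\boxed u \in [-1,1]^k$, we may assume $|u_{\pi_i}| \in [0,1]$ for $i \in \{1,\ldots,k\}$. By Proposition~\ref{prop:hat-psi-char}, $\hat\Psi(u)$ is nonempty if and only if some index $i \in \{0,1,\ldots,k\}$ satisfies $|u_{\pi_i}| - |u_{\pi_{i+1}}| \geq 2\epsilon$, where we recall the conventions $|u_{\pi_0}| = 1+\epsilon$ and $|u_{\pi_{k+1}}| = -\epsilon$. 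These $k+1$ gaps are nonnegative and telescope to $(1+\epsilon) - (-\epsilon) = 1+2\epsilon$.

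For the ``if'' direction, suppose $\epsilon \in (0, \tfrac{1}{2k}]$. By an averaging argument, the largest gap is at least $\tfrac{1+2\epsilon}{k+1}$. A short computation shows $\tfrac{1+2\epsilon}{k+1} \geq 2\epsilon \iff 1 \geq 2k\epsilon$, which holds by hypothesis. Thus some gap meets the $2\epsilon$ threshold and $\hat\Psi(u) \neq \emptyset$ for every $u\in\reals^k$.

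For the ``only if'' direction, assume $\epsilon > \tfrac{1}{2k}$; I construct an explicit $u$ for which every gap is strictly less than $2\epsilon$, forcing $\hat\Psi(u) = \emptyset$. The natural attempt is to equally space the $|u_{\pi_i}|$'s so that each of the $k+1$ gaps equals $\tfrac{1+2\epsilon}{k+1}$, which is strictly less than $2\epsilon$ precisely when $\epsilon > \tfrac{1}{2k}$. The main obstacle is that this construction only respects $|u_{\pi_i}| \in [0,1]$ when $\epsilon \leq \tfrac{1}{k-1}$, as is verified by plugging $i=1$ and $i=k$ into the closed-form expression. For $\epsilon > \tfrac{1}{k-1}$, I would instead pin $|u_{\pi_1}| = 1$ and $|u_{\pi_k}| = 0$, equally spacing the $k-2$ interior values. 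This makes each interior gap equal to $\tfrac{1}{k-1}$, and the two boundary gaps $|u_{\pi_0}| - |u_{\pi_1}|$ and $|u_{\pi_k}| - |u_{\pi_{k+1}}|$ both equal $\epsilon$, all of which are strictly less than $2\epsilon$ in this regime.

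Combining the two subcases of the ``only if'' direction yields an emptiness-producing $u$ for every $\epsilon > \tfrac{1}{2k}$, completing the characterization. The conceptual difficulty is essentially the pigeonhole bound, and the only real bookkeeping is verifying the endpoint constraints $|u_{\pi_i}| \in [0,1]$, which cleanly splits the range of $\epsilon$ at $\tfrac{1}{k-1}$.
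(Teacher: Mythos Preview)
Your proof is correct and follows essentially the same approach as the paper: invoke Proposition~\ref{prop:hat-psi-char}, telescope the $k+1$ gaps to $1+2\epsilon$, apply pigeonhole for the ``if'' direction, and exhibit an explicit $u$ for the ``only if'' direction. The only minor difference is in the counterexample for $\epsilon > \tfrac{1}{2k}$: the paper places the ordered magnitudes at the fixed points $\tfrac{2(k-i)+1}{2k}$, which always lie in $(0,1)$ and yield interior gaps $\tfrac{1}{k}$ and boundary gaps $\epsilon + \tfrac{1}{2k}$, so a single construction works for all $\epsilon > \tfrac{1}{2k}$ and your case split at $\epsilon = \tfrac{1}{k-1}$ is unnecessary.
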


\subsection{Family of calibrated link functions from $\hat \Psi$}

We now present a family of link functions for which each link paired with $L^{\vec{f}}$ is calibrated for  $\ellabsvecf$ regardless of the underlying $\vec{f}\in\vec{\F_k}$ satisfying Condition \ref{cond:inconsistency}. 
A link is selected from the said family of link functions by choosing a threshold $\tau \in [0,1]$.
The frequency of abstaining with a link from our proposed family is a function of the value of the chosen $\tau$, as demonstrated in Figure~\ref{fig:k2-links}.

\begin{definition}[Threshold-abstain link]\label{def:threshold-abstain-link}
  Fix $\epsilon \in (0, 1/2k]$ and $\tau \in [0,1]$. 
  Given any $u\in\reals^k$, take $\pi$ ordering the elements of $|u|$. 
  Moreover, let $\Iue = \{i : |u_{\pi_i}| - |u_{\pi_{i+1}}| \geq 2\epsilon \}$ where we define $|u_{\pi_0}|=1+\epsilon$ and $|u_{\pi_{k+1}}|=-\epsilon$.
  Take $i_{\tau} \in \argmin_{i \in \Iue}| \tau - \frac{|u_{\pi_i}| +|u_{\pi_{i+1}}|}{2} |$, breaking ties arbitrarily. 
  Then we define the \emph{threshold-abstain link} by
  \begin{equation}
    \label{eq:threshold-abstain-link}
    \psithresh(u) = \ones_{\{\pi_1, \ldots, \pi_{i_{\tau}}\}} \odot \sign(\boxed{u})~.
  \end{equation}
\end{definition}

\begin{theorem}\label{thm:well-defined-cal}
  Let $\epsilon \in (0, 1/2k]$, and fix any $\vec{f}\in\vec{\F_k}$.
  Then
  $(L^{\vec{f}}, \psithresh)$ is well-defined and calibrated with respect to $\ellabsvecf$.
\end{theorem}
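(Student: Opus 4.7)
The plan is to apply Theorem~\ref{thm:embed-implies-consistent} by showing that $\psithresh$ is pointwise contained in the link envelope $\Psi^{\vec f}$ for every $\vec f \in \vec\F_k$. I will accomplish this indirectly via the common link envelope $\hat\Psi$ of Definition~\ref{def:Psi-candidates}: first verify that $\psithresh$ is well-defined, then show $\psithresh(u) \in \hat\Psi(u)$ pointwise, and finally invoke Proposition~\ref{prop:psi-containment} together with Theorem~\ref{thm:embed-implies-consistent}.

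For well-definedness, I would argue that $\Iue \neq \emptyset$ for every $u \in \reals^k$. By the characterization in Proposition~\ref{prop:hat-psi-char}, the set $\Iue$ is precisely the index set enumerating $\hat\Psi(u)$. Since $\epsilon \in (0, 1/2k]$, Lemma~\ref{lem:hat-Psi-nonempty} guarantees $\hat\Psi(u) \neq \emptyset$, hence $\Iue$ is nonempty. This makes the argmin in Definition~\ref{def:threshold-abstain-link} nonempty, and since we break ties deterministically, $i_\tau$ and hence $\psithresh(u)$ is well-defined as a function of $u$.

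To show containment $\psithresh(u) \in \hat\Psi(u)$, I will match the explicit form given by Proposition~\ref{prop:hat-psi-char}: $\hat\Psi(u) = \{\onespi{\pi}{i} \odot \signstar(u) \mid i \in \Iue\}$, where $\pi$ orders $|u|$ in descending order. The threshold-abstain link outputs $\onespi{\pi}{i_\tau} \odot \sign(\boxed u)$, so the only thing to verify is that on the nonzero coordinates of $\onespi{\pi}{i_\tau}$, namely $\pi_1, \ldots, \pi_{i_\tau}$, we have $\sign(\boxed u)_{\pi_j} = \signstar(u)_{\pi_j}$. Since $i_\tau \in \Iue$, the gap condition gives $|u_{\pi_{i_\tau}}| \geq |u_{\pi_{i_\tau+1}}| + 2\epsilon \geq -\epsilon + 2\epsilon = \epsilon > 0$, so by the ordering each $u_{\pi_j}$ with $j \leq i_\tau$ is strictly nonzero. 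At nonzero entries, $\sign(u)$, $\signstar(u)$, and $\sign(\boxed u)$ all agree, giving $\psithresh(u) = \onespi{\pi}{i_\tau} \odot \signstar(u) \in \hat\Psi(u)$.

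With pointwise containment in $\hat\Psi$ established, Proposition~\ref{prop:psi-containment} yields $\hat\Psi(u) \subseteq \Psi^{\vec f}(u)$ for every $u$ and every $\vec f \in \vec\F_k$, so $\psithresh$ is pointwise contained in $\Psi^{\vec f}$. Theorem~\ref{thm:embed-implies-consistent} then delivers consistency of $(L^{\vec f}, \psithresh)$ with respect to $\ellabsvecf$, which in our polyhedral setting is equivalent to calibration. The main obstacle is comparatively modest and amounts to aligning sign conventions between $\sign(\boxed u)$ (used in the link definition) and $\signstar(u)$ (used in the characterization of $\hat\Psi$); the $2\epsilon$-gap condition baked into $\Iue$ is precisely what forces the ``active'' coordinates to be strictly nonzero, making all three sign functions coincide there.
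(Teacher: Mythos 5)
Your proposal is correct and takes essentially the same route as the paper: well-definedness via Lemma~\ref{lem:hat-Psi-nonempty}, pointwise containment $\psithresh(u)\in\hat\Psi(u)$, then Proposition~\ref{prop:psi-containment} and Theorem~\ref{thm:embed-implies-consistent}. The only difference is that you explicitly verify the containment step (including the sign-convention alignment via the $2\epsilon$-gap), which the paper dispatches with ``by construction.''
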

\vskip-4pt
\begin{proof}
  Lemma~\ref{lem:hat-Psi-nonempty} shows that the index $i_{\tau}$ in Definition~\ref{def:threshold-abstain-link} always exist when
  $\epsilon \in (0,\frac{1}{2k}]$,
  which shows that $\psithresh$ is well-defined.
  By construction, we have $\psithresh(u)\in \hat \Psi (u)$  for all $u\in\reals^k$.
	As Proposition~\ref{prop:psi-containment} states that $\hat \Psi \subseteq \Psi^{\vec{f}}$ pointwise, we then have $\psithresh \in \Psi^{\vec{f}}$ pointwise.
  Finally, Theorem~\ref{thm:embed-implies-consistent} states that any link function contained in $\Psi^{\vec{f}}$ pointwise is calibrated.
\end{proof}


\begin{figure}[t]
\centering
	\begin{minipage}[b]{0.3\textwidth}
	\centering
	\includegraphics[width=\textwidth]{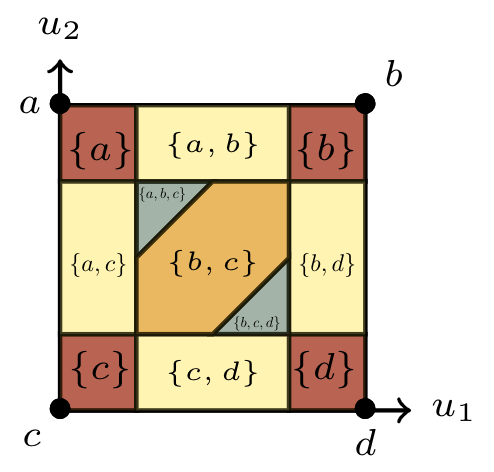}
\end{minipage}
\hfill
  \begin{minipage}[b]{0.3\textwidth}
    \includegraphics[width=\textwidth]{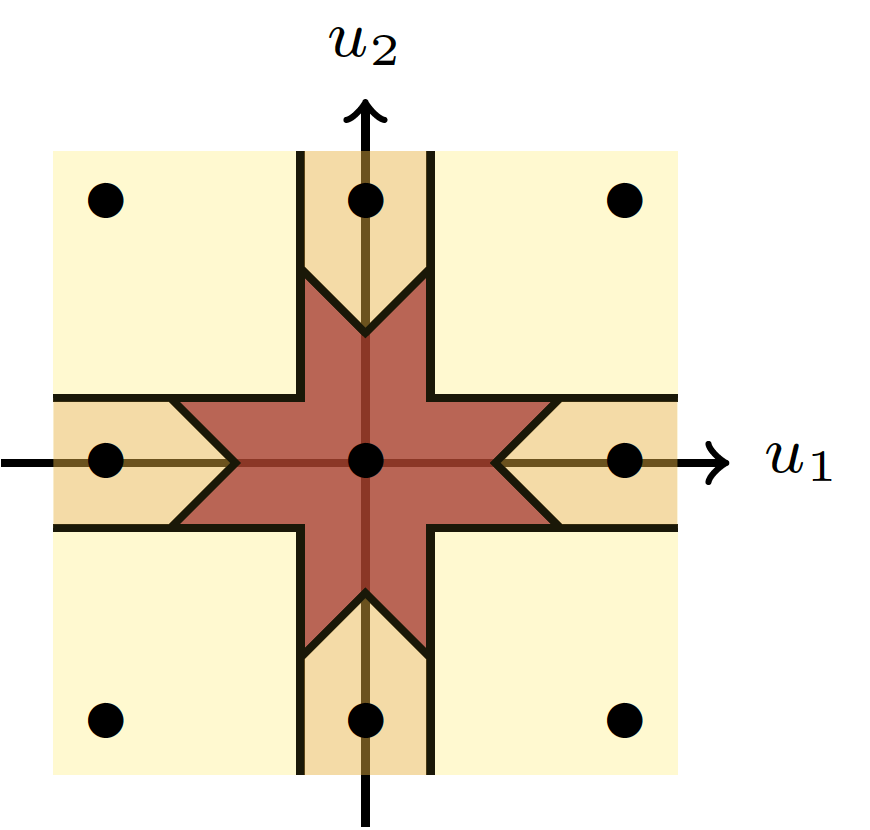}
  \end{minipage}
\hfill
  \begin{minipage}[b]{0.3\textwidth}
    \includegraphics[width=\textwidth]{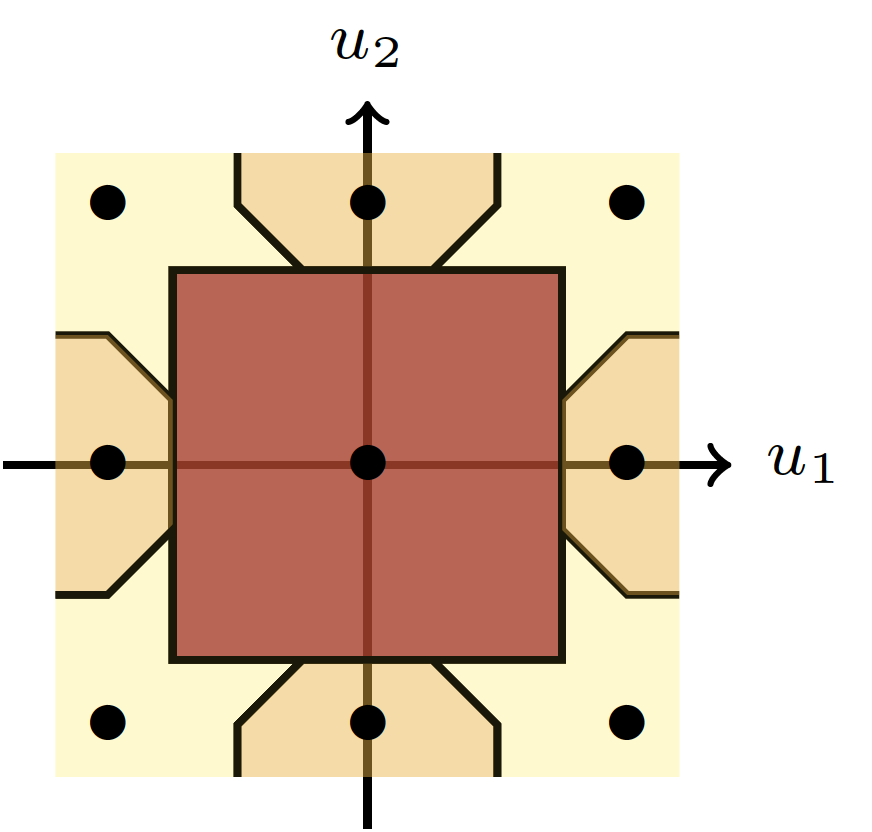}
    \end{minipage}
\vfill 
	\begin{minipage}[b]{0.3\textwidth}
	\centering
	\includegraphics[width=\textwidth]{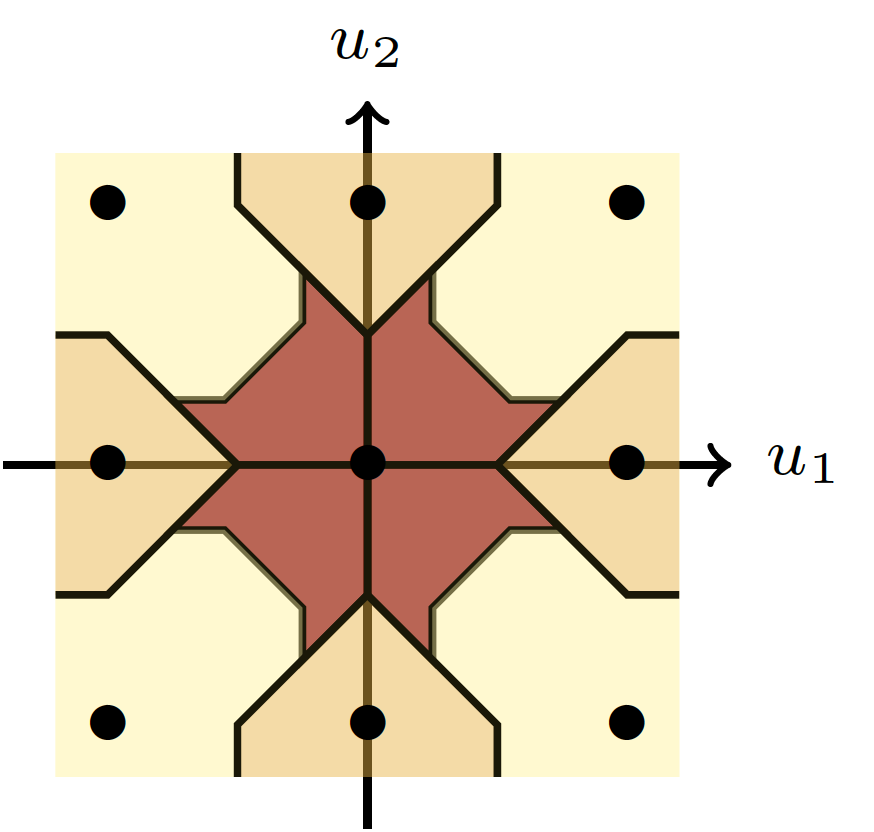}
\end{minipage}
\hfill
  \begin{minipage}[b]{0.3\textwidth}
    \includegraphics[width=\textwidth]{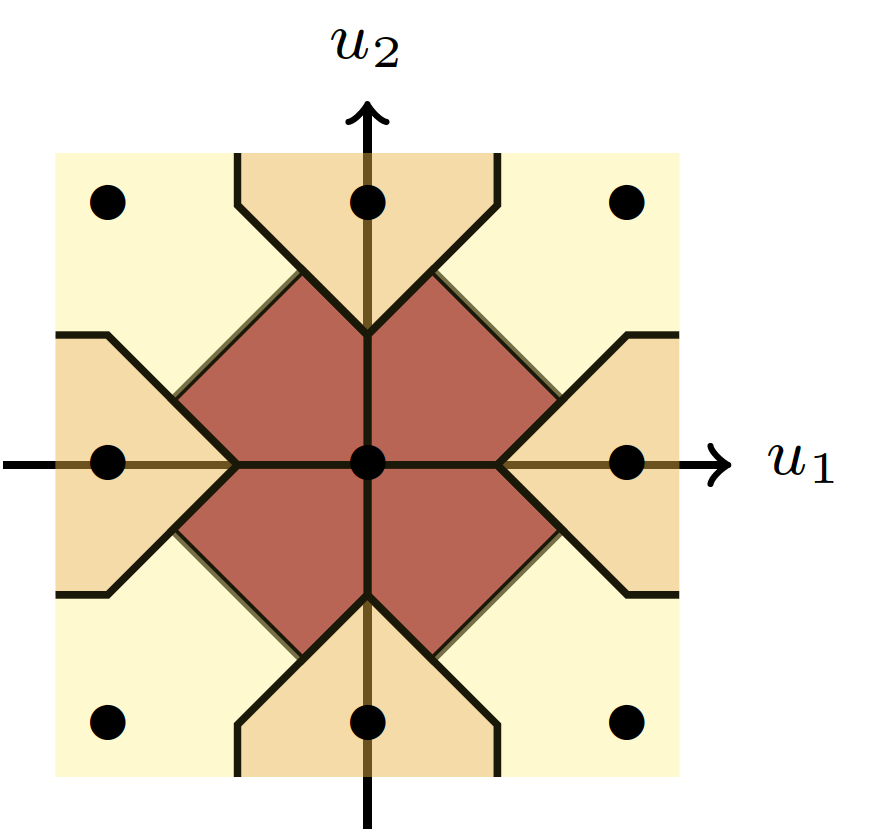}
  \end{minipage}
\hfill
  \begin{minipage}[b]{0.3\textwidth}
    \includegraphics[width=\textwidth]{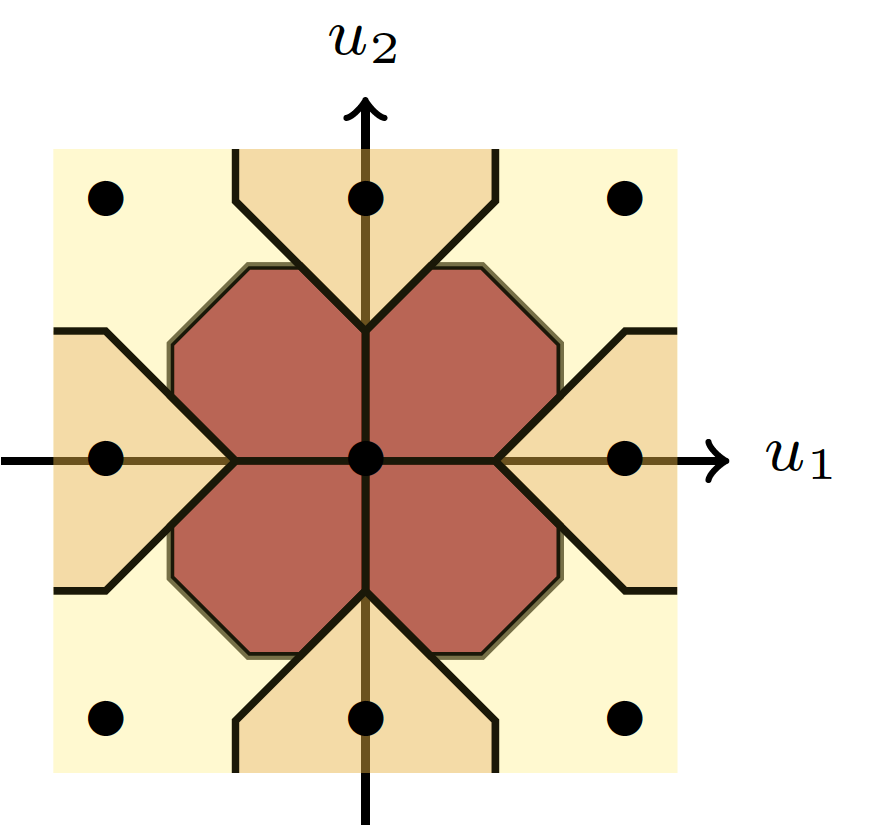}
  \end{minipage}
  \caption{The link envelope $\hat\Psi$ (top left) constructed with respect to $\|\cdot\|_\infty$ and link functions $\psithresh$ where $\tau =0$ (top middle), $\tau =1$ (top right), $\tau =.45$ (bottom left), $\tau =.5$ (bottom middle), $\tau =.55$ (bottom right) for $k=2$ and $\epsilon=\frac{1}{4}$.
    The envelope $\hat \Psi$ is pictured for $u \in \reals_+^2$, with each region labeled by the value of $\hat\Psi$; a link is calibrated if it always links to one of the nodes in the region.
    The values for the link functions $\psithresh$ are given by the unique point $v \in \V$ that each depicted region contains.
    In particular, the link $\psithresh$ for all $\tau \in [0,1]$ satisfy the constraints from $\hat\Psi$ (top left) and thus are calibrated.}
\label{fig:k2-links}
\end{figure}

\section{Tightness of the embedding}
\label{sec:tightness}

Recall that an embedding is called \emph{tight} if its representative set $\Sc$ has the minimum cardinality of any representative set~\citep{finocchiaro2022embedding}.
Tightness is significant because it implies that one has captured the minimal target loss consistent for a given surrogate, having removed all dominated predictions.
Indeed, the minimal such target loss is unique up to relabeling predictions.

In this section, we will see that some predictions of $\ellabs$ are actually dominated: namely, those which abstain on exactly one coordinate.
After removing these predictions, however, we show that the embedding is tight when the $f$ is strictly submodular and strictly increasing.
It appears from the proof that both of these assumptions on $f$ are required, though we leave open whether that is indeed the case.

\begin{theorem}\label{thm:lovasz-embed-tight}
  Let $f\in\F_k$ be strictly submodular and strictly increasing.
  Let $\V_0 = \{ v \in \V \mid \|v\|_0 \neq k-1 \}$, i.e., the set of points in $\V$ that do not have only one zero coordinate.
  Then $L^f$ tightly embeds $\ellabs|_{\V_0}$.
\end{theorem}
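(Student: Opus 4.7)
The plan is to unpack tightness into two assertions: (i) $\V_0$ is representative for $\prop{L^f}$, whence $L^f$ embeds $L^f|_{\V_0} = \ellabs|_{\V_0}$ by Theorem~\ref{thm:poly-embeds-discrete}; and (ii) no strict subset of $\V_0$ is representative, which amounts to exhibiting, for each $v \in \V_0$, a distribution under which $v$ is strictly optimal among $\V_0$.

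For (i), I would show every $v \in \V \setminus \V_0$ is dominated. If $\abs(v) = \{j\}$, define $v^+, v^- \in \V_0$ by replacing the single zero at $j$ with $\pm 1$. A direct computation from eq.~\eqref{eq:lovasz-embeds} gives
\[
\ellabs(v, y) = \tfrac12 \ellabs(v^+, y) + \tfrac12 \ellabs(v^-, y) \qquad \forall\, y \in \Y,
\]
since for exactly one of $v^\pm$ the misprediction set equals $\mis(v,y) \setminus \{j\}$ while for the other it equals $\mis(v,y)$, and both predictions incur twice the loss on their misprediction set. Hence $\min(\ellabs(v^+; p), \ellabs(v^-; p)) \le \ellabs(v; p)$ for every $p$, so $v$ is dominated and Lemma~\ref{lemma:safe-link} allows its removal from the representative set.

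For (ii), the natural witness for $v \in \V_0$ with $A = \abs(v)$ and $S = [k] \setminus A$ is the distribution $p_v$ uniform over $\{y \in \Y : y_S = v_S\}$; this interpolates between $\delta_v$ (when $v \in \Y$) and $\bar p$ (when $v = \vec 0$). A short symmetry calculation gives $\ellabs(v; p_v) = f(A)$. For any $v' \in \V_0$ with $A' = \abs(v')$, I would partition $[k]$ into four commit/abstain cells $P_{ab} = \{i : \mathbbm{1}[v_i=0]=a,\, \mathbbm{1}[v'_i=0]=b\}$, let $D \subseteq P_{00}$ index coordinates where $v'$ disagrees with $v$, and rewrite
\[
\ellabs(v'; p_v) = 2^{-|P_{10}|} \sum_{T \subseteq P_{10}} \bigl[f(D \cup T) + f(D \cup T \cup A')\bigr].
\]
Pairing each $T$ with $P_{10}\setminus T$ and applying submodularity to $D \cup T$ and $D \cup (P_{10}\setminus T) \cup A'$, whose union is $A \cup D \cup P_{01}$ and intersection is $D$, yields
\[
\ellabs(v'; p_v) \;\geq\; f(A \cup D \cup P_{01}) + f(D) \;\geq\; f(A).
\]
To promote this to strict when $v' \neq v$, I would split on cases: if $D \cup P_{01} \neq \emptyset$, then since $D \cup P_{01} \subseteq S$ is disjoint from $A$, strict monotonicity forces $f(A \cup D \cup P_{01}) > f(A)$; otherwise $v' = v$ on $\mathrm{supp}(v)$ and $A' \subseteq A$, so $v' \neq v$ forces $P_{10} = A \setminus A'$ to be nonempty, at which point strict submodularity applied to any nonempty $T \subsetneq P_{10}$ (with $A' \neq \emptyset$, or with $T \neq P_{10}$ when $A' = \emptyset$) makes at least one pair inequality strict.

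The main obstacle will be the case analysis in (ii): one must verify the pair inequality over every configuration of $v'$ relative to $v$, including $v' \in \Y$, $v' = \vec 0$, and the ``refinement'' subcase $A' \subsetneq A$ with $v' = v$ on $\mathrm{supp}(v)$. Both assumptions on $f$ are genuinely needed and act in complementary sub-cases: strict submodularity rules out refinement configurations (where $v'$ and $v$ differ only on $A$), while strict increasing rules out configurations where $v'$ departs from $v$ on $S$. I expect the representative-set argument and the symmetry computation of $\ellabs(v;p_v)$ to be straightforward; essentially all the work sits in the pair-inequality bookkeeping.
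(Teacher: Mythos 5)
Your proposal is correct and follows essentially the same route as the paper: the same averaging identity to show single-abstain predictions are dominated, the same witness distribution (uniform signs on $\abs(v)$, agreeing with $v$ elsewhere) giving expected loss $f(A)$, and the same complement-pairing submodularity trick, with strict monotonicity and strict submodularity covering complementary configurations of $v'$. The one framing caveat is that tightness requires $\V_0$ to lie inside \emph{every} representative subset of $\V$ (not merely that no proper subset of $\V_0$ is representative), but your argument in fact delivers this, since the strict-optimality computation applies verbatim to all $v'\in\V\setminus\{v\}$ and not only to $v'\in\V_0$.
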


The proof is split into the following two lemmas.
The first shows that $\V_0$ is indeed representative.
The second shows that $\V_0$ is a subset of any representative set, and is therefore a minimal representative set.
The result then follows from the fact that $L^f$ embeds $\ellabs$.

\begin{lemma}
  \label{lem:cant-abstain-on-just-one}
  The set $\V_0$ is representative for $\ellabs$ for any $f\in\F_k$.
\end{lemma}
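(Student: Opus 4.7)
The plan is to prove the lemma by showing that every $v\in\V\setminus\V_0$, i.e., every prediction abstaining on exactly one coordinate, is weakly dominated by two fully non-abstaining predictions in $\Y\subseteq\V_0$. The key will be a pointwise averaging identity that holds for any polymatroid $f$.

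First, I would fix $p\in\simplex$ and a potentially optimal $v\in\prop{\ellabs}(p)$. If $v\in\V_0$ we are done, so suppose $v$ abstains on exactly one coordinate $i^*\in[k]$, i.e.\ $\abs(v)=\{i^*\}$. Construct $v^+,v^-\in\Y$ by $v^\pm_j=v_j$ for $j\neq i^*$ and $v^\pm_{i^*}=\pm 1$; both lie in $\V_0$ since they have no zero coordinates.

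Next I would compute both sides of the target identity for an arbitrary fixed label $y\in\Y$. Let $S^-=\{i\in[k]\setminus\{i^*\} \mid v_i y_i<0\}$. Since $v_{i^*}=0$, we have $\mis(v,y)\setminus\abs(v)=\{v\odot y<0\}=S^-$ and $\mis(v,y)=\{v\odot y\leq 0\}=S^-\cup\{i^*\}$, so
\begin{equation*}
\ellabs(v,y)=f(S^-)+f(S^-\cup\{i^*\})~.
\end{equation*}
For $v^\pm$, the abstain set is empty, so $\ellabs(v^\pm,y)=2f(\mis(v^\pm,y))$. Splitting on $y_{i^*}$, exactly one of $v^+,v^-$ matches $y$ at $i^*$, giving $\mis(v^+,y)$ and $\mis(v^-,y)$ equal to $S^-$ and $S^-\cup\{i^*\}$ in some order. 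Thus
\begin{equation*}
\tfrac{1}{2}\bigl(\ellabs(v^+,y)+\ellabs(v^-,y)\bigr)=f(S^-)+f(S^-\cup\{i^*\})=\ellabs(v,y)~.
\end{equation*}
The identity holds for every $y$, so taking expectation under $p$ yields $\ellabs(v;p)=\tfrac{1}{2}(\ellabs(v^+;p)+\ellabs(v^-;p))$, and consequently $\min(\ellabs(v^+;p),\ellabs(v^-;p))\leq\ellabs(v;p)$.

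Finally, since $v$ was optimal, the inequality above forces at least one of $v^+$ or $v^-$ to also attain the optimum, giving an element of $\prop{\ellabs}(p)\cap\Y\subseteq\prop{\ellabs}(p)\cap\V_0$. As $p$ was arbitrary, $\V_0$ is representative for $\ellabs$. I do not expect a real obstacle here; the work is simply verifying the two-case set-theoretic computation that gives the averaging identity. Note that neither strict submodularity nor strict monotonicity is needed — the identity is a purely combinatorial symmetry of $\ellabs$ at coordinate $i^*$ — which is why the lemma is stated for arbitrary $f\in\F_k$ while the subsequent minimality statement (proved in the companion lemma) will need the stronger hypotheses.
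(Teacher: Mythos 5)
Your proof is correct and follows essentially the same route as the paper's: both arguments hinge on the averaging identity $\tfrac12(\ellabs(v^+,\cdot)+\ellabs(v^-,\cdot))=\ellabs(v,\cdot)$ for the two sign-completions of a singly-abstaining $v$, from which optimality of $v$ forces optimality of at least one element of $\Y\subseteq\V_0$. The only (immaterial) difference is that you establish the identity pointwise in $y$ while the paper establishes it in expectation by summing over $y_{-i}$.
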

\begin{proof}
  Since $\V$ is representative by Theorem~\ref{thm:lovasz-embeds}, it suffices to show that any $v\in\V\setminus\V_0$ is dominated by some prediction in $\V_0$.
  Let such a $v\in\V\setminus\V_0$ be given.
  Then we have $v_i = 0$ for some $i \in [k]$ and $v_j \neq 0$ for all $j\neq i$.
  Let $y^+,y^-\in\Y$ be given by $y^+_j=y^-_j=v_j$ for $j\neq i$ and $y^+_i = 1$, $y^-_i = -1$, i.e., choosing a sign for entry $i$ of $v$.
  We will show that for all $p\in\simplex$, we have
  $v\in \prop{\ellabs}(p) \implies \{y^+,y^-\}\subseteq\prop{\ellabs}(p)$.
  As $y^+,y^-\in\V_0$, this statement implies that $v$ is weakly dominated by some prediction in $\V_0$.

  Let $p\in\simplex$, and suppose $v\in \prop{\ellabs}(p)$.
  Let $\Pr_p[Y_i=1]$ be the probability that the \nth{$i$} coordinate is $1$ under $p$, i.e., $\Pr_p[Y_i=1] = \sum_{y_{-i}\in\{-1,1\}^{k-1}} p_y$ where $y_i=1$, and similarly $\Pr_p[Y_i=-1]$.
  Let $v'$ be equal to $y^+$ if $\Pr_p[Y_i=1] \geq 1/2$ and $y^-$ otherwise.
  Then we have
  \begin{align*}
    \ellabs(v;p)
    &= \sum_{y\in\Y} p_y \Bigl( f(\mis(v,y)\setminus \abs(v)) + f(\mis(v,y))\Bigr)
    \\
    &= \sum_{y\in\Y} p_y \Bigl( f(\mis(v_{-i},y_{-i})) + f(\mis(v_{-i},y_{-i})\cup\{i\}) \Bigr)
    \\
    &= \sum_{y_{-i}\in\{-1,1\}^{k-1}} (p_{y_{-i},1} + p_{y_{-i,-1}}) \Bigl( f(\mis(v_{-i},y_{-i})) + f(\mis(v_{-i},y_{-i})\cup\{i\}) \Bigr)~.
  \end{align*}

  For any $\hat y\in\{y^+,y^-\}$, we have 
  \begin{align*}
    \ellabs(\hat y;p)
    &= \sum_{y\in\Y} p_y \Bigl( f(\mis(\hat y,y)\setminus \abs(\hat y)) + f(\mis(\hat y,y))\Bigr)
    \\
    &= \sum_{y_{-i}\in\{-1,1\}^{k-1}} \Bigl( p_{y_{-i},\hat y_i} 2 f(\mis(v_{-i},y_{-i})) + p_{y_{-i},-\hat y_i} 2 f(\mis(v_{-i},y_{-i})\cup\{i\}) \Bigr)~.
  \end{align*}

  The average expected loss $\alpha = \tfrac 1 2(\ellabs(y^+;p) + \ellabs(y^-;p))$ of $y^+$ and $y^-$ is thus
  \begin{align*}
    \alpha
    &= \sum_{y_{-i}\in\{-1,1\}^{k-1}} \Bigl( p_{y_{-i},1} f(\mis(v_{-i},y_{-i})) + p_{y_{-i},-1}  f(\mis(v_{-i},y_{-i})\cup\{i\}) \Bigr)
    \\
    &\quad + \sum_{y_{-i}\in\{-1,1\}^{k-1}} \Bigl( p_{y_{-i},-1} f(\mis(v_{-i},y_{-i})) + p_{y_{-i},1}  f(\mis(v_{-i},y_{-i})\cup\{i\}) \Bigr)
    \\
    &=\ellabs(v;p)~.
  \end{align*}
  By optimality of $v$ for $p$, we also have $\ellabs(v;p) \leq \ellabs(y^+;p),\ellabs(y^-;p)$.
  We conclude $\ellabs(v;p) = \ellabs(y^+;p) = \ellabs(y^-;p)$, which gives $\{y^+,y^-\}\subseteq\prop{\ellabs}(p)$ as desired.
\end{proof}

We next show that $\V_0$ is contained in any representative set.
As Lemma~\ref{lem:cant-abstain-on-just-one} showed $\V_0$ is representative, we conclude it must be minimal, giving Theorem~\ref{thm:lovasz-embed-tight}.
\begin{lemma}
  Let $f\in\F_k$ be strictly submodular and strictly increasing, and let $\Sc\subseteq\V$ be a representative set for $\ellabs$.
  Then $\V_0\subseteq\Sc$.
\end{lemma}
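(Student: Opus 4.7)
The plan is to show that for each $v \in \V_0$ there exists a distribution $p_v \in \simplex$ such that $\{v\} = \prop{\ellabs}(p_v)$. Since any representative set must contain an optimizer for every $p$, this would force $v \in \Sc$ and hence $\V_0 \subseteq \Sc$.

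First I would handle the easy case $v \in \Y$ by taking the point mass $p_v = \delta_v$: then $\ellabs(v,v) = 2f(\emptyset) = 0$, while any $v' \neq v$ has $\mis(v',v) \neq \emptyset$, and strict monotonicity of $f$ makes $f(\mis(v',v)) > 0$, so $v$ is the unique minimizer.

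The substance is the case $v \in \V_0 \setminus \Y$, where $|A| \geq 2$ for $A := \abs(v)$. Here I would take $p_v$ to be uniform on $\mathrm{supp}(p_v) := \{y \in \Y : y_i = v_i \text{ for all } i \notin A\}$. A direct computation gives $\mis(v,y) = A$ for all $y$ in the support, so $\ellabs(v;p_v) = f(\emptyset) + f(A) = f(A)$. The main work is showing $\ellabs(v';p_v) > f(A)$ for every $v' \neq v$. I would parameterize $v'$ by $B = \{i \in A : v'_i \neq 0\}$ and $C = \{i \notin A : v'_i \neq v_i\}$, splitting $C = C_0 \cup C_1$ by whether $v'_i = 0$ or not, which lets me write
\[
\ellabs(v',y) = f(C_1 \cup M_y) + f(C \cup (A \setminus B) \cup M_y),
\]
where $M_y := \{i \in B : v'_i \neq y_i\}$.

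The central tool is a pairing argument: pair each $y$ in the support with $\bar y^B$, obtained by flipping $y$'s signs on $B$ (still in the support because $B \subseteq A$). Two applications of submodularity on the two summands, combined with monotonicity and the submodular bound $f(B) + f(A \setminus B) \geq f(A)$, yield $\ellabs(v',y) + \ellabs(v',\bar y^B) \geq 2 f(A)$. The main obstacle, requiring the full strength of both hypotheses on $f$, is upgrading this bound to a strict inequality in expectation. I would split by how $v'$ deviates from $v$: if $C \neq \emptyset$, strict monotonicity gives $f(C \cup A) > f(A)$ in every pair; if $C = \emptyset$ but $B \subsetneq A$, strict submodularity gives $f(B) + f(A \setminus B) > f(A)$; and if $C = \emptyset$ with $B = A$, which is possible only because $|A| \geq 2$, strict submodularity of $M_y$ against $B \setminus M_y$ produces strict inequality whenever $\emptyset \subsetneq M_y \subsetneq B$, an event of positive probability under $p_v$. (The degenerate subcase $B = \emptyset$ with $C \neq \emptyset$ collapses the pairing and is handled directly by $f(C \cup A) > f(A)$, while $B = C = \emptyset$ would contradict $v' \neq v$.) Averaging the paired bounds across the support then delivers $\ellabs(v';p_v) > f(A) = \ellabs(v;p_v)$, establishing uniqueness and hence the lemma.
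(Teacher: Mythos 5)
Your proof is correct and follows essentially the same route as the paper's: the same distribution $p_v$ (uniform over signs on $\abs(v)$, matching $v$ elsewhere), the same computation $\ellabs(v;p_v)=f(\emptyset)+f(A)=f(A)$, and the same pairing of complementary sign patterns on which submodularity is applied twice, with strictness extracted from strict monotonicity when $v'$ deviates off $A$ and from strict submodularity (using $|A|\geq 2$) otherwise. The only differences are bookkeeping: your parameterization by $B,C$ and your case split ($C\neq\emptyset$ vs.\ $C=\emptyset$) reorganize what the paper does via $(S',A')$ and the dichotomy $A\subseteq A'$ vs.\ $A\not\subseteq A'$, and your probabilistic "positive-probability incomparable $M_y$" step replaces the paper's explicit table of witness sets $T$.
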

\begin{proof}
  Let $v\in\V_0$.
  If $v\in\Y$, we have $\prop{\ellabs}(\delta_v) = \{v\}$ by the fact that $\{\emptyset\} = \argmin_{S\subseteq[k]} f(S)$.
  Therefore, we can restrict our attention to $v\in\V_0\setminus\Y$, which in particular has the set $A_v = \{i : v_i = 0\}$ with at least two elements.
  We will show that $v$ is the unique $\ellabs$-optimal prediction for the distribution $p$ given by
  \begin{equation}
    \label{eq:1}
    p_y =
    \begin{cases}
      2^{-|A_v|} & y \odot v \geq 0 \\
      0 & \text{otherwise}
    \end{cases}~.
  \end{equation}
  In other words, $p$ chooses uniformly random signs within $A_v$, but otherwise agrees with $v$.
  Thus $v$ must be present in any representative set.

  For the remainder of the proof, let $S = \{i : v_i=1\}$ and $A = A_v = \{i : v_i=0\}$, and identify $v$ with this pair of sets.
  The expected loss for $v \equiv (S,A)$ under $p$ is therefore
  \begin{align*}
    \ellabs((S,A);p)
    &= \sum_{T\subseteq A} 2^{-|A|} f(S\triangle (S\cup T)\setminus A) + f(S\triangle (S\cup T)\cup A)
    \\
    &= 2^{-|A|} \sum_{T\subseteq A} f(T\setminus A) + f(T\cup A)
    \\
    &= f(\emptyset) + f(A) = f(A)~,
  \end{align*}
  where $T\subseteq A$ represents the positive signs.

  Now suppose $S',A'\subseteq [k]$ are disjoint and represent some $v' \neq v$, so $(S',A') \neq (S,A)$.
  Once again we write the expected loss under $p$.
  We will use the observation that, as $S$ and $T$ are disjoint, $S\cup T = S\triangle T$.
  Defining $\hat S = S' \triangle S$, and using the associativity of $\triangle$, we then have $S' \triangle (S\cup T) = \hat S \triangle T$.
  \begin{align*}
    \ellabs((S',A');p)
    &= 2^{-|A|} \sum_{T\subseteq A} f(S'\triangle (S\cup T)\setminus A') + f(S'\triangle (S\cup T)\cup A')
    \\
    &= 2^{-|A|} \sum_{T\subseteq A} f(\hat S\triangle T\setminus A') + f(\hat S\triangle T\cup A')
      \numberthis\label{eq:tightness-loss-rewrite}
  \end{align*}

  We first suppose $A \subseteq A'$.
  Then $\hat S\triangle T\setminus A' = \hat S \setminus A'$ and $\hat S\triangle T\cup A' = \hat S\cup A'$ for any $T \subseteq A$, giving the simpler form $\ellabs((S',A');p) = f(\hat S\setminus A') + f(\hat S\cup A')$.
  Consider two cases (i) $\hat S\setminus A' \neq \emptyset$, and (ii) $\hat S\cup A' \supsetneq A$.
  We will show either (i) or (ii) hold, and then apply the fact that $f$ is strictly increasing.
  If (ii) fails, since $A \subseteq A'$, we must have $A = A'$.
  Then $A = A'$ is disjoint from both $S$ and $S'$, and therefore from $\hat S = S'\triangle S$.
  We therefore have $\hat S\setminus A' \neq \emptyset \iff \hat S \neq \emptyset \iff S \neq S'$, which must hold as $A = A'$ yet $(S,A) \neq (S',A')$ by assumption.
  Thus, at least (i) or (ii) hold.
  As $f$ is strictly increasing, we conclude $\ellabs((S',A');p) > f(A) = f(\emptyset) + f(A) = \ellabs((S,A);p)$.

  Now suppose $A \not\subseteq A'$.
  In this case, we will apply strict submodularity.
  Let us rewrite the expected loss,
  \begin{align*}
    \ellabs((S',A');p)
    &= 2^{-|A|} \sum_{T\subseteq A} f(\hat S\triangle T\setminus A') + f(\hat S\triangle T\cup A')
    \\
    &= \frac 1 2 \; 2^{-|A|} \sum_{T\subseteq A} \biggl( f(\hat S\triangle T\setminus A') + f(\hat S\triangle T\cup A')
    \\[-12pt]
    & \hspace*{85pt}
      + f(\hat S\triangle (A\setminus T)\setminus A') + f(\hat S\triangle (A\setminus T)\cup A') \biggr)
      \numberthis\label{eq:tightness-double-counting}
    \\
    &\geq \frac 1 2 \; 2^{-|A|} \sum_{T\subseteq A} \biggl(
      f\Bigl((\hat S\triangle T\setminus A') \cup (\hat S\triangle (A\setminus T)\cup A')\Bigr)
    \\[-10pt]
    & \hspace*{85pt}
      + f\Bigl((\hat S\triangle T\setminus A') \cap (\hat S\triangle (A\setminus T)\cup A')\Bigr)
    \\[2pt]
    & \hspace*{85pt}
      + f\Bigl((\hat S\triangle (A\setminus T)\setminus A') \cup (\hat S\triangle T\cup A')\Bigr)
    \\[-2pt]
    & \hspace*{85pt}
      + f\Bigl((\hat S\triangle (A\setminus T)\setminus A') \cap (\hat S\triangle T\cup A')\Bigr)
      \biggr)
      \numberthis\label{eq:tightness-apply-submodularity}
    \\
    &= \frac 1 2 \; 2^{-|A|} \sum_{T\subseteq A} \biggl(
      f\bigl(\hat S\cup A\cup A'\bigr) + f\bigl(\hat S \setminus (A\cup A')\bigr)
    \\[-10pt]
    & \hspace*{85pt}
      f\bigl(\hat S\cup A\cup A'\bigr) + f\bigl(\hat S \setminus (A\cup A')\bigr)
      \biggr)      ~,
      \numberthis\label{eq:tightness-simplify}
    \\
    &= f\bigl(\hat S\cup A\cup A'\bigr) + f\bigl(\hat S \setminus (A\cup A')\bigr)
    \\
    &\geq f(A) + f(\emptyset)
      ~,
      \numberthis\label{eq:tightness-weak-increasing}
  \end{align*}
  where in eq.~\eqref{eq:tightness-apply-submodularity} we applied submodularity twice, to the first and fourth terms and to the second and third terms, and in eq.~\eqref{eq:tightness-weak-increasing} we used the fact that $f$ is increasing.
  We will show that for at least one $T\subseteq A$, the first and fourth terms in eq.~\eqref{eq:tightness-apply-submodularity} correspond to incomparable sets, meaning the inequality is actually tight.
  
  As $|A|\neq 1$, we must have $|A|\geq 2$.
  Let $i,j\in A$ be distinct elements such that $i\notin A'$.
  We therefore have two cases for $i$, and three for $j$, and for each we claim there exists $T\subseteq A$ such that $\hat S\triangle T\setminus A'$ and $\hat S\triangle (A\setminus T)\cup A'$ are incomparable:
  \begin{center}
    \begin{tabular}{R|CC}
      & i\in S' & i\notin S'
      \\
      \hline
      j\in S' & T=\{j\} & T=\{i,j\}
      \\
      j\in A' & T=B\setminus\{i\} & T=\{i\}
      \\
      j\notin S'\cup A' & T=\emptyset & T=\{i\}
    \end{tabular}
  \end{center}
  In particular, these choices give $i\in\hat S\triangle T$ and $i\notin\hat S\triangle (A\setminus T)$:
  in the first column, $i\in S'\setminus S \implies i\in \hat S$, so we ensure $i\notin T$, whereas we ensure $i\in T$ in the second column.
  Similarly, these choices give $j\notin\hat S\triangle T\setminus A'$ and $j\in\hat S\triangle (A\setminus T)\cup A'$:
  in the first row, $j\in \hat S$, so we ensure $j\in T$;
  in the second row, $j\in A'$, so we automatically have the condition;
  in the third row, $j\notin \hat S$, so we ensure $j\notin T$.
\end{proof}

Combining Theorem~\ref{thm:lovasz-embed-tight} with Lemma~\ref{lemma:safe-link}, one could modify any link function $\psi$ consistent with $\ellabs$ so that predictions $u$ linking to $v = \psi(u)$ with exactly one zero $v_i=0$ are instead linked to one of the two elements of $\Y$ that agree with $v$ outside coordinate $i$.
A natural choice is to take the sign of $u_i$, which means using the sign link on $u$ when the link would have only abstained on one coordinate.

We suspect that the conditions on $f$ in Theorem~\ref{thm:lovasz-embed-tight} are necessary.
The proof does shed light on other cases, however.
For example, in light of Lemma~\ref{lem:cant-abstain-on-just-one}, if $f$ is modular on some sublattice $S,S\cup\{i\},S\cup\{j\},S\cup\{i,j\}$, then abstaining on $i$ or $j$ should again be dominated by simply choosing the most likely label.
As above, one could modify the link to avoid abstaining on this sublattice.


\section{Experiments}\label{sec:experiments}

Until now, this work has analyzed the Lov\'asz hinge strictly theoretically. 
Our analysis has focused on identifying the underlying target problem embedded by the Lov\'asz hinge: structured prediction with the option to abstain.
Although abstaining is necessary to construct a calibrated surrogate, it is unclear how often one must abstain in practice. 
Motivated by the original works of \citet{yu2018lovasz,berman2018lovasz}, which proposed using Jaccard loss with the Lov\'asz hinge for image segmentation, we perform two experiments, binary foreground-background segmentation and multiclass image segmentation using the Jaccard loss with the Lov\'asz hinge.
Through these experiments, we seek to demonstrate the prevalence and importance of abstaining via our derived calibrated links.
Code for the experiments can be found on github\footnote{Github: \url{https://github.com/EnriqueNueve/LovaszAbstainExperiments}}

\subsection{Metrics}\label{metrics_section}
To measure the influence of abstaining, we propose a variety of performance and rejection (abstain) metrics inspired by the work of \citet{condessa2017performance}.
These metrics are commonly used in the image segmentation literature but have been adjusted to account for abstention on a subset of the predictions.
We also use metrics from the original work of \citet{berman2018lovasz} modified to account for abstaining, which evaluate performance by aggregating over all pixels in the test set, rather than an average of per-image performance. 
We do not condone the commonly used metrics in multiclass image segmentation, such as Mean IoU or the process of computing metrics over the entire data set compared to weighted averaging over samples, but use the same metrics to facilitate comparison with the current literature.
For a complete description of the metrics in this section, we refer the reader to \S~\ref{app:omitted-experiments}.

Let $\hat{S}=\{(v^{(j)},y^{(j)})\}_{j=1}^{N}$ express a set of predictions and matching labels where $v^{(j)}\in\V =\{-1,0,1\}^k$ and $y^{(j)}\in\Y = \{-1,1\}^k$ for all $j\in \{1,\dots ,N\}$.
Let $\abs(v) = \{ i \in [k] : v_i = 0\}$.
With respect to the elements that the link did not abstain on, we define terms such as True Positive, True Negative, False Positive, and False Negative --- for example, True Positive $TP(v,y)= \{i\in [k]:v_i=1 \wedge y_i=1 \}$.
Using the definitions above, we define metrics such as Accuracy, Recall, Precision, and IoU.
To measure the rate at which the link abstains, we introduce a metric called Rejection Rate: $R(\hat{S})=\sum_{j=1}^{N}\frac{|\abs (v^{(j)})|}{k}$.


\begin{figure}[t]
\centering
	\begin{minipage}[b]{0.32\textwidth}
	\centering
	\includegraphics[width=\textwidth]{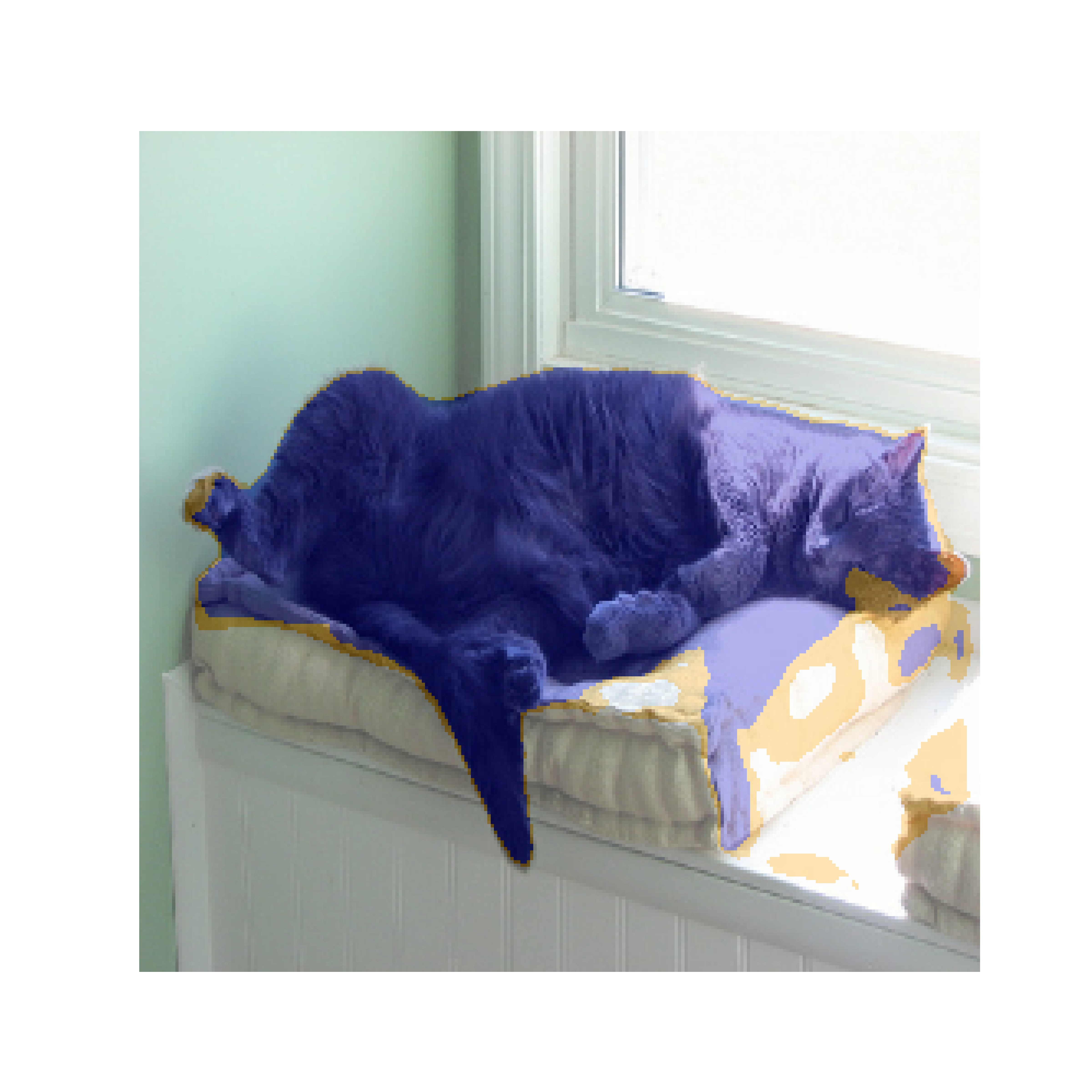}
	    \caption*{$\tau = 1$}
\end{minipage}
\hfill
  \begin{minipage}[b]{0.32\textwidth}
    \includegraphics[width=\textwidth]{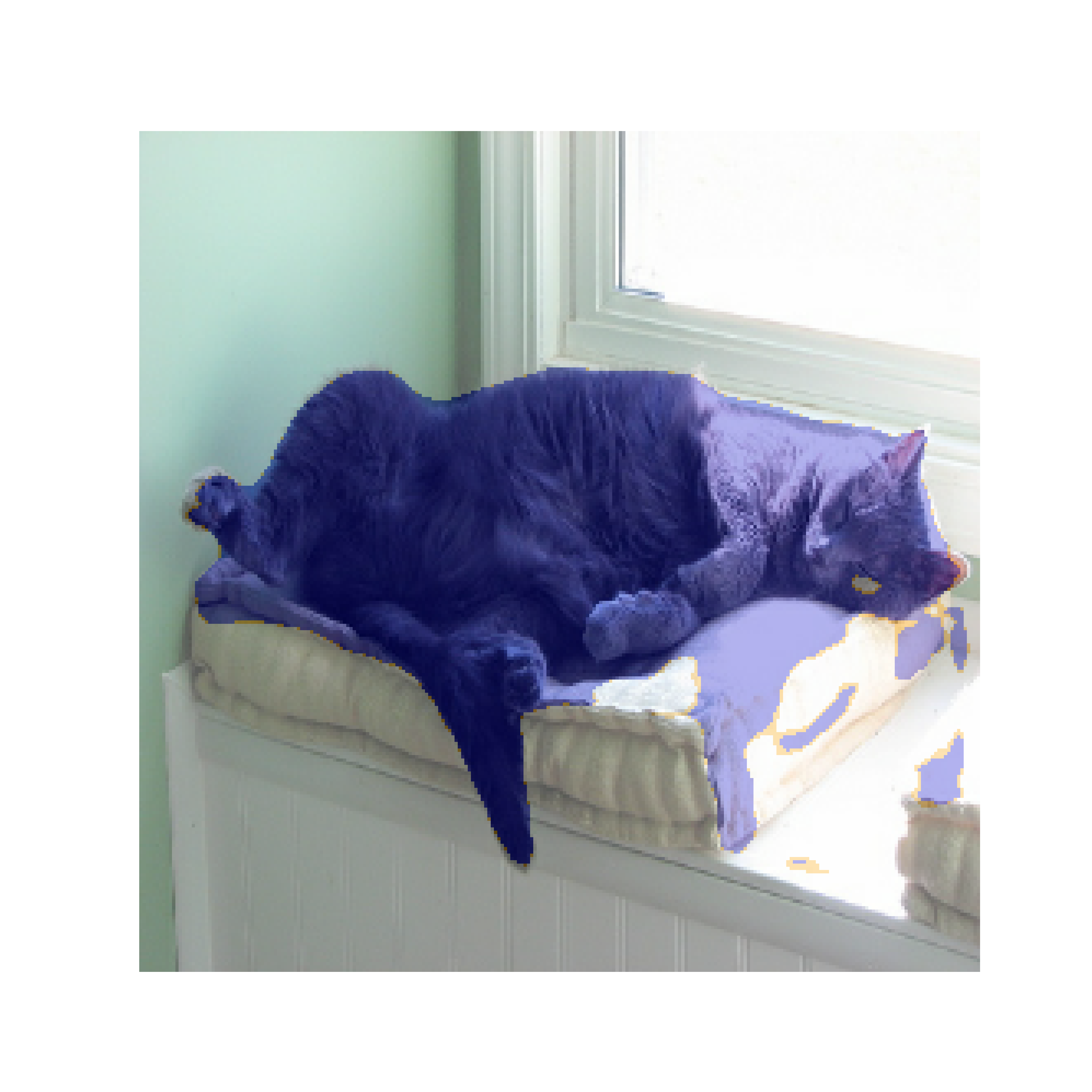}
    \caption*{$\tau = .5$}
  \end{minipage}
\hfill
  \begin{minipage}[b]{0.32\textwidth}
    \includegraphics[width=\textwidth]{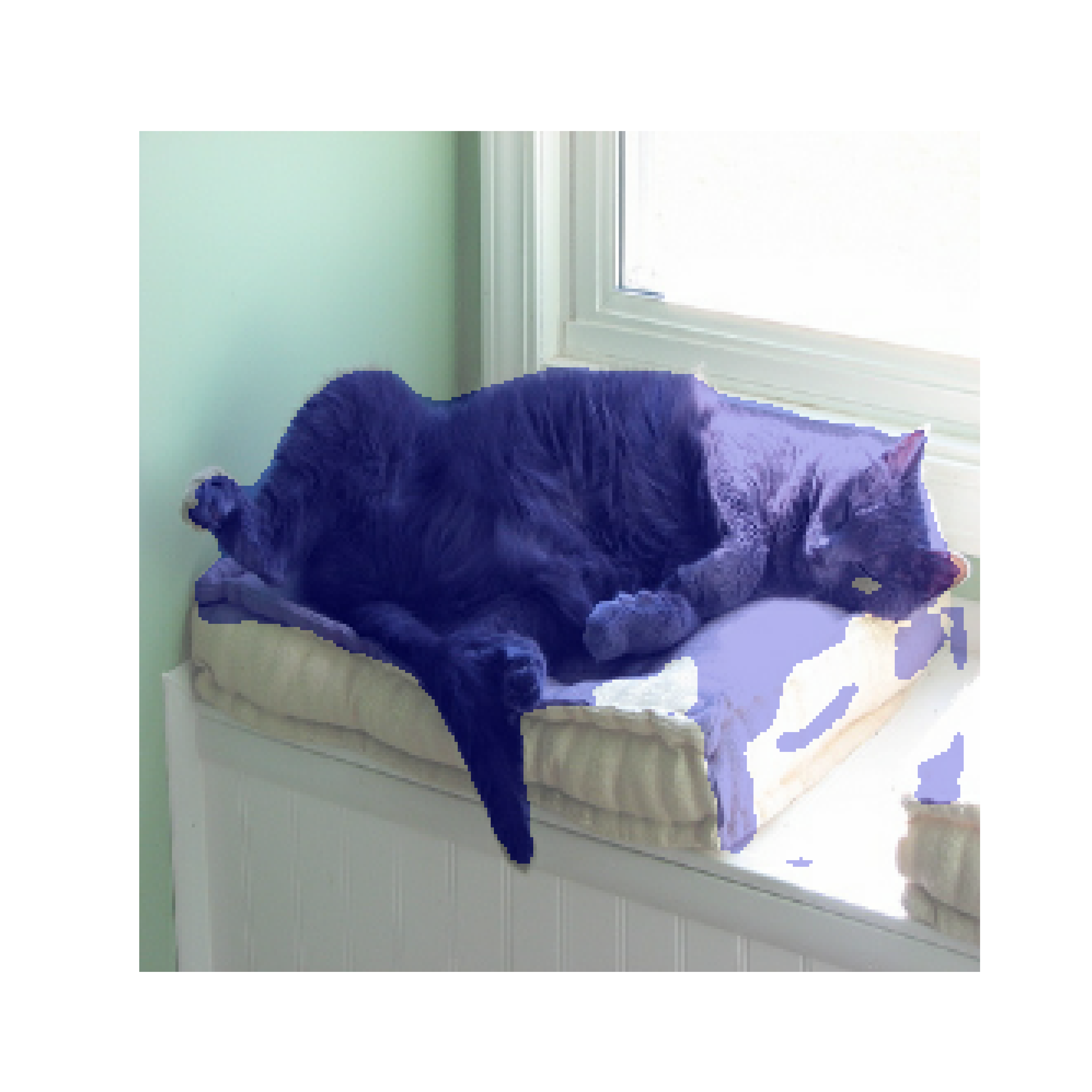}
    \caption*{$\tau = 0$}
    \end{minipage}
  \caption{Example prediction with varying values of $\tau$ on an image from Pascal VOC 2012 \citep{everingham2015pascal} where the foreground is the pixels of the cat.  Blue denotes predicted foreground, and orange denotes predicted abstain regions: as $\tau$ decreases, the abstain (orange) regions decrease. }
\label{fig:birds}
\end{figure}

\subsection{Binary image segmentation with Jaccard loss}\label{subsec:binary-image-seg}

For our first experiment, we performed foreground-background semantic segmentation with the Lov\'asz hinge as a surrogate loss function defined with respect to the Jaccard loss over the Pascal VOC 2012 \citep{everingham2015pascal} dataset along with the augmented images from \citet{hariharan2011semantic}. 
 Our choice of model was a DeeplabV3+ \citep{chen2018encoder} model, with a ResNet50 \citep{he2016deep} backbone network pre-trained on ImageNet. 
 Every class that was not background was assigned as foreground. 
 The non-labeled pixels that serve as trim from the Pascal VOC 2012 dataset \citep{everingham2015pascal} were assigned as background. 
 Furthermore, to eliminate external factors to our experiments, all input images and corresponding masks were resized to $256\times 256\times3$ and $256\times 256\times1$ respectively, using nearest-neighbor interpolation, and no forms of data augmentation were performed beyond what was present from  \citet{hariharan2011semantic}. 
 We fix $\epsilon = 1/2k$ for all experiments where $k= 256\times 256$.
 For training details, see \S~\ref{app:model-training}.

\begin{table}[!t]
\centering
\begin{tabular}{@{}lccccc@{}}
\toprule
$\tau$              & 0         & .25   & .5    & .75   & 1     \\ \midrule
Accuracy $\uparrow$            			& .8791     	& .8807 	& .8825 	& .8850 	& .8985 \\
Recall $\uparrow$                 			& .7191     	& .7221 	& .7255 	& .7303 	& .7562 \\
Precision $\uparrow$            			& .8546     	& .8563 	& .8583 	& .8611 	& .8765 \\
IoU  $\uparrow$                     			& .6407     	& .6440 	& .6479  	& .6534 	& .6833 \\
Rejection Rate      			& 2.6326e-7  & .0042 	& .0089 	& .0159 	& .0558 \\
Rejection Rate Pos. 		& .3600     	& .2929 	& .2933 	& .2927 	& .2915 \\
Rejection Rate Neg. 		& .6400     	& .7071 	& .7067 	& .7073 	& .7085 \\ \bottomrule
\end{tabular}
\caption{This table contains the binary experiment metric data. Observe that as $\tau$ increases, the rejection rate increases, and the metrics improve monotonically. This demonstrates the inherent tradeoff between performance and rejection rate. }\label{tab:binarymetrics}
\end{table}

Table \ref{tab:binarymetrics} displays the recorded metrics from this experiment for various values of $\tau$.
Metrics such as Accuracy and IoU are the highest when $\tau =1$ (the highest value of $\tau$), which is expected since that the Rejection Rate is also at its highest, while inversely, Accuracy and IoU are the lowest when $\tau =0$ (the lowest value of $\tau$).
The positive correlation between Accuracy and Rejection Rate suggests that the model abstains from making predictions that it would have gotten wrong if otherwise predicted. 
It is worth noting that when $\tau = 0$, the rejection rate is nearly negligible; however, it is also noteworthy to point out that the lowest performance metrics occur when $\tau = 0$.




\section{Multiclass structured selective prediction with the Lov\'asz hinge}\label{sec:multiclass}

While our study has been focused on binary structured prediction problems until this point, many prevalent applications of structured prediction occur in the multiclass setting.
It is not immediately clear how to extend the Lov\'asz hinge from binary to multiclass structured prediction problems.
In the context of image segmentation, \citet{berman2018lovasz} proposed generalizing the Lov\'asz hinge for multiclass semantic segmentation over a set of classes $\C$ by applying binary segmentation (using Jaccard loss) in a one-vs-all configuration.
In this approach, for each $c \in \C$, they apply binary Jaccard loss with class $c$ representing the foreground (positive class), and all other classes representing the background (negative class). 
Their target submodular function is then the average of the one-vs-all Jaccard losses,
\begin{align}
  \Jac^{\C}(y',y)
  &= \frac{1}{| \C |} \sum_{c\in \C} \Jac(\chi_{\{i : y'_i = c\}},\chi_{\{i : y_i = c\}})
    \nonumber
  \\
   \label{eq:jaccard-ova}
  &= \frac{1}{| \C |} \sum_{c\in \C}  \frac{ | \mis_c (y',y)  |}{| \{ i\in [k] \mid y_i=c  \}\cup \mis_c (y',y)   |}~,
\end{align}
where $\chi_{S}$ is the $\{-1,1\}$ indicator vector for $S\subseteq[k]$, and $\mis_c (y',y) =\{i \in [k] \mid ( y'_i\neq c \land  y_i = c )\lor (y'_i = c \land  y_i \neq c) \}$.
Since the proposed approach of \citet{berman2018lovasz} for multiclass structured prediction does not account for abstaining, when $|\C|=2$, we know this approach is inconsistent. 
Thus, the question of how the Lov\'asz hinge can be used as a consistent surrogate loss for some multiclass structured prediction target loss remains open.

This section introduces two different surrogate--target approaches for the task of multiclass structured prediction, both of which reduce to the (binary-label) Lov\'asz hinge.
The first target problem, called \emph{one-hot one-vs-all structured prediction with abstention}, employs a one-hot encoding to map classes to the binary problem.
Specifically, we map labels $\C$ to the subeset of $\{-1,1\}^{|\C|}$ of vectors with exactly one $1$.
The corresponding target problem is simply $\ellabsvecf$ with prediction set $\{-1,0,1\}^{|\C |k}$.
The Lov\'asz hinge on $\reals^{|\C|k}$ is consistent for this target, for essentially any polymatroid, including Jaccard.
A major drawback, however, is that it allows target predictions such as $(1,1,0,-1)$ for $|\C |=4$, $k=1$, which neither map to a single class nor to abstain.
It is open whether one can remove such nonsensical target predictions, or a sensible way to interpret them.

The second target problem, called \emph{multiclass structured abstain}, uses the BEP encoding from \citeauthor{ramaswamy2018consistent} to map the classes to the lower-dimension space $\{-1,1\}^{\log|\C|}$.
In other words, this approach encodes each of the possible $|\C|$ classes and $k$ individual labels into corners of the $\{-1,1\}^{dk}$ hypercube where $d = \log|\C|$.
By reducing to the binary case, we find that the Lov\'asz hinge on $\reals^{dk}$ is consistent with the usual abstain target on $\V = \{-1,0,1\}^{dk}$.
Unlike the one-hot encoding approach, assuming $|\C|=2^d$ for simplicity, we now can map every element of $\{-1,1\}^d$ to a unique class in $\C$.
If we interpret all-zeros as an abstention, the only potentially nonsensical individual predictions are those that abstain (are zero) on some bits but not others, within the same block of $d$ bits.
Our main technical contribution in this section, therefore, is showing that abstaining on all bits in a block dominates such nonsensical predictions.
We can thus restrict the target prediction set to $\{-1,1\}^d \cup \{0\}^d$, or in other words, $\C \cup \{\text{abstain}\}$, a more familiar setup for multiclass abstention.
Leveraging this result, the Lov\'asz hinge and a trimmed version of the threshold-abstain link is a calibrated surrogate-link pair for the one-vs-all multiclass structured abstain problem.
Unfortunately, the class of polymatroids for which the calibration result holds does not capture one-vs-all Jaccard loss. 
Hence, whether the Lov\'asz hinge is consistent for a sensible variant of one-vs-all Jaccard remains open.

\subsection{Multiclass notation}

Let $[k]=\{1,\ldots,k \}\subset \mathbb{N}$ index a finite set of structured individual predictions (i.e., one for each pixel in an image) and  $[C] = \{1,\dots ,C \} \subset \mathbb{N}$ index a finite set of labels (classes) for each of these individual predictions.
Then $[C]^{k}$ is the set of all possible class labels over joint labels.
Let $d = \lceil \log_2 C \rceil$ ($d=C$); we will later embed  $[C]$ into $\{-1,1\}^d$.
In this section, we will overload notation and write $y\in[C]^k$ as well as $y\in\{-1,1\}^{dk}$ to denote labels in the multiclass case either as there true values or in a binary (one-hot) encoded form.  
For a given function $f:\reals^d\to\reals$, we denote $f\square :\reals^{dk}\to \reals^k$ as the function where $f$ is applied to blocks of $u\in\reals^{dk}$ expressed by $u_{(i)}:=\{u_{d(i-1)+1},\dots ,u_{d(i-1)+d}\}$ for all $i\in[k]$.
In general, we index individual predictions in $[k]$ with $i$ and bit predictions in $[dk]$ with $j$.
For a bit prediction $j\in [dk]$, we let $\block j$ denote the unique individual prediction $i$ containing $j$, i.e., such that $j\in \{d(i-1)+1,\dots ,d(i-1)+d   \}$.
We let $S_{(i)}:= S\cap \{d(i-1)+1,\dots ,d(i-1)+d   \}$ where $S\subseteq 2^{[dk]}$.

\subsection{One-hot one-vs-all structured abstain problem}\label{subsec:one-vs-all-one-hot}

Consider the one-vs-all target problem where one defines a submodular function on \emph{class mispredictions} (e.g., ``dog'' vs ``cat'').
Specifically, generalizing one-vs-all Jaccard~\eqref{eq:jaccard-ova}, let submodular functions $g_{c,y} : 2^{[k]}\to\reals_+$ be given for each $c\in[C]$ and $y\in[C]^k$, and consider the target loss
\begin{align}
 \ell^{g,C}(y',y)
 &= \frac{1}{C} \sum_{c\in \C} g_{c,y}(\mis_c (y',y))~,
\end{align}
where again $\mis_c (y',y) =\{i \in [k] \mid ( y'_i\neq c \land  y_i = c )\lor (y'_i = c \land  y_i \neq c) \}$.
To design a surrogate for $\ell^{g,C}$, we consider reducing to the binary case, using the Lov\'asz hinge as a surrogate loss.
A tempting approach, following \citeauthor{berman2018lovasz}, is to try \emph{one-hot encoding} the classes into $\reals^C$ by the function $ \phi : c \mapsto \chi_{c}$, which takes value $1$ on the $c^{th}$ element of the vector, and $-1$ everywhere else.
Thus, the possible labels $\Y := [C]^k$ would map to $\phi \square (\Y) \subseteq \{-1,1\}^{Ck}$ as a one-hot encoding of class labels, concatenated into one vector.

Ideally, we could derive an equivalence from $g_{c,y}$, which is submodular on classes, to some $f_{c,y}$, which is submodular on bits.
In particular, this equivalence would need to map $\mis_c$ at the individual prediction level to $\mis$ at the bit level.
As we show next, this much is possible.
Yet since the Lov\'asz hinge is only consistent for a version of the target with abstention, we will then need to contend with many flavors of abstention.

To develop intuition, consider $k=2$ predictions of $C=3$ classes corresponding to labels $\{$dog, cat, bird$\}$.
\begin{align*}
y' &= (3,3) \equiv (\text{bird, bird}) & y &= (2,3) \equiv (\text{cat, bird}) \\
  \phi\square (y') &= (-1, \underset{\parbox{.6cm}{\hspace*{-1cm}\text{{\tiny first individual prediction is not ``cat''}}}}{\underbrace{-1}}, 1, -1,-1,1) & \phi\square (y) &= (-1,\underset{\parbox{.8cm}{\hspace*{-0.6cm}\text{{\tiny first label is ``cat''}}}}{\underbrace{1}},-1,-1,-1,1) 
\end{align*}
Then for the class ``dog'', we are only concerned with individual mispredictions involving the class ``dog'', of which there are none, and bit mispredictions regarding the first and fourth bits, of which are are none.
Similarly, for the class ``cat,'' there is an error on the first individual prediction, which is reflected in a bit-error on the second bit.
We can see the mispredictions in both representations align nicely: individual mispredictions correspond to two bit mispredictions, for the bits encoding the predicted and true classes.
In particular, the one-vs-all mispredictions can be read off of the bit misprediction set.
\begin{align*}
\mis_{dog}(y,y') &= \emptyset & \mis_{cat}(y,y') &= \{1\} & \mis_{bird}(y,y') &= \{1\} \\
\mis_{dog}(\phi\square (y),\phi\square (y')) &= \emptyset & \mis_{cat}(\phi\square (y),\phi\square (y')) &= \{2\} & \mis_{bird}(\phi\square (y),\phi\square (y')) &= \{3\} \\
\end{align*}

We now turn to the surrogate design problem.
To apply the Lov\'asz hinge to the one-hot encoded predictions, we need to choose submodular functions $f_{c,y}$ at the bit level.
For each $c\in \C$ and $y\in \Y$ pair, we define the polymatroid $f_{c ,y_o}:2^{[Ck]}\to \reals_{+}$ by $f_{c,y_o}(S)=g_{c,y}(\{i\in[k] : C(i-1)+c \in S\})$ such that $y_o=\phi \square (y)$.
This $f_{c,y}$ simply picks out the mispredicted bits that correspond to class $c$, condenses it to a set of individual mispredictions, and applies $g_{c,y}$.
Let us verify that we recover the desired equivalence.
For $y,y'\in[C]^k$, let $v=\phi\square(y), v'=\phi\square(y')$.
For any $c\in[C]$ we then have
\begin{align*}
  f_{c,y_o}(\mis(v',v))
  &= g_{c,y}(\{i\in[k] : C(i-1)+c \in \mis(v',v)\})
  \\
  &= g_{c,y}(\{i\in[k] : \phi(y_i')_c \neq \phi(y_i)_c\})
  \\
  &= g_{c,y}(\{i\in[k] : (\phi(y_i')_c = -1 \wedge \phi(y_i)_c = 1) \vee (\phi(y_i')_c = 1 \wedge \phi(y_i)_c = -1)\})
  \\
  &= g_{c,y}(\{i\in[k] : (y_i' \neq c \wedge y_i = c) \vee (y_i' = c \wedge y_i \neq c)\})
  \\
  &= g_{c,y}(\mis_c(y', y))~,
\end{align*}
as desired.
Thus, defining the collection $\vec f^C := \{f^C_{y_o}\}_y$ given by $f^C_y = \tfrac 1 C \sum_{c\in[C]} f_{c,y_o}$,
we have shown $f^C_{y_o}(\mis(\phi\square(y'),\phi\square(y))) = \ell^{g,C}(y',y)$.

At this point, we seem to have a faithful encoding of the desired target.
Applying the Lov\'asz hinge, from Theorem~\ref{thm:well-defined-cal}, we have that $L^{\vec f^C}$ is a consistent surrogate for the target
$\ell^{\vec f^C}_\abs : \{-1,0,1\}^{Ck}\times \phi \square (\Y) \to \reals_+$ given by
\begin{align}
\ell^{\vec f^C}_\abs(v,y) &= f^C_{y_o}(\mis(v,\phi\square(y))) + f^C_{y_o}(\mis(v,\phi\square(y))\setminus \abs(v)))~.\label{eq:multiclass-onehot-fabs}
\end{align}
(Technically, Theorem~\ref{thm:well-defined-cal} shows consistency when the set of labels is all of $\{-1,1\}^{Ck}$; we simply restrict to distributions supported only on $\phi\square(\Y)$.)

One issue with this target, and therefore the surrogate $L^{\vec f^C}$, is how to interpret the target predictions.
In particular, the target is defined on all combinations of bit predictions and abstentions $\{-1,0,1\}^{Ck}$, but many of these predictions are not meaningfully interpretable in a standard classification sense.
For example, on $C = 3$ classes, a prediction of $v = (1,1,-1)$ corresponds to a multi-label prediction in which we predict ``yes'' for the first two classes, and ``no'' for the last, despite the fact that this is not a valid label in $\phi \square (\Y)$.
Similarly, a report of $v = (0,0,1)$ implies abstaining on the first two classes, but predicting on the third.
It is not clear what such partial abstentions mean.

While there are problems to which this setup may be well-suited,%
\footnote{Partial abstention could be appropriate for problems like multi-label learning. We expect the surrogate $L^{\vec f^C}$ to be useful in this case. Reductions to single-label learning might perform well in practice through a link where one applies the argmax to the magnitudes within each block, and either predicts that class or abstain, according to what the threshold abstain link would do.  That is, for the purposes of choosing what to abstain on, plug the max of the magnitudes within each block into the threshold abstain link on the $k$ blocks.  For any $i$ not abstained on, predict the class corresponding to the argmax.}
ideally for multiclass classification with abstention we would be able to define target reports of the form $([C] \cup \{\bot\})^k$.
One potential avenue to arrive at these more natural reports would be to define $\phi(\bot) = \vec 0$ and show that such multi-label and multi-abstain predictions are dominated by predictions contained in $\V_1 := \phi\square\bigl(([C] \cup \{\bot\})^k\bigr) = (\{\chi_y : y \in \Y\} \cup \{\vec 0\})^k$.
Formally, if $\V_1$ is a representative set for $\ell^{\vec f^C}_{\abs}$, then $\phi$ becomes a bijection, and we could conclude that $L^{\vec f^C}$ embeds (and is consistent for) the target $\ell^{g,C}_{\abs} : ([C] \cup \{\bot\})^k \times [C]^k$ defined by
\begin{align*}
  \ell^{g,C}_\abs(v,y)
  &= \frac 1 C \sum_{c\in[C]} \left( g_{c,y}(\mis_c(v,y)\cup\abs(v)) + g_{c,y}(\mis_c(v,y)\setminus\abs(v)) \strut\right)
\end{align*}
Unfortunately, it is not clear whether $\V_1$ is representative for cases of interest.
We now turn to a different encoding that does allow us to restrict to target reports $([C] \cup \{\bot\})^k$.



\subsection{Multiclass structured abstain problem}\label{sec:multi-embed}
In this subsection, we encode the multiclass setting into a structured abstain problem, using the more compact \emph{binary-encoded prediction} (BEP) encoding from \citet{ramaswamy2018consistent}.
The key idea of their approach is to use only $d = \log_2 C$ bits for each class.
For example, if $C=8$, we might map class $5$ to its binary representation $(1,-1,1)$.
In exchange for fewer bits to represent each class, the BEP surrogate of \citet{ramaswamy2018consistent} needs to allow an abstain option.
As we already know the Lov\'asz hinge necessarily introduces some sort of abstention, the BEP surrogate a promising candidate for a multiclass structured abstain.

Unfortunately, the BEP encoding is fundamentally incompatible with one-vs-all targets, as we now demonstrate with an example.
\begin{example}[BEP encoding and one-vs-all targets]
  Consider $C=8$ classes, and $k=1$ individual prediction.
  Let $\phi:[8]\to\{-1,1\}^3$ use the binary representation, so that $\phi(1) = (-1,-1,1)$, $\phi(2) = (-1,1,-1)$, etc., with $\phi(8) = (-1,-1,-1)$.
  Let $y_b = (1,1,1) = \phi(7)$ encode the true class $y=7$, and $v_b = (1,-1,1) = \phi(5)$ the prediction corresponding to class $v=5$.
  Measuring one-vs-all error for class $5$, this prediction $v_b$ would be an error (predicting ``yes $5$'' when the label is ``not $5$'').
  Formally, we have $\mis_5(v,y) = \mis_5(5,7) = \{1\}$, as there is only one individual prediction.
  (Recall the definition of $\mis_c$ just following eq.~\eqref{eq:jaccard-ova}.)
  Note that $\mis(v_b,y_b) = \{2\}$, as only bit 2 was mispredicted.

  Making one more ``bit mistake'', however, to $v'_b = (1,-1,-1) = \phi(4)$ corresponding to class $v'=4$, the prediction would no longer be an error (predicting ``not $5$'' when the label is ``not $5$'').
  That is, $\mis_5(v',y) = \mis_5(4,7) = \emptyset$, while $\mis(v'_b,y_b) = \{2,3\}$, as bit 3 was mispredicted in addition to bit 2.
  
  Now consider any nontrivial submodular function $g$ at the individual prediction level for class $5$ and label $y$, and let $f$ be the corresponding function of the bit misprediction set.
  For these functions to be compatible, 
  we would need $f(\{2\})=f(\mis(v_b,y_b))=g(\mis_5(v,y))=g(\{1\})$, and $f(\{2,3\})=f(\mis(v_b',y_b))=g(\mis_5(v',y))=g(\emptyset)$, so that the loss of mispredictions at the bit level match those at the individual prediction level.
  If $g$ is nontrivial, however, then $f(\{2,3\})=g(\emptyset)<g(\{1\})=f(\{2\})$, contradicting submodularity of $f$.
  Thus $g$ could not correspond to any submodular function $f$ at the bit level.
\end{example}

We conclude that this BEP approach is not suitable for nontrivial one-vs-all targets.
In particular, it cannot capture the one-vs-all Jaccard loss.
Nonetheless, we find it to be an elegant and potentially useful approach to multiclass structured prediction with abstention.
In particular, it does accommodate the following natural multiclass target.

\begin{definition}[Multiclass structured abstain]\label{def:hypercube-ova-sap-trarget}
Given a set of polymatroids $\vec g \in \vec \F_k$, 
we define
$\ellabsvecg : ([C]\cup\{0\})^k \times [C]^k \to \reals_+$ by
\begin{align*}
  \ellabsvecg (v,y)  = g_y (\mis (v,y) \setminus  \abs(v)) +  g_y (\mis (v,y))~,
\end{align*}
where as usual $\mis(v,y) = \{i \in [k] \mid v_i  \neq y_i\}$ and $\abs (v)= \{i \in [k] \mid v_i = 0 \}$.
\end{definition}

To avoid the issue above where submodularity can fail at the bit level, we force all bit mispredictions to yield mispredictions at the individual prediction level.
To this end, define the encoded polymatroid $\vec f :2^{[dk]}\to \reals_{+}$ by $f_y(S_b) = g_y(\{ i\in [k]\mid S_{b,(i)} \neq \emptyset \})$.
Let $\phi : [C] \to \{-1,1\}^d$ be injective, and for simplicity assume $C = 2^{[d]}$.
We define the surrogate to be the Lov\'asz hinge on $\reals^{dk}$ but with the labels properly represented,
\begin{align}
  \overline{L}^{\vec f} : \reals^{dk} \times [C]^k \to \reals, \quad \overline{L}^{\vec f}(u,y) = L^{\vec f}(u,\phi(y))~.
\end{align}
We denote define the \emph{trimmed threshold-abstain link} $\overline{\psithresh}: \reals^{dk} \to ([C]\cup\{0\})^k$ by $\overline{\psithresh}(u)= \vartheta \square  \psithresh (u)$, where
$\vartheta : \{-1,0,1\}^d \to [C]\cup\{0\}$ by
\[
  \vartheta(v)= \biggl\{ 
  \begin{array}{lr}
    0 & ||v||_{0}> 0  \\
    \phi^{-1}(v) & o.w.
  \end{array} ~,
\]
which abstains on the individual prediction when any bit would abstain.
Given these ingredients, our main result of this section is the following.

\begin{theorem}\label{thm:binarymultical}
  The pair $(\overline{L}^{\vec f},\overline{\psithresh} )$ is calibrated for $\ellabsvecg $.
\end{theorem}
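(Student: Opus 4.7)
The plan is to reduce to the bit-level calibration result Theorem~\ref{thm:well-defined-cal} by pulling back through the BEP encoding and handling partial-abstain bit-reports with a dominance argument. Define the blockwise map $\iota : ([C]\cup\{0\})^k \to \{-1,0,1\}^{dk}$ by $\iota(v)_{(i)} = \phi(v_i)$ when $v_i\in[C]$ and $\iota(v)_{(i)} = \vec 0$ when $v_i = 0$, which embeds class-level reports into bit-level reports. I will establish three facts: $(i)$ $\vec f$ is a polymatroid collection on $[dk]$, so that Theorem~\ref{thm:well-defined-cal} applies (for $\epsilon\in(0,1/(2dk)]$); $(ii)$ $\iota$ is loss-preserving, i.e., $\ellabsvecf(\iota(v),\phi\square(y)) = \ellabsvecg(v,y)$ for all $v,y$; and $(iii)$ any $v_b\in\{-1,0,1\}^{dk}$ whose blocks mix zero and nonzero entries is weakly dominated, against any label $\phi\square(y)$, by the prediction $\tau(v_b)$ obtained by zeroing out such blocks. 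Since $C = 2^d$ is assumed, $\phi$ is a bijection onto $\{-1,1\}^d$, so $\iota\circ\vartheta\square = \tau$ and no ``dead'' codewords arise.

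For $(i)$, writing $h(S) = \{i\in[k] : S_{(i)}\neq\emptyset\}$, the identities $h(\emptyset)=\emptyset$, $h(S)\subseteq h(T)$ when $S\subseteq T$, $h(S\cup T) = h(S)\cup h(T)$, and $h(S\cap T) \subseteq h(S)\cap h(T)$, together with $g_y$ being a polymatroid, yield that $f_y = g_y\circ h$ is a polymatroid. For $(ii)$, each $\phi\square(y)_{(i)} \in \{-1,1\}^d$, so block $i$ of $\mis(\iota(v),\phi\square(y))$ is nonempty iff $v_i\neq y_i$ (by injectivity of $\phi$ together with the observation that $v_i = 0$ forces every bit in the block to be mispredicted), while the corresponding block of $\mis\setminus\abs$ is nonempty iff additionally $v_i\neq 0$; hence $h(\mis) = \mis(v,y)$ and $h(\mis\setminus\abs) = \mis(v,y)\setminus\abs(v)$, which through $f_y = g_y\circ h$ recovers $\ellabsvecg(v,y)$. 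For $(iii)$, if $v_b'$ zeros out a mixed block $i^*$ of $v_b$ and agrees elsewhere, both $\mis(v_b,\phi\square(y))_{(i^*)}$ and $\mis(v_b',\phi\square(y))_{(i^*)}$ are nonempty (since $\phi\square(y)_{(i^*)}\in\{-1,1\}^d$), so $h(\mis(v_b,\phi\square(y))) = h(\mis(v_b',\phi\square(y)))$ and the $f_y(\mis)$ terms agree exactly; meanwhile $h(\mis(v_b',\phi\square(y))\setminus\abs(v_b')) \subseteq h(\mis(v_b,\phi\square(y))\setminus\abs(v_b))$, and monotonicity of $g_y$ gives $\ellabsvecf(v_b',\phi\square(y)) \leq \ellabsvecf(v_b,\phi\square(y))$.

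To finish, fix $p\in\Delta([C]^k)$ and let $p_b$ be its pushforward under $\phi\square$; then $\overline{L}^{\vec f}(\cdot\,;p) = L^{\vec f}(\cdot\,;p_b)$, and $(ii)$ together with the dominance from $(iii)$ yield $\iota(\prop{\ellabsvecg}(p)) = \prop{\ellabsvecf}(p_b) \cap \iota(([C]\cup\{0\})^k)$ with $\tau$ mapping $\prop{\ellabsvecf}(p_b)$ into this intersection. Hence whenever $\psithresh(u)\in\prop{\ellabsvecf}(p_b)$, one obtains $\iota(\overline{\psithresh}(u)) = \tau(\psithresh(u)) \in \iota(\prop{\ellabsvecg}(p))$, i.e.\ $\overline{\psithresh}(u)\in\prop{\ellabsvecg}(p)$. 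The contrapositive gives $\{u : \overline{\psithresh}(u)\notin\prop{\ellabsvecg}(p)\} \subseteq \{u : \psithresh(u)\notin\prop{\ellabsvecf}(p_b)\}$, and Theorem~\ref{thm:well-defined-cal} at $p_b$ supplies the strict inequality required for calibration of $(\overline{L}^{\vec f}, \overline{\psithresh})$ at $p$. The main obstacle is the dominance argument in $(iii)$: it hinges on the block-indicator structure $f_y = g_y\circ h$ (exactly what makes this approach incompatible with nontrivial one-vs-all targets such as Jaccard, per the preceding example), and on the assumption $C=2^d$ so that $\vartheta$ does not force abstention on any valid codeword in $\{-1,1\}^d$.
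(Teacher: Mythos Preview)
Your proposal is correct and follows essentially the same approach as the paper: invoke Theorem~\ref{thm:well-defined-cal} for bit-level calibration of $(L^{\vec f},\psithresh)$ with respect to $\ellabsvecf$, then show that any bit-report which partially abstains within a block is weakly dominated (pointwise in $y$) by the report that zeros out the entire block, and conclude via the definition of $\vartheta$. Your step $(iii)$ is exactly the paper's dominance computation, and your finishing argument is a direct unrolling of what the paper obtains by appealing to Lemma~\ref{lemma:safe-link}. You are slightly more explicit than the paper in two places: you verify that $f_y = g_y\circ h$ is indeed a polymatroid (needed to apply Theorem~\ref{thm:well-defined-cal}), and you spell out the identification $\iota(\prop{\ellabsvecg}(p)) = \prop{\ellabsvecf}(p_b)\cap\iota(([C]\cup\{0\})^k)$ that makes the passage from bit-level to class-level calibration precise, whereas the paper leaves this implicit in the phrase ``the result then follows from Lemma~\ref{lemma:safe-link} and the definition of $\vartheta$.''
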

\begin{proof}
  From Theorem~\ref{thm:well-defined-cal}, the pair $(L^{\vec f},\psithresh)$ is calibrated with respect to $\ellabsvecf$.
  We show that partially abstaining within a block is dominated by abstaining on the whole block.
  The result then follows from Lemma~\ref{lemma:safe-link} and the definition of $\vartheta$.

  Consider any $y\in\{-1,1\}^{dk}$ and any $v\in\V:=\{-1,0,1\}^{dk}$ such that $v_j = 0$ for some $j$.
  Let $i = \block j$ and let $B = \{d(i-1)+1,\dots ,d(i-1)+d   \}$.
  Define $v'\in\V$ by $v'_{(i)} = \vec 0$ and $v'_{(a)} = v_{(a)}$ for $a \neq i$.
  (In other words, $v'$ is the same as $v$ but abstains on all of block $i$.)
  Now let $a\in[k]$.
  If $a=i$, then we have $(\mis(v,y))_{(a)} \neq \emptyset$, as $v_j = 0$, and $(\mis(v',y))_{(a)} = B$.
  We also have $(\mis(v',y)\setminus\abs(v',y))_{(a)} = \emptyset$.
  In particular, we have the following implications.
  \begin{align*}
    &(\mis(v',y))_{(a)} \neq \emptyset \implies (\mis(v,y))_{(a)} \neq \emptyset~,
    \\
    &(\mis(v',y)\setminus \abs(v'))_{(a)} \neq \emptyset \implies (\mis(v,y)\setminus \abs(v))_{(a)} \neq \emptyset~.
  \end{align*}
  If $a\neq i$, then $(\mis(v,y))_{(a)} = (\mis(v',y))_{(a)}$ and $(\abs(v))_{(a)} = (\abs(v'))_{(a)}$, and these same implications hold trivially.
  
  Now computing, we have
  \begin{align*}
    \ellabsvecf(v',y)
    &= f_y(  \mis (v',y)\setminus \abs (v') ) + f_y(  \mis (v',y) )
    \\
    &= g_y( \{a\in[k] \mid (\mis(v',y)\setminus \abs(v'))_{(a)} \neq \emptyset\} ) + g_y( \{a\in[k] \mid (\mis(v',y))_{(a)} \neq \emptyset\} )
    \\
    &\leq g_y( \{a\in[k] \mid (\mis(v,y)\setminus \abs(v))_{(a)} \neq \emptyset\} ) + g_y( \{a\in[k] \mid (\mis(v,y))_{(a)} \neq \emptyset\} )
    \\
    &= f_y(  \mis (v,y)\setminus \abs (v) ) + f_y(  \mis (v,y) )
    \\
    &= \ellabsvecf(v',y)~,
  \end{align*}
  where the inequality follows from the above implications (showing that the latter sets contain the former) and submodularity.
  Thus, $v'$ weakly dominates $v$.
\end{proof}


To demonstrate the theorem, consider the polymatroid $g\in\F_k$ given by $g(S) = \sqrt{|S|}$.
The corresponding target is
\begin{align*}
  \ellabsg (v,y)  = \sqrt{ |\mis (v,y)| - |\abs(v)|} + \sqrt{ |\mis (v,y)|}~,
\end{align*}
which captures the concept of diminishing marginal errors via the square-root function. 
Theorem~\ref{thm:binarymultical} states that $\overline{L}^f$ is a calibrated surrogate for $\ellabsg$, where
$f (S_b)=\sqrt{ | \{  i\in [k]\mid  S_{b,(i)} \neq \emptyset   \}  |} $.
(Recall that $S_b\subseteq 2^{[dk]}$ for $d = \log C$.)


As another example, consider $\vec{g}\in\vec{\F}_k$ given by $g_y(S) = \sum_{i\in S} w(y_i)$ for some weight function $w:\C\to\reals_+$.
The polymatroid $g_y$ captures the idea of a class-weighted false negative counting loss.
As $\mis$ cannot depend on $c$, these polymatroids $g_y$ can only capture false negatives and not false positives. 
The corresponding target is
\begin{align*}
  \ellabsvecg (v,y)  =  \sum_{i\in  \mis (v,y) \setminus  \abs(v)} w(y_i) + \sum_{i\in \mis (v,y)} w(y_i).
\end{align*}
Theorem~\ref{thm:binarymultical} states that $\overline{L}^{\vec{f}}$ is a calibrated surrogate for $\ellabsvecg$, where
\[ f_{y_b} (S_b)= \sum_{i\in [k] : S_{b,(i)} \neq \emptyset} w( \phi \square (y_{b})_{i})~. \]
These surrogates and variations may be useful for structured multiclass prediction problems such as semantic image segmentation.




\acks{
This material is based upon work supported by the National Science Foundation under Award Nos. 2202898 and IIS-2045347.}



\bibliography{sample}

\appendix

\section{Omitted Proofs}\label{app:omitted-proofs}


\begin{table}[h]
	\centering
	\begin{tabular}{@{\extracolsep{4pt}}ll}
		Notation & Explanation \\
		\toprule
		$k$ & Number of individual predictions\\
    $[k] := \{1,\ldots,k\}$ & Index set\\
		$y \in \Y = \{-1,1\}^k$  & Label space    \\
		$v \in \V = \{-1,0,1\}^k$  & (Abstain) prediction space    \\
   $\mis(y',y) := \{i \in [k] \mid y_i \neq y_i'\}$ & Set of mispredictions for $y',y \in\Y$\\
   $\mis(v,y) := \{i \in [k] \mid v_i \neq y_i\}$ & Set of mispredictions for $v\in\V$ and $y \in\Y$\\
  $\abs(v) = \{i \in [k] \mid  v_i = 0\}$ & Set of predictions abstained on \\
		$r \in \R$  & General prediction space \\
		$R = [-1,1]^k$ & The filled $\pm 1$ hypercube\\
		$u\in \reals^k$  & Surrogate prediction space    \\
		$\{u \leq c\} = \{i \in  [k] \mid u_i \leq c\}$ & Set of indices of $u$ less than $c$ \\
		$(u \odot u')_i = u_i u'_i$  & Hadamard (element-wise) product \\
		$U \odot u' = \{u \odot u' \mid u \in U\}$  & Hadamard product on a set $U \subseteq \reals^k$\\
		$\sign : \reals^k \to \V$  & Sign function including $0$\\
		$\signstar : \reals^k \to \Y$ & Sign function breaking ties arbitrarily at $0$\\
		$|u| \in \reals^k_+$ s.t. $|u|_i = |u_i|$ & Observe $|u| = u \odot \signstar(u) = u \odot \sign(u)$ \\
		$\boxed u = \sign(u) \odot \min(|u|, \ones)$ & ``Clipping'' of $u$ to $R$ \\
		$\ones_S \in \{0,1\}^k$ s.t. $(\ones_S)_i = 1 \iff i \in S$ & $0-1$ Indicator on set $S \subseteq [k]$\\
		$\pi \in \Sc_k$ & Permutations of $[k]$\\
         $\vec f = \{f_y \mid 2^{[k]}\to\reals_{+}\}_{y\in\Y}$ &  Collection of label  dependent \\ & polymatroid functions\\
		$\F_k= \{\vec{f}\mid  \exists \; f : 2^{[k]} \to \reals_{+} $ s.t. $ f_y =f \;\forall \; y\in \Y\}$  & Set of all symmetric polymatrioids\\
  $F(x)$ 
             & Lova\'sz extension for $x \in \reals_+^k$ \\
		$\ell^f(r,y) = f(\mis (r,y))$ & Structured binary classification s.t. $f \in \F_k$\\
		$L^f(u,y) = F((\ones - u \odot y)_+)$ & Lov\'asz hinge s.t. $f \in \F_k$ \\
		$ \ellabs ( v,y) = f(\mis(v,y)\setminus \abs(v))+ f(\mis(v,y))$  & Structured abstain problem s.t. $f \in \F_k$\\
        $\ellvecf (r,y) = f_{y}(\mis (r,y) )$ & Structured binary classification via $\vec{f}$\\
		$\Lvecf (u,y) = F_{y}((\ones - u \odot y)_+)$ & Lov\'asz hinge via $\vec{f}$ \\
		$ \ellabsvecf (v,y) = f_{y}(\mis(v,y)\setminus \abs(v))+ f_{y}(\mis(v,y))$ & Structured abstain problem via $\vec{f}$\\
		\bottomrule
	\end{tabular}
\caption{Table of general notation}\label{tab:notation}
\end{table}


\begin{table}[h]
	\centering
	\begin{tabular}{@{\extracolsep{4pt}}ll}
		Notation & Explanation \\
		\toprule
		$\ones_{\pi,i} = \ones_{\{\pi_1, \ldots, \pi_i\}}$ with $\ones_{\pi,0} = \vec 0$ & Indicator of first $i$ elements of $\pi$\\
	$V_{\pi} = \{ \ones_{\pi,i} \mid i \in \{0, \ldots, k\}\}$ & Elements of $\V$ ordered by $\pi$\\
		$V_{\pi,y} = V_\pi \odot y$ & Signed elements of $\V$  ordered by $\pi$\\
		  $V_{\pi,y}^{u} = V_\pi \odot y$ s.t. $\alpha_{i}(|u|)\neq 0$ & \\
		$P_\pi = \{ x \in \reals^k_+ \mid x_{\pi_1} \geq \ldots \geq x_{\pi_k}\}$ & elements of $\reals_+^k$ ordered by $\pi$\\
		$P_{\pi,y} = \conv (V_\pi ) \odot y$ & Elements of $P_\pi$ signed by $y$\\
		 $\Vfaces = \cup_{\pi \in \Sc_k, y \in \Y} 2^{V_{\pi,y}}$ & Subsets of $\V$ whose convex hulls are \\
		 & faces of some $P_{\pi,y}$ polytope.\\
		$\hat \Psi(u) = \cap \{V \in \Vfaces \mid d_\infty (\conv (V), u) < \epsilon\}$ & Proposed general link envelope.\\
		$\U^{\vec{f}} = \prop{L^{\vec{f}} } (\simplex )$ & Range of property elicited by Lov\'asz hinge\\
		$\Psi^{\vec{f}}(u) = \cap \{U \in \U^{\vec{f}} \mid d_\infty (U, u) < \epsilon\} \cap \V$ & Link envelope for given $f \in \F_k$.\\
		\bottomrule
	\end{tabular}
	\caption{Table of notation used for proofs}\label{tab:notation-proofs}
\end{table}



\subsection{Omitted proofs from \S~\ref{sec:our-embedding}}\label{app:omitted-sec-3}

\lovaszhypercubedominates*
\begin{proof}
	Fix $y\in\Y$.
	Let $w = \ones - u\odot y$ and $\boxed w = \ones - \boxed u\odot y$, so that $L^{\vec f}(u,y) = F_y(w_+)$ and $L^{\vec f}(\boxed u,y) = F_y(\boxed w_+)$.
	We will first show that $\boxed w_+ = \min(w_+,2)$, where the minimum is element-wise.

	For $i\in[k]$ such that $|u_i| \leq 1$, we have $\boxed u_i = u_i$.
	Thus $(w_+)_i = (1 - u_i y_i)_+ = (1 - \boxed u_i y_i)_+ = (\boxed w_+)_i$.
	Furthermore, we have $0 \leq (w_+)_i = (\boxed w_+)_i \leq 2$.
	Now suppose $|u_i| > 1$.
	If $y_iu_i > 0$, i.e., $\sign(u_i) = y_i$, then $1-y_iu_i = 1-|u_i| < 0$, so $(w_+)_i = 0$.
	For $\boxed u$, we similarly have $(\boxed w_+)_i = (1-|\boxed u_i|)_+ = 0$.
	In the other case, $y_iu_i < 0$, so $(w_+)_i = 1+|u_i| > 2$ and $(\boxed w_+)_i = 1+|\boxed u_i| = 2$.
	Therefore, we have $\boxed w_+ = \min(w_+, 2)$.

	Now, let $\pi\in\Sc_k$ be a permutation that orders the elements of $w_+$.
	Observe that $\pi$ orders the elements of $\boxed w_+$ as well, since the vectors are identical except for values above 2, which are all mapped to 2.
	By eq.~\eqref{eq:lovasz-ext-pi-u}, we thus have
	\begin{align*}
	F_y(w_+) - F_y(\boxed w_+)
	&= \sum_{i=1}^k (w_+)_{\pi_i} (f_y(\Spi{i})-f_y(\Spi{i-1}))
	\\
	& \qquad     -  \sum_{i=1}^k (\boxed w_+)_{\pi_i} (f_y(\Spi{i})-f_y(\Spi{i-1}))
	\\
	&= \sum_{i=1}^k (w_+-\boxed w_+)_{\pi_i} (f_y(\Spi{i})-f_y(\Spi{i-1}))
	\\
	& \geq 0~,
	\end{align*}
	where we have used the fact that $f_y$ is increasing and $\boxed w_+ \leq w_+$ element-wise.
	As $y$ was arbitrary, this holds for all $y \in \Y$.
\end{proof}



\ppiy*
\begin{proof}
	For (i), take any $u\in[-1,1]^k$.
	Letting $y = \signstar(u)$, we have $u \odot y = |u| \in \reals^k_+$.
	Taking $\pi$ to be any permutation ordering the elements of $u \odot y$, we have
	$u\odot y \in P_{\pi} \cap \reals^k_+$.
	Notice, since $u\odot y \in P_{\pi} \cap \reals^k_+$ and $u\in[-1,1]^k$, we additionally have $u \odot y = |u| \in P_\pi \cap [0,1]^k$.
	Since $\onespi{\pi}{i}$ for $i \in \{0,\ldots,k\}$ form $V_{\pi}$ and $P_{\pi}$ is the convex hull of points in $V_\pi$, showing there is an $\alpha$ such that $u = \sum_i \alpha_i \onespi{\pi}{i}$ suffices to conclude $u \in P_{\pi,y}$.
	We can write $u \odot y$ as the convex combination $u\odot y = \sum_{i=0}^k \alpha_i(u\odot y) \onespi{\pi}{i}$, as in eq.~\eqref{eq:alpha-combination}.
	Thus $u = u\odot y\odot y = \sum_{i=0}^k \alpha_i(u\odot y) \onespi{\pi}{i} \odot y$, so $u\in P_{\pi,y}$.
	Therefore,  every $u\in[-1,1]^k$ is in some $P_{\pi,y}$, we have $\cup_{y\in\Y,\pi\in\Sc_k} P_{\pi,y} \supseteq [-1,1]^k$.
	Moreover, every $P_{\pi,y}\subseteq [-1,1]^k$ by construction, and equality follows.

	For (ii), first observe for all $\pi\in\Sc_k$, the function $F_{y''}:\reals^{k}\to \reals_{+}$ is affine with respect to $u\in\reals^{k}$ for all $y''\in\Y$, immediately from eq.~\eqref{eq:lovasz-ext-pi-u}.
	To show $L^{\vec f}(\cdot, y') := F_{y''}((\ones - u\odot y')_+)$ is affine on $P_{\pi,y}$ for all $y,y',y''\in\Y, \pi\in\Sc_k$, it suffices to show there exists some $\pi'$ such that $\{\ones - u \odot y' \mid u \in P_{\pi,y}\} \subseteq P_{\pi'}$.
  Since $u \in P_{\pi,y} \implies u \in [-1,1]^k \implies L^{\vec f}(u,y') = F_{y''}(\ones - u\odot y')$, the result will follow.

	We construct $\pi'$, unraveling the permutation $\pi$ into two permutations, depending on the sign of $y\odot y'$.
	Recall from the discussion following eq.~\eqref{eq:lovasz-ext} that, since $y = \signstar(u)$, $\pi$ orders the elements of $u\odot y = |u|$ in decreasing order.
	Observe that $u \odot y' = u\odot (y\odot y)\odot y' = (u\odot y) \odot (y\odot y') = |u| \odot (y\odot y')$.
	Therefore $\pi$ orders the elements of $\ones - u\odot y'$ in increasing order among indices $i$ with $y_i y'_i > 0$, and decreasing order on the others.
	We construct $\pi'$ by concatenating the elements in $\{y\odot y' < 0\}$ sorted by $\pi$ with $\{y \odot y' \geq 0\}$ sorted by the reverse $\pi$, we have shown $\ones - u\odot y' \in P_{\pi'}$.
\end{proof}


We now introduce a lemma used in the proof of Lemma~\ref{lem:UcapR-union-faces}.
\begin{lemmac}\label{lem:subgradients-on-relint-affine}
	Let $L : \reals^k \to \reals_+$ be  a polyhedral function that is affine on the polyhedron $C$.
	For any $x \in \relint(C)$ and any $z \in C$, we have $\partial L(x) \subseteq \partial L(z)$.
\end{lemmac}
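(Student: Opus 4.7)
The plan is to use affineness of $L$ on $C$ to pin down any subgradient at a relative-interior point, and then transport the subgradient inequality from $x$ to $z$.

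First, since $L$ is affine on $C$, there exist $a\in\reals^{k}$ and $b\in\reals$ with $L(y)=\langle a,y\rangle+b$ for all $y\in C$. Let $V$ denote the linear subspace parallel to $\affhull(C)$, i.e., $V=\affhull(C)-x$ for any $x\in C$. Fix $x\in\relint(C)$ and $g\in\partial L(x)$. Because $x\in\relint(C)$, for every $v\in V$ there exists $\delta>0$ such that $x\pm\delta v\in C$. Applying the subgradient inequality at $x$ to both $x+\delta v$ and $x-\delta v$, and using $L(x\pm\delta v)=L(x)\pm\delta\langle a,v\rangle$, gives $\langle g,v\rangle\leq\langle a,v\rangle$ and $-\langle g,v\rangle\leq-\langle a,v\rangle$. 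Hence $\langle g,v\rangle=\langle a,v\rangle$ for every $v\in V$.

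Now let $z\in C$ be arbitrary. Since $x,z\in C\subseteq\affhull(C)$, the difference $x-z$ lies in $V$, so the previous step yields $\langle g,x-z\rangle=\langle a,x-z\rangle=L(x)-L(z)$. For any $y\in\reals^{k}$, combining this identity with the subgradient inequality at $x$,
\begin{align*}
L(y)\;\geq\;L(x)+\langle g,y-x\rangle\;=\;L(z)+\bigl(L(x)-L(z)\bigr)+\langle g,y-x\rangle\;=\;L(z)+\langle g,y-z\rangle,
\end{align*}
which shows $g\in\partial L(z)$, as required.

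The only mildly delicate step is the first one: identifying that any subgradient at a relative-interior point must coincide with the affine piece on the parallel subspace $V$. This is where the hypothesis $x\in\relint(C)$ is essential, since it allows us to move in both directions $\pm v$ within $C$ and turn the subgradient inequality into an equality on $V$. Everything else is a routine algebraic manipulation of the subgradient inequality, and the polyhedrality of $L$ is only used to guarantee that $L$ is convex on all of $\reals^{k}$ so that subgradients exist in the usual sense.
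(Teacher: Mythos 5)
Your proof is correct and follows essentially the same strategy as the paper's: use relative interiority of $x$ to move in both directions within $C$ and force any subgradient at $x$ to agree with the affine piece on directions parallel to $C$, then chain the resulting equality $\langle g, x-z\rangle = L(x)-L(z)$ with the subgradient inequality at $x$ to conclude $g\in\partial L(z)$. The only cosmetic difference is that you establish the agreement on the full parallel subspace $V$ via $x\pm\delta v$, while the paper establishes it only for directions $z-x$ with $z\in C$ via a contradiction with a negative step size; both hinge on the same use of $x\in\relint(C)$.
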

\begin{proof}
  Fix $x\in\relint(C)$.
  Since $L$ is affine on $C$, then there exists some $w' \in \reals^k, b\in\reals$ such that $L(z) = \inprod {w'} z + b$ for all $z \in C$.
  Thus, we have $L(z) - L(x) = (\inprod{w'}{z} + b) - (\inprod{w'}{x} + b) = \inprod{w'}{z-x}$ for all $z\in C$.

We claim that for all $w \in \partial L(x)$, and all $z \in C$, we have $\inprod{w}{z-x} = \inprod{w'}{z-x}$.
To prove this claim, observe that
\begin{equation}
  \label{eq:blahblah}
  \inprod{w'}{z-x} = L(z) - L(x) \geq \inprod{w}{z-x} \text{ for all } z\in C~,
\end{equation}
by the subgradient inequality and affineness of $L$ on $C$.

Assume for a contradiction that $\inprod{w'}{z-x} > \inprod{w}{z-x}$ for some $z \in C$.

Since $x \in \relint(C)$, there is an $\epsilon < 0$ such that $z' := x + \epsilon(z-x) \in C$.
Therefore, we have 
\begin{align*}
\inprod{w'}{z'-x} = \inprod{w'}{\epsilon(z-x)} = \epsilon\inprod{w'}{z-x} < \epsilon\inprod{w}{z-x} = \inprod{w}{\epsilon(z-x)} = \inprod{w}{z'-x}~,
\end{align*}
where we use the fact that $\epsilon < 0$ to flip the inequality.
We have now contradicted eq.~\eqref{eq:blahblah} for the point $z'$.

Since we now have $L(z) - L(x) = \inprod{w'}{z-x} = \inprod{w}{z-x}$ for all $z \in C$,
consider $w \in \partial L(x)$.
Then we have, for all $v \in \reals^k$,
\begin{align*}
L(v) - L(z) &= (L(v) - L(x)) + (L(x) - L(z)) & \\
&\geq \inprod{w}{v-x} + \inprod{w}{x-z} & \\
&=\inprod{w}{v-z}~, &
\end{align*}
where the inequality follows from the subgradient inequality and the claim.
Thus $w \in \partial L(z)$, which completes the proof.
\end{proof}

A corollary of Lemma~\ref{lem:subgradients-on-relint-affine} is that subdifferentials are constant on $\relint(C)$ for any face $C$ such that $L$ is affine as the subset inclusion holds in both directions.


\UcapRunionfaces*
\begin{proof}
	Fix $p \in \simplex$.
	For any $u \in \hat{C} \cap \prop{L}(p)$, there is some $\C' \subseteq \C$ such that $u \in C_j$ for all $C_j\in\C'$.
	For now, let us simply consider any $C_j \in \C'$.
	Observe that $u\in \relint (F_j)$ for exactly one face $F_j$ of $C_j$.

	By convexity of $L$, we have $u \in \prop{L}(p) \iff 0 \in \partial L(u;p)$.
	Moreover, as $u \in \relint(F_j)$, we have $\partial L(u;p)\subseteq \partial L(z;p)$ for all $z \in F_j$ by Lemma~\ref{lem:subgradients-on-relint-affine}.
	Thus, $0 \in \partial L(u;p)$ implies $0 \in \partial L(z;p)$ for all $z \in F_j$.
	Moreover, $0 \in \partial L(z;p)$ for all $z \in F_j$ if and only if $z \in \prop{L}(p)$ for all $z \in F_j$, and thus we have $F_j \subseteq \prop{L}(p)$.

	As the value $u$ and the index $j$ were arbitrary, this holds for all such faces in $G(u) := \cup \{F_j \subseteq C_j \in \C' \mid u \in \relint(F_j)\}$.
	Now, take $\mathfrak{F} = \{ G(u) \mid u \in \hat{C} \cap \prop{L}(p)\}$; hence $\prop{L}(p) \cap \hat{C} = \cup \mathfrak{F}$.
	Moreover, $\mathfrak{F} \subseteq \cup_i \faces(C_i)$.
\end{proof}


\subsection{Omitted proofs from \S~\ref{sec:inconsistency}}
\label{app:omitted-sec-4}

When relating to submodularity, we will often find it useful to rewrite the misprediction set $\mis (r,y)$ above in terms of two sets of labels: $S_v = \{ \signstar(v) > 0 \}$ and $S_y = \{ y > 0 \}$.
Then $\mis (v,y) = S_v \triangle S_y$, and thus
\begin{equation}
  \label{eq:lovasz-embeds-symdiff}
  \ellabsvecf (v,y) = f_y(S_v\triangle S_y\setminus \abs (v)) + f_y(S_v\triangle S_y\cup \abs (v))~,
\end{equation}
where $\triangle$ is the symmetric difference operator $S\triangle T := (S\setminus T) \cup (T\setminus S)$.
To avoid additional parentheses, throughout we assume $\triangle$ has operator precedence over $\setminus$, $\cap$, and $\cup$.

\twobarf* 
\begin{proof}
	Let $A_v =\{i\in[k] \mid v_i = 0\} $ and $B_v = [k]\setminus A_v $.
	Recall that $\bar p$ is the uniform distribution on $2^k$ labels.
	Then we have
	\begin{align*}
	\ellabs(v;\bar p)
	&= 2^{-k} \sum_{S\subseteq [k]} f(S_v\triangle S\setminus A_v ) + f(S_v\triangle S\cup A_v)
	\\
	&= 2^{-|B_v|} \sum_{T\subseteq B_v} f(T) + f(T\cup A_v )
	\\
	&= \frac 1 2 \; 2^{-|B_v|} \sum_{T\subseteq B_v} f(T) + f(B_v\setminus T) + f(T\cup A_v) + f((B_v\setminus T)\cup A_v )
	\\
	&\geq \frac 1 2 \left( f(B_v) + f(\emptyset) + f([k]) + f(A_v )  \right)
	\\
	&\geq \frac 1 2 \left( f([k]) + f([k]) \right) = f([k])
	\end{align*}
	where we use submodularity in both inequalities.
	The second statement follows from the second equality above after setting $A_v =\emptyset$, as then $B_v = [k]$ and thus $T$ ranges over all of $2^{[k]}$.
\end{proof}

\barf*
\begin{proof}
	The inequality follows from Lemma~\ref{lem:2-bar-f} with $r \in \Y$.
	Next, note that if $f$ is modular we trivially have $\bar f = f([k])/2$.
	If $f$ is submodular but not modular, we must have some $S\subseteq [k]$ and $i\in S$ such that $f(S) - f(S\setminus\{i\}) < f(\{i\})$.
	By submodularity, we conclude that $f([k]) - f([k]\setminus\{i\}) < f(\{i\})$ as well; rearranging, $f(\{i\}) + f([k]\setminus\{i\}) > f([k]) = f([k]) + f(\emptyset)$.
	Again examining the proof of Lemma~\ref{lem:2-bar-f}, we see that the first inequality must be strict, as we have one such $T\subseteq [k]$, namely $T=\{i\}$, for which the inequality in submodularity is strict.
\end{proof}

\begin{lemma}\label{lem:lovasz-symmetry}
  For all $u\in\reals^k$, $y,y'\in\Y$, and $f\in \F_k $, $L^{ f}(u,y) = L^{f}(u\odot y',y\odot y')$.
\end{lemma}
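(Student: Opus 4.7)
The plan is to unfold the definition of $L^{f}$ and exploit the elementary identity $y'\odot y' = \ones$ for any $y'\in\{-1,1\}^k$, which reflects the fact that squaring a $\pm 1$ vector coordinatewise yields the all-ones vector. This identity combined with associativity and commutativity of the Hadamard product will cause the factors of $y'$ on the two sides of the equation to cancel, reducing the right-hand side of the claim to the left-hand side. Since $f\in \F_k$, there is a single Lov\'asz extension $F$ that does not depend on the label, so no further book-keeping for $y$-dependent polymatroids is required.

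Concretely, I would start from
\begin{equation*}
  L^{f}(u\odot y',\, y\odot y') \;=\; F\!\bigl((\ones - (u\odot y')\odot(y\odot y'))_{+}\bigr)
\end{equation*}
by definition~\eqref{eq:lovasz-hinge}. Then I would simplify the argument inside $F$ using $(u\odot y')\odot(y\odot y') = u\odot y\odot(y'\odot y') = u\odot y\odot\ones = u\odot y$, and conclude that this equals $F((\ones - u\odot y)_{+}) = L^{f}(u,y)$. Since the calculation is valid for every $u\in\reals^{k}$ and every pair $y,y'\in\Y$, the claim follows.

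There is essentially no obstacle: the main thing to remember is that the invariance holds only because $F$ itself does not depend on the label (i.e., $f\in\F_k$ rather than $\vec f\in\vec\F_k$); in the asymmetric case $F_{y\odot y'}$ would appear on the right-hand side and the equality would generally fail. It is worth noting this distinction when presenting the proof, but for the statement as given, the calculation is immediate.
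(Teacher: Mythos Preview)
Your proposal is correct and is essentially identical to the paper's own proof: unfold the definition of $L^{f}$, use associativity/commutativity of $\odot$ together with $y'\odot y' = \ones$ to simplify the argument of $F$, and conclude. Your observation that the argument relies on $f\in\F_k$ (so that $F$ is label-independent) is also exactly the relevant caveat.
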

\begin{proof}
$ L^f(u\odot y',y\odot y') = F\bigl((\ones - (u \odot y') \odot (y \odot y'))_+\bigr) = F\bigl(((\ones - u \odot (y' \odot y') \odot y )_+\bigr) = F\bigl((\ones - u \odot y )_+\bigr) = L^f(u,y) $.
\end{proof}

\lovaszpropertysymmetry*
\begin{proof}
	We define $p\odot r \in \Delta(\Y)$ by $(p\odot r)_y = p_{y\odot r}$.
	\begin{align*}
	\prop{L^f}(p\odot r)
	&= \argmin_{u\in\reals^k} \sum_{y\in\Y} (p\odot r)_y L^f(u,y) \\
	&= \argmin_{u\in\reals^k} \sum_{y\in\Y} p_{y\odot r} L^f(u,y) & \text{Definition of $p\odot r$}\\
	&= \argmin_{u\in\reals^k} \sum_{y\in\Y} p_{y\odot r} L^f(u\odot r,y\odot r) & \text{Lemma~\ref{lem:lovasz-symmetry}}\\
	&= \argmin_{u\in\reals^k} \sum_{y'\in\Y} p_{y'} L^f(u\odot r,y') & \text{Substituting $y=y'\odot r$} \\
	&= \left(\argmin_{u'\in\reals^k} \sum_{y'\in\Y} p_{y'} L^f(u',y')\right)\odot r\\
	&= \prop{L^f}(p)\odot r
	\end{align*}
\end{proof}

\begin{lemma}\label{lem:opt-ones-zero-abs}	
Let $\overline p$ be the uniform distribution on $\Y$, i.e.  $p_y = 2^{-k}$ for all $y\in\Y$, and assume $k \geq 3$.
Moreover, let $\vec f$ be a family of polymatroid functions satisfying Condition~\ref{cond:inconsistency}.
Then $\{\vec 0\} \subseteq \prop{\ellabsvecf}(\overline p) \subseteq \{\vec 0, \ones\}$.
\end{lemma}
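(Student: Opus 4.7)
The plan is to pair each $y \in \Y$ with $-y$ and bound $v$'s contribution to $\ellabsvecf(\cdot;\overline p)$ pair by pair. Writing $A = \abs(v)$, $B = [k]\setminus A$, and $T = \mis(v,y)\setminus A \subseteq B$, a direct check gives $\mis(v,-y) = A \cup (B\setminus T)$, so
\begin{equation*}
\ellabsvecf(v,y) + \ellabsvecf(v,-y) = \bigl[f_y(A\cup T) + f_{-y}(\overline{A\cup T})\bigr] + \bigl[f_y(T) + f_{-y}(\overline T)\bigr].
\end{equation*}
Condition~\ref{cond:inconsistency} applied directly, and again after the swap $y\leftrightarrow -y$, $S\leftrightarrow \overline S$ (which leaves the exception set $\{-\ones, -\chi_S, \ones\}$ invariant, since $\chi_{\overline S} = -\chi_S$), bounds each bracket below by $\max(f_y([k]), f_{-y}([k]))$. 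Hence the sum is at least $2\max(f_y([k]), f_{-y}([k])) \geq f_y([k]) + f_{-y}([k])$, which matches $\ellabsvecf(\vec 0,y) + \ellabsvecf(\vec 0,-y)$ by normalization of $f_y$ and the fact that $\abs(\vec 0) = [k]$. Summing over the $2^{k-1}$ pairs then yields $\ellabsvecf(v;\overline p) \geq \ellabsvecf(\vec 0;\overline p)$ for every $v \in \V$, proving $\vec 0 \in \prop{\ellabsvecf}(\overline p)$.

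For the second inclusion, I would show that for every $v \notin \{\vec 0,\ones\}$ there exists a $y$ for which the first bracket is strictly larger than $\max(f_y([k]), f_{-y}([k]))$. Because the exception set is invariant under the swap above, strict inequality in one direction of Condition~\ref{cond:inconsistency} forces it in both, giving a strict gap in the pair inequality and hence $\ellabsvecf(v;\overline p) > \ellabsvecf(\vec 0;\overline p)$. Locating such a $y$ uniformly in $v$ is the main obstacle, which I handle by case analysis on $A$.

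If $A = \emptyset$ and $v = -\ones$, any $y \notin \{\pm\ones\}$ suffices, using that $-\chi_{\mis(v,y)} = -y$ never equals $y$. If $A = \emptyset$ and $v \in \Y \setminus \{\pm \ones\}$, the four ``bad'' values $\{\pm v, \pm\ones\}$ are avoidable because $|\Y| = 2^k \geq 8$ by $k \geq 3$. If $\emptyset \neq A \subsetneq [k]$, the default choice is $y_B = v_B$, which forces $T = \emptyset$ and hence $S_1 := A \cup T = A \notin \{\emptyset,[k]\}$; I then pick $y_A$ to avoid $y \in \{\ones,-\ones,-\chi_A\}$. This succeeds whenever $v_B \notin \{\pm \ones_B\}$ (any $y_A$ works) or $|A| \geq 2$ (pick $y_A \notin \{\pm\ones_A\}$).

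The remaining sub-case is $|A| = 1$ together with $v_B \in \{\pm\ones_B\}$, where $v$ is $\pm\ones$ with a single abstained coordinate; this is precisely where the hypothesis $k \geq 3$ (equivalently $|B| \geq 2$) becomes essential. Here I would instead take $y_B$ equal to $v_B$ with a single coordinate $j \in B$ flipped, yielding $T = \{j\}$ and $S_1 = A\cup\{j\}$ with $2 \leq |S_1| \leq k-1$, so $S_1 \notin \{\emptyset,[k]\}$. The flipped $y_B$ already rules out $y = \pm\ones$, and a mild restriction on the single coordinate $y_A$ (namely $y_A = +\ones_A$ when $v_B = \ones_B$, any $y_A$ when $v_B = -\ones_B$) rules out $y = -\chi_{S_1}$, producing the desired strict bracket and completing the argument.
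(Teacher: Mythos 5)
Your proposal is correct and follows essentially the same route as the paper: both pair each label $y$ with $-y$, apply Condition~\ref{cond:inconsistency} to the resulting complementary-set pairs to obtain $\ellabsvecf(v;\overline p)\geq \ellabsvecf(\vec 0;\overline p)$, and then exhibit, for each $v\notin\{\vec 0,\ones\}$, a single label triggering the strict case of the condition. Your case analysis (organized by $A$ and $v_B$ rather than the paper's $S$ and $A$) is complete and your use of $k\geq 3$ in the $|A|=1$, $v_B\in\{\pm\ones_B\}$ sub-case mirrors where the paper needs it.
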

\begin{proof}
Let $\Gamma = \prop{\ellabsvecf}$, where $\vec f$ is an arbitrary polymatroid collection satisfying Condition~\ref{cond:inconsistency}.
Fix $v \in \{-1,0,1\}^k \setminus \{\vec 0, \ones\}$.
We abuse notation by denoting $f_T = f_y$, where $y = \chi_T$. 
We let $ A_v$ be used to denote $\abs (v)$, $S_v =\{i\in [k]: \signstar (v)>0   \}$, and $S_v \triangle T$ to denote $\mis (v,y)$.
When $v$ is understood from context, we will simply write $S$ and $A$.

First, we show $\{ \vec 0 \}  \in \Gamma(\overline p)$.
  \begin{align*}
    \ellabsvecf(v;\overline p)
    &= 2^{-k} \sum_{T\subseteq [k]} f_T(S\triangle T\setminus A) + f_T(S\triangle T\cup A)
    \\
    &= \frac 1 2 \; 2^{-k} \sum_{T\subseteq [k]} \biggl( f_T(S\triangle T\setminus A) + f_T(S\triangle T\cup A)
    \\[-12pt]
    & \hspace*{85pt}
      + f_{\overline T}(S\triangle \overline T\setminus A) + f_{\overline T}(S\triangle \overline T\cup A) \biggr) \qquad \text{Condition~\ref{cond:inconsistency}}
     \\
    &\geq \frac 1 2 \; 2^{-k} \sum_{T\subseteq [k]} \bigl(
      f_T([k]) + f_T\bigl(\emptyset) + f_{\overline T}([k]) + f_{\overline T}(\emptyset) \bigr)
    \\
    &= 2^{-k} \sum_{T\subseteq [k]} \bigl(
      f_T([k]) + f_T\bigl(\emptyset) \bigr)
    \\
    &= \ellabsvecf(\vec 0;\overline p)~.
  \end{align*}
  Hence, $\{\vec 0\} \subseteq \Gamma(\overline p)$.

Now, in order to show that $\Gamma(\overline p) \subseteq \{\vec 0, \ones\}$, it suffices to show that, for all $v \in \{-1,0,1\}^k \setminus \{\vec 0, \ones\}$, there is a $T \subseteq [k]$ such that the inequality in Condition~\ref{cond:inconsistency} is strict on either $S_v \Delta T \cup A_v$ or $S_v \Delta T \setminus A_v$.  
In particular, the inequality is strict on one of these if there is a $T$ such that either (i) $0 < |S \Delta T \setminus A| < k$ and $S \Delta T \setminus A \neq \overline T$ or (ii) $0 < |S \Delta T \cup A| < k$ and $S \Delta T \cup A \neq \overline{T}$.

In what follows, we show that for all $S$ and $A$ where $S\cap A=\emptyset$ and either $A\neq [k]$ or $S\neq [k]$ , there exists a $T\subseteq [k]$ such that $T\notin \{\emptyset ,[k] \}$ and $S\Delta T\setminus A\neq \overline{T}$ which makes the inequality in Condition~\ref{cond:inconsistency} strict.
To do so, we will often use the fact that $S\triangle T\setminus A = \overline{S\triangle \overline T\cup A}$ and $S\Delta T\cup A = \overline{S\Delta\overline{T}\setminus A}$.

\textbf{Case 1a.} ($0<|S|<k$, $A = \emptyset$):
  If $A=\emptyset$, let $T=\{i\}\cup\{j\}$ such that $i\in S$ and $j \in \overline{S}$, then $S\Delta T\setminus A = S\Delta T=(S\setminus \{i\})\cup \{j\}$ and hence $0< |S|-1+1=|S| = |S\Delta T\setminus A | < k$.
  Moreover, since $(S \Delta T\setminus A) \cap T = ((S\setminus \{i\})\cup \{j\}) \cap T \supseteq \{j\}$, then $S \Delta T\setminus A  = (S\setminus \{i\})\cup\{j\}\neq \overline{T}$. 
 
 \textbf{Case 1b.} ($0<|S|<k$, $A = \overline S$): 
 In general, when  $A=\overline{S}$, we have $S\Delta T\setminus A = S\Delta T\setminus \overline{S}=S\setminus T$. 
 If $|S| < k-1$, then consider $T = \{i\}$ for any $i \in \overline S$.
 As $|\overline S| > 1$, we immediately have $T \neq \overline S$.
 Moreover, $S \setminus T = S$, which implies $0 < |S \setminus T| = |S| < k-1 < k$, and the conclusion follows.
 
 Now if $|S| = k-1$, consider $T = \{i,j\}$ for $i \in \overline S$ and any $j \in S$.
 By construction we have $T \neq \overline S$.
 Moreover, $|S \setminus T| = |S| - 1 = k-2$.
 As $k \geq 3$, we immediately have $0 < |S \setminus T| = |S| = k-1 < k$.

  \textbf{Case 1c.} ($0<|S|<k$, $A \notin \{ \emptyset, [k], \overline S\}$):
  Consider $T = A$: as $A \neq \overline S$, we immediately have $T \neq \overline S$.
  Moreover, as $A$ and $S$ are disjoint, $S \Delta T \setminus A = (S \cup A) \setminus A = S$.
  By the case, $0 < |S| = |S \Delta T \setminus A| < k$ immediately follows.

\textbf{Case 2a.} ($S=\emptyset$, $A = \emptyset$):
In this case,  $S\Delta T\setminus A = T\setminus A = T$ for any $T\subseteq [k]$.
Thus, any choice of $T$ such that $T\notin \{\emptyset,[k]\}$ yields $ 0<|S\Delta T\setminus A |= |T|<k$.
Moreover, $S\Delta T\setminus A = T \neq \overline{T}$ holds as well.

\textbf{Case 2b.} ($S=\emptyset$, $A \not \in \{ \emptyset, [k] \}$): If $A\notin \{\emptyset ,[k] \}$, let $T=A$, then $S\Delta T \cup A = A$ and thus $0<|S\Delta T\cup A|= |A|<k$.
Moreover, $S\Delta T\cup A = A \neq \overline{A} = \overline{T}$ holds as well. 

This concludes the cases.
Thus showing strict suboptimality of any prediction $v \in \{-1, 0, 1\}^k \setminus \{\vec 0, \ones\}$, and the result $\Gamma(\overline p) \subseteq \{\vec 0, \ones\}$ follows.
\end{proof}


\subsection{Omitted proofs from \S~\ref{sec:constructing-link}}\label{app:omitted-sec-5}

Since $\boxed u \in R$, ``clipping'' $u'$ to $\boxed{u'}$ can only reduce element-wise distance, and therefore $d_\infty(\boxed u, \cdot)$ is still small, which allows us to restrict our attention to $R$.

\ignore{\begin{lemmac}\label{lem:truncated-u-also-close}\end{lemmac}}
\begin{restatable}{lemmac}{truncatedualsoclose}\label{lem:truncated-u-also-close}
	Let $\vec{f} \in \vec{\F_k}$.
	For all $U \in \prop{L^{\vec{f}}}(\simplex)$, $u\in\reals^k$, and $0<\epsilon<2$, if $d_\infty(U,u) < \epsilon$ then $d_\infty(U\cap [-1,1]^k, \boxed u) < \epsilon$.
\end{restatable}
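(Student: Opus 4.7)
The plan is to pick a witness $u'\in U$ achieving the distance bound, clip it to the hypercube, and verify two things: (i) the clipped point is still in $U$, so it is a valid witness for $U\cap[-1,1]^k$, and (ii) clipping can only decrease the $\ell_\infty$ distance to any point, in particular to $\boxed u$.

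In more detail, I would proceed as follows. Since $d_\infty(U,u) = \inf_{u^* \in U}\|u^*-u\|_\infty < \epsilon$, pick some $u' \in U$ with $\|u' - u\|_\infty < \epsilon$. Because $U \in \prop{L^{\vec f}}(\simplex)$, there is a distribution $p$ such that $U = \prop{L^{\vec f}}(p)$, i.e., $u'$ minimizes $L^{\vec f}(\cdot;p) = \E_{Y\sim p} L^{\vec f}(\cdot,Y)$. Applying Lemma~\ref{lem:lovasz-hypercube-dominates} pointwise in $y$ and taking expectation under $p$ gives $L^{\vec f}(\boxed{u'}; p) \leq L^{\vec f}(u'; p)$, so $\boxed{u'}$ is also a minimizer, i.e., $\boxed{u'} \in U$. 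Since $\boxed{u'} \in [-1,1]^k$ by construction, we have $\boxed{u'} \in U \cap [-1,1]^k$, and this set is in particular nonempty.

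Next, I would invoke the fact that the one-dimensional clip map $t \mapsto \mathrm{sign}(t)\min(|t|,1)$ is the Euclidean projection onto $[-1,1]$ and hence $1$-Lipschitz, so coordinatewise $|\boxed{u'}_i - \boxed{u}_i| \leq |u'_i - u_i|$, giving $\|\boxed{u'} - \boxed u\|_\infty \leq \|u' - u\|_\infty < \epsilon$. Therefore
\[
  d_\infty(U \cap [-1,1]^k,\, \boxed u) \;\leq\; \|\boxed{u'} - \boxed u\|_\infty \;<\; \epsilon~,
\]
which is the claim.

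There is no real obstacle here; the only ingredient beyond bookkeeping is Lemma~\ref{lem:lovasz-hypercube-dominates}, used to ensure that clipping preserves membership in the optimal set $U$, together with the elementary $1$-Lipschitz property of coordinatewise clipping. The hypothesis $0 < \epsilon < 2$ is not actually needed for the argument above, but is consistent with $\boxed u$ always lying within distance $2$ of $u$ in $\ell_\infty$ on the hypercube $[-1,1]^k$.
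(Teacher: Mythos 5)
Your proof is correct and follows the same skeleton as the paper's: take a witness $u'\in U$ within $\epsilon$ of $u$, use Lemma~\ref{lem:lovasz-hypercube-dominates} to argue $\boxed{u'}\in U$ (so the clipped point is a legitimate witness for $U\cap[-1,1]^k$), and then show clipping does not increase the $\ell_\infty$ distance. The one place you genuinely diverge is the last step: the paper establishes $d_\infty(\boxed{u},\boxed{u'})<\epsilon$ by a three-way coordinatewise case analysis, and in the case where both $u_i$ and $u'_i$ lie outside $[-1,1]$ it needs $\epsilon<2$ to force their signs to agree. You instead invoke the non-expansiveness of the coordinatewise projection onto $[-1,1]$, which handles all cases uniformly, and you correctly observe that this makes the hypothesis $0<\epsilon<2$ superfluous. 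Your version is the cleaner and slightly more general argument; the paper's buys nothing extra except making the role of the $\epsilon<2$ assumption visible. (A minor stylistic point in your favor: by selecting $u'$ with $\|u'-u\|_\infty<\epsilon$ directly from the infimum rather than a closest point, you avoid even needing closedness of $U$.)
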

\begin{proof}
	Since $U$ is closed, we have some closest point $u'\in U$ to $u$, meaning $d_\infty(u',u) = d_\infty(U,u) < \epsilon$.
	As $\boxed u' \in U$ by a corollary of Lemma~\ref{lem:lovasz-hypercube-dominates}, it suffices to show $d_\infty(\boxed u, \boxed u') < \epsilon$.
	For each $i \in [k]$, we consider three cases.
	It suffices to show distance does not increase on each element by the choice of the $d_\infty(\cdot,\cdot)$ distance.

	The cases are as follows: (i) $u_i = \boxed u_i$ and $u' = \boxed u'_i$, (ii) $u_i \neq \boxed u_i$ and $u'_i \neq \boxed u'_i$, and (iii) $u_i = \boxed u_i$ and $u'_i \neq \boxed u'_i$ (WLOG).
	Case (i) is trivial as $|u_i - u'_i| = |\boxed u_i - \boxed u'_i| < \epsilon$.
	In case (ii), we must have $\sign(u)_i = \sign(u')_i$ as $d_\infty(u,u') < \epsilon \implies |u_i - u'_i| < \epsilon$.
	If both $u_i$ and $u'_i$ are outside $[-1,1]^k$, this inequality is only true (for $\epsilon < 2)$ if the sign matches.
	Therefore $|\boxed u_i - \boxed u'_i| = |\sign(u)_i - \sign(u')_i| = 0 < \epsilon$.
	In case (iii), we have $\epsilon > |u_i - u'_i| > |u_i - 1| = |u_i - \boxed u'_i|$.
	As absolute difference in each element does not increase, the $d_\infty(\cdot, \cdot)$ distance does not increase.
\end{proof}

We now proceed to statements about the link envelope construction $\hat \Psi$.

\psicontainment*
\begin{proof}
  Let us define
  \begin{align*}
    \A(u) &:= 	\{V \in \Vfaces \mid d_\infty(\conv (V), \boxed u) < \epsilon\}~,
    \\
    \B(u) &:= \{U \cap \V \mid U\in \U^{
    \vec{f}}, \, d_\infty(U, u) < \epsilon\}~,
  \end{align*}
	so that $\hat \Psi(u) = \cap \A(u)$ and $\Psi^{\vec f}(u) = \cap \B(u)$.
  We wish to show $\cap \A(u) \subseteq \cap \B(u)$.
  It thus suffices to show the following claim: for all $B \in \B(u)$ we have some $A\in \A(u)$ with $A \subseteq B$.
  Since $v \in \cap \A(u)$ implies $v\in A$ for all $A\in\A(u)$, which by the claim implies $v\in B$ for all $B\in\B(u)$ and thus $v\in\cap\B(u)$.

  To do so, let $B \in \B(u)$, so we may write $B = U\cap \V$ for $U\in \U^{\vec{f}}$ with $d_\infty(U, u) < \epsilon$.
  By Lemma~\ref{lem:truncated-u-also-close} we have $d_\infty(U \cap R, \bar u) = d_\infty(U,u) < \epsilon$.
	From Lemma~\ref{lem:p-pi-y}, the set $R = [-1,1]^k = \cup_{\pi \in \Sc_k, y \in \Y} P_{\pi,y}$ is the union of polyhedral subsets of $\reals^k$, and $L^{\vec{f}}(\cdot,y')$ is affine on each $P_{\pi,y}$ for all $y,y'\in\Y$.
  By Lemma~\ref{lem:UcapR-union-faces}, we then have $U \cap R = \cup \F$ for some $\F \subseteq \cup_{\pi,y}\faces(P_{\pi,y})$.
  As each such face can be written as $\conv (V)$ for some $V\in\Vfaces$, we have some $\V' \subseteq \Vfaces$ such that $U \cap R = \cup \F = \cup_{V \in \V'}\conv (V)$.
  Now $\min_{V\in\V'} d_\infty(\conv (V), \boxed u) = d_\infty(U \cap R, \boxed u) < \epsilon$, so we have some $V \in \V'$ such that $d_\infty(\conv (V), \boxed u) < \epsilon$.
  Thus $V \in \A(u)$ by definition.
  As $\conv (V) \subseteq U\cap R$, we have $V = \conv (V) \cap \V \subseteq (U\cap R)\cap \V = U\cap \V = B$, which proves the claim.
\end{proof}

\begin{lemmac}\label{lem:smallest-vert-rep}
	Fix $u \in [-1,1]^k$, and consider $ \pi , y$ such that $u \in P_{\pi,y}$.
	Then $V_{\pi , y}^{u} := \{\onespi{\pi}{i} \odot y \mid i \in \{0,\ldots,k\}, \alpha_i(|u|) \neq 0 \}$
	is the
	smallest (in cardinality) set of vertices such that $V_{\pi , y}^{u} \subseteq   V_{\pi,y} $ and $u \in \mathrm{conv}(V_{\pi , y}^{u})$.
\end{lemmac}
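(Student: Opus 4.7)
The plan is to exploit the affine independence of $V_{\pi,y}$, which makes $P_{\pi,y}$ a simplex and forces barycentric coordinates to be unique. Once uniqueness is established, the set of vertices with nonzero coefficient in the barycentric representation of $u$ is the unique minimal spanning set of vertices, which is exactly $V_{\pi,y}^u$.

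First I would verify that $u$ lies in $\mathrm{conv}(V_{\pi,y}^u)$. Since $u \in P_{\pi,y}$, we may write $u = |u| \odot y$ with $|u| \in P_\pi \cap [0,1]^k$. By eq.~\eqref{eq:alpha-combination} and the discussion immediately following it, $|u| = \sum_{i=0}^{k} \alpha_i(|u|)\, \onespi{\pi}{i}$ is a convex combination. Taking the Hadamard product with $y$ gives
\begin{equation*}
u \;=\; \sum_{i=0}^{k} \alpha_i(|u|) \,\onespi{\pi}{i} \odot y \;=\; \sum_{i \,:\, \alpha_i(|u|) \neq 0} \alpha_i(|u|) \,\onespi{\pi}{i} \odot y~,
\end{equation*}
which is a convex combination of the elements of $V_{\pi,y}^u$. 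Hence $V_{\pi,y}^u \subseteq V_{\pi,y}$ and $u \in \mathrm{conv}(V_{\pi,y}^u)$.

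For minimality, recall the comment after eq.~\eqref{eq:P-pi} that the $k{+}1$ vectors in $V_{\pi,y}$ are affinely independent, so $P_{\pi,y}$ is a simplex. Consequently, every $u \in P_{\pi,y}$ has a \emph{unique} representation as a convex combination of the vertices in $V_{\pi,y}$, namely its barycentric coordinates, which must agree with the coefficients $\alpha_i(|u|)$ exhibited above. Now suppose $V' \subseteq V_{\pi,y}$ satisfies $u \in \mathrm{conv}(V')$. Then writing $u$ as a convex combination of the elements of $V'$ and padding the remaining vertices of $V_{\pi,y}$ with coefficient $0$ yields a convex combination of all of $V_{\pi,y}$. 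By uniqueness, this combination must coincide with the one above, so every vertex with $\alpha_i(|u|) \neq 0$ appears with positive weight in $V'$. Therefore $V_{\pi,y}^u \subseteq V'$, proving $V_{\pi,y}^u$ is the (unique) smallest such subset.

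I do not expect a real obstacle here: the only substantive input is affine independence of $V_{\pi,y}$, which was already observed in the body of the paper. The mildly delicate point to state carefully is that ``smallest'' can be strengthened to ``unique minimal'' precisely because the simplex structure forbids any alternative convex representation of $u$.
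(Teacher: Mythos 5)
Your proposal is correct and follows essentially the same route as the paper's proof: exhibit $u$ as the convex combination $\sum_i \alpha_i(|u|)\,\onespi{\pi}{i}\odot y$ to get containment, then invoke affine independence of $V_{\pi,y}$ (so $P_{\pi,y}$ is a simplex with unique barycentric coordinates) to conclude every vertex with nonzero weight is necessary. Your explicit ``pad with zeros and compare'' step and the upgrade from ``smallest'' to ``unique minimal'' are just slightly more detailed renderings of the same argument.
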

\begin{proof}
	First, observe that $V^u_{\pi,y} \subseteq V_{\pi,y}$ by construction, as the first set is constructed the same as the second, with one additional constraint.
	Moreover, we have $u = \sum_{i=1}^k \alpha_i(|u|) u_i = \sum_{i : \alpha_i(|u|) \neq 0} \alpha_i(|u|) u_i \in \conv (V^u_{\pi,y})$.

	Now recall $P_{\pi,y}$ is a simplex (see ``Linear interpolation on simplices'' \citet[pg.\ 167]{bach2013learning}) thus, by properties of simplex, each $u\in P_{\pi,y}$ has a unique convex combination expressed by the vertices of $  V_{\pi,y}$ which are affinely independent~\citep[pg.\ 14, Thm 2.3]{brondsted2012introduction}.
	Therefore, every vertex $i$ with a non-zero weighting $\alpha_i(|u|)\neq0$ is necessary in order to express $u$ as a convex combination due to the affine independence of the vertices.
	Thus, $V_{\pi , y}^{u} := \{\onespi{\pi}{i} \odot y \mid i \in \{0,\ldots,k\}, \alpha_i(|u|) \neq 0 \}$, and as $|V_{\pi , y}^{u}| < \infty$, has to be the smallest (in cardinality) set of vertices such such that $V_{\pi , y}^{u} \subseteq V_{\pi,y}$ and $u \in \mathrm{conv}(V_{\pi , y}^{u})$.
\end{proof}

Moreover, $\hat\Psi$ is symmetric around signed permutations.
\begin{lemmac}\label{lem:hat-Psi-symmetry-y}
	For all $u\in \reals^k$, $y\in \Y$, and $\pi\in \Sc_k$, we have $\hat \Psi (\pi(u\odot y))=\pi(\hat \Psi (u)\odot y)$, where we define $(\pi x)_i = x_{\pi_i}$ and we extend this operation to sets.
\end{lemmac}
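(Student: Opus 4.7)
The plan is to exhibit the signed-permutation map $\sigma_{\pi,y}\colon v\mapsto \pi(v\odot y)$ as a bijection of $\Vfaces$ that is a $d_\infty$-isometry and commutes with the convex hull operation. Once this is in place, both sides of the claimed equality reduce to the image under $\sigma_{\pi,y}$ of the collection of faces used to define $\hat\Psi(u)$.

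First I would verify a few elementary coordinate-wise identities. A direct check gives $\pi(x\odot z)=(\pi x)\odot(\pi z)$ for any $x,z\in\reals^k$, so in particular $\pi(u\odot y)\odot y = \pi u$ when $y\in\Y$. Since clipping is coordinate-wise and $\pi$ merely relabels coordinates, $\boxed{\pi(u\odot y)}=\pi(\boxed{u}\odot y)$. Both $\pi$ (a coordinate permutation) and $\odot y$ (a sign flip with $y\in\Y$) are affine bijections of $\reals^k$ that preserve $d_\infty$-distance. Consequently, for any $A,B\subseteq\reals^k$, we have $\conv(\pi(A\odot y))=\pi(\conv(A)\odot y)$ and $d_\infty(\pi(A\odot y),\pi(B\odot y))=d_\infty(A,B)$.

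Next I would show that $\sigma_{\pi,y}$ permutes $\Vfaces$. An elementary check yields $\pi\,\onespi{\pi_0}{i}=\onespi{\tilde\pi}{i}$, where $\tilde\pi$ is obtained by composing $\pi_0$ with $\pi^{-1}$ on indices. Combined with the identity $\pi(V_{\pi_0}\odot y_0 \odot y)=(\pi V_{\pi_0})\odot \pi(y_0\odot y)$ above, this gives $\sigma_{\pi,y}(V_{\pi_0,y_0})=V_{\tilde\pi,\,\pi(y_0\odot y)}\in\Vfaces$. Since $\sigma_{\pi,y}$ is already a bijection on $\V$, the induced map on subsets of $\V$ is a bijection, and this image computation (applied to each $V_{\pi_0,y_0}$ and all of its subsets, which are exactly the elements of $2^{V_{\pi_0,y_0}}$) shows it restricts to a bijection on $\Vfaces$. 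This bookkeeping is the main obstacle; once past it, the rest is symmetry.

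Finally I would combine these ingredients. Setting $u'=\pi(u\odot y)$, we have $\boxed{u'}=\pi(\boxed u\odot y)$, and for each $V\in\Vfaces$ the isometry identity yields
\[
d_\infty\bigl(\conv\sigma_{\pi,y}(V),\,\boxed{u'}\bigr)
=d_\infty\bigl(\pi(\conv V\odot y),\,\pi(\boxed u\odot y)\bigr)
=d_\infty(\conv V,\,\boxed u).
\]
Hence $\{V'\in\Vfaces\mid d_\infty(\conv V',\boxed{u'})<\epsilon\}$ is precisely the $\sigma_{\pi,y}$-image of $\{V\in\Vfaces\mid d_\infty(\conv V,\boxed u)<\epsilon\}$. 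Because $\sigma_{\pi,y}$ is a bijection of $\V$, it commutes with arbitrary intersections of subsets, so $\hat\Psi(u')=\sigma_{\pi,y}(\hat\Psi(u))=\pi(\hat\Psi(u)\odot y)$, as claimed.
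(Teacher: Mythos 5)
Your proof is correct and follows essentially the same route as the paper's: show that clipping commutes with signed permutations, that signed permutations are $d_\infty$-isometries, and that the collection $\Vfaces$ is invariant under them, so the intersection defining $\hat\Psi$ transforms equivariantly. The only difference is one of completeness: the paper treats the permutation part as ``straightforward'' and handles only the sign flip in detail, implicitly using $\Vfaces\odot y=\Vfaces$ when pulling $\odot\, y$ outside the intersection, whereas you make the bijection of $\Vfaces$ under $\sigma_{\pi,y}$ explicit --- a worthwhile addition rather than a divergence.
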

\begin{proof}
	The proof that the permutation part $(\hat\Psi(\pi u) = \pi\hat\Psi(u))$ is straightforward from the definition.
	For sign changes, observe $\boxed{u\odot y}=\sign(u\odot y)\dot \min(|u\odot y|,1)=\sign(u)\odot y\odot \min(|u|,1)=\boxed{u}\odot y$.
  The operation $u\mapsto u\odot y$ is an isometry for the infinity norm as a special case of signed permutations, here the identity permutation~\citep[Theorem 2.3]{chang1992certain}.
	For all closed  $U\subseteq\reals^k$, we therefore have
	$d_{\infty}(\boxed{u\odot y},U\odot y)=d_{\infty}(\boxed{u}\odot y, U\odot y)=d_{\infty}(\boxed{u},U)$.
	Therefore,
	\begin{align*}
	\hat\Psi(u\odot y)
	&= \cap \{ V \in \Vfaces \mid d_\infty (\conv (V), \boxed{u\odot y}) < \epsilon\} &
	\\
	&= \cap \{ V \in \Vfaces \mid d_\infty (\conv (V)\odot y, \boxed{u}) < \epsilon\} & \quad\text{$\boxed{u \odot y} = \boxed u \odot y$, and $\boxed u \odot y \odot y = \boxed u$} \\[-2pt]
	& & \text{with $d_\infty$ preserved under $\odot$.}  \\
	&= \cap \{ V \in \Vfaces \mid d_\infty (\conv (V), \boxed{u}) < \epsilon\}\odot y &
	\\
	&= \hat\Psi(u)\odot y~. &
	\end{align*}
\end{proof}

\hatpsichar*
\begin{proof}
	We will show the statement for $u\in \reals^k_+$ with $u_1 \geq \cdots \geq u_{k}$, i.e., where $u\in P_{\pi^*}$ where $\pi^*$ is the identity permutation.
	Lemma~\ref{lem:hat-Psi-symmetry-y} then gives the result, as we now argue.
	For any $u\in\reals^k$, let $\pi\in\Sc_k$ order the elements of $|u|$, and let $y=\signstar(u)$.
	Then $\pi(u\odot y) = \pi|u| \in P_{\pi^*}$.
	Once we show eq.~\eqref{eq:hat-psi-char} is true on the unsigned, ordered case, eq.~\eqref{eq:hat-psi-char} gives $\hat\Psi(\pi|u|) = \{\onespi{\pi^*}{i} \mid i\in\{0,1,\ldots,k\}, \; |u_{\pi_i}| \geq |u_{\pi_{i+1}}| + 2\epsilon \}$.
	Thus $\hat\Psi(u) = \hat\Psi(\pi(u\odot y)) = \pi(\hat\Psi(u)\odot y) = \{\pi(\onespi{\pi^*}{i}\odot y) \mid i\in\{0,1,\ldots,k\}, \; |u_{\pi_i}| \geq |u_{\pi_{i+1}}| + 2\epsilon \} = \{\onespi{\pi}{i}\odot \signstar(u) \mid i\in\{0,1,\ldots,k\}, \; |u_{\pi_i}| \geq |u_{\pi_{i+1}}| + 2\epsilon \}$.

	To begin, we show that for any $i\in \{ 0 ,1,\dots ,k\}$ where $u_{_i}<u_{i+1}+2\epsilon$, $\onespi{\pi^*}{i}\notin \hat \Psi (u)$ by the contrapositive.
	First, suppose that there exists an $i\in \{0, 1,\dots ,k \}$ such that $u_{i}<u_{i+1}+2\epsilon$.
	Since $u$ is ordered, we know that $0\leq u_{i} - u_{i+1} < 2\epsilon$.
	Let $z=\frac{u_i + u_{i+1}}{2}$ and define $\hat u$ such that $\hat{u}_i=z$ and $\hat{u}_{i+1}=z$ while every other index of $\hat u$ is equal to $u$.
	Observe $u_i -z < \epsilon$ and $z - u_{i+1} <  \epsilon$, and thus $d_{\infty}(u,\hat u) < \epsilon$ as $d_\infty(\cdot,\cdot)$ is measured component-wise.
	By Lemma~\ref{lem:smallest-vert-rep} and construction of $\alpha$ in the first paragraph of \S~\ref{sec:affine-decomposition}, we have $\alpha_{i}(\hat u)=\hat u_{i}-\hat u_{i+1}=0$, we have $\hat u \in \conv (V_i)$, where $V_i := V_{\pi^* , y}^{u}\setminus \{\onespi{\pi^*}{i}\}$.
	Since $\hat u \in \conv (V_i)$ and $d_{\infty}(\hat u,u)<\epsilon$, we have $V_i \supseteq \hat{\Psi}(u)$, and therefore, for any $i\in \{ 0 ,1,\dots , k\}$ such that $u_i<u_{i+1}+2\epsilon$, $\onespi{\pi^*}{i}\notin \hat \Psi (u)$.

	Now, for the converse, fix any $u\in P_{\pi^*}$ with $i \in \{ 0,1,\dots , k\}$ such that $u_i \geq u_{i+1}+2\epsilon$.
	For any $u'\in \reals^k$ such that $d_{\infty}(u,u')<\epsilon$, we claim that $\alpha_i (u') \neq 0$, and therefore $\onespi{\pi^*}{i}\in \hat \Psi(u)$.

	Assume there exists a $u'\in \reals^k$ such that $d_{\infty}(u,u')<\epsilon$ for some $i\in \{ 0,1,\dots , k\}$.
	Given that $d_{\infty}(u,u')<\epsilon$, $u'_{j}\in (u_j-\epsilon,u_j+\epsilon) \, \forall j \in \{0,\ldots, k\}$: namely, for $j = i$ and $i+1$. 
	However, since $u_i-u_{i+1}\geq 2 \epsilon$, $(u_i-\epsilon,u_i+\epsilon)\cap (u_{i+1}-\epsilon,u_{i+1}+\epsilon) = \emptyset$.
	Therefore, $\alpha_i(u') = u'_{i} - u'_{i+1} > 0$.
	By Lemma~\ref{lem:smallest-vert-rep},  we then have $\onespi{\pi^*}{i}\in V^u_{\pi^*,y}$, which is the smallest set $V$ such that $d_\infty(\conv (V), u) < \epsilon$, and is therefore in the intersection of all such sets; this intersection yields $\hat \Psi(u)$.
	Thus, we have $\hat\Psi(u) = \{\onespi{\pi}{i} \odot \signstar(u) \mid i\in\{0,1,\ldots,k\}, \; u_i \geq u_{i+1} + 2\epsilon \}$.
\end{proof}

\hatPsinonempty*
\begin{proof}
	By Lemma~\ref{lem:hat-Psi-symmetry-y}, it suffices to show the statement for $u\in\reals^k_+$.
	We will show the contrapositive in both directions: there exists $u\in\reals^k_+$ such that $\hat\Psi(u) = \emptyset$ if and only if $\epsilon > \tfrac 1 {2k}$.

	For any $u\in\reals^k_+$, define $u_{k+1} = -\epsilon$ and $u_{0} = 1+\epsilon$ as in Proposition~\ref{prop:hat-psi-char}.
	From the characterization in Proposition~\ref{prop:hat-psi-char} (eq.~\eqref{eq:hat-psi-char}), we have $\hat\Psi(u)=\emptyset$ if and only if
	\begin{equation}\label{eq:hat-psi-nonempty}
	u_i - u_{i+1} < 2\epsilon \text{ for all } i\in\{0,1,\ldots,k\}~.
	\end{equation}

	We may also write
	\begin{equation}\label{eq:u-telescope-trick}
	1+\epsilon = u_0 = u_{k+1} + \sum_{i=0}^{k} (u_i-u_{i+1}) = \sum_{i=0}^{k} (u_i-u_{i+1}) - \epsilon~.
	\end{equation}
	If there exists $u\in\reals^k_+$ with $\hat \Psi(u) = \emptyset$, then eq.~\eqref{eq:hat-psi-nonempty}~and~\eqref{eq:u-telescope-trick} together imply $1+2\epsilon = \sum_{i=0}^{k} (u_i-u_{i+1}) < (k+1)(2\epsilon)$, giving $\epsilon > \tfrac 1 {2k}$.
	For the converse, if $\epsilon > \tfrac 1 {2k}$, take $u\in\reals^k_+$ given by $u_i = \tfrac {2i-1}{2k}$.
	Then $u_0 - u_1 = 1+\epsilon - (1- \tfrac 1 {2k}) < 2\epsilon$ and $u_k-u_{k+1} = \tfrac 1 {2k} + \epsilon < 2\epsilon$, and for $i\in\{1,\ldots,k-1\}$, we have $u_{i+1}-u_i = \tfrac 1 k < 2\epsilon$, giving eq.~\eqref{eq:hat-psi-nonempty}.
\end{proof}


\section{Additional Experiment Details}\label{app:omitted-experiments}

\subsection{Binary metric details}

We define the following: True positive $TP(v,y)= \{i\in [k]:v_i=1 \wedge y_i=1 \}$, True negative $TN(v,y)= \{i\in [k]:v_i=-1 \wedge y_i=-1 \}$, False positive $FP(v,y) = \{i\in [k]:v_i=1 \wedge y_i=-1 \}$, and False negative $FN(v,y) = \{i\in [k]:v_i=-1 \wedge y_i=1 \}$ .

\begin{enumerate}
    \item Nonrejected Performance: measures the ability of the classifier to accurately classify nonrejected samples.
    \begin{enumerate}
        \item Accuracy: $\text{Acc}(\hat{S})= \frac{\sum_{j=1}^{N}|TP(v^{(j)},y^{(j)})|+|TN(v^{(j)},y^{(j)})|}{\sum_{j=1}^{N}|TP(v^{(j)},y^{(j)})|+|FP(v^{(j)},y^{(j)})|+|TN(v^{(j)},y^{(j)})|+|FN(v^{(j)},y^{(j)})|}$
        \item  Recall: $R_{nr}(\hat{S})= \frac{\sum_{j=1}^{N}|TP(v^{(j)},y^{(j)})|}{\sum_{j=1}^{N}|(TP(v^{(j)},y^{(j)})\cup FN(v^{(j)},y^{(j)}) |}$
        \item  Precision: $P_{nr}(\hat{S})= \frac{\sum_{j=1}^{N}|TP(v^{(j)},y^{(j)})|}{\sum_{j=1}^{N}|(TP(v^{(j)},y^{(j)})\cup FP(v^{(j)},y^{(j)}) |}$
        \item  IoU: $IoU(\hat{S})= \frac{\sum_{j=1}^{N}|TP(v^{(j)},y^{(j)})|}{\sum_{j=1}^{N}|(TP(v^{(j)},y^{(j)})\cup FP(v^{(j)},y^{(j)})\cup FN(v^{(j)},y^{(j)}) |}$
    \end{enumerate}
    \item Rejection quality: measures the ability of the classifier with rejection to make errors on rejected samples only.
    \begin{enumerate}
        \item Rejection Rate: $R(\hat{S})=\sum_{j=1}^{N}\frac{|\abs (v^{(j)})|}{k}$
        \item Rejection Rate Positive: $RP(\hat{S})=\frac{\sum_{j=1}^{N}|\{i\in[k]:y^{(j)}_{i} =1 \wedge v^{(j)}_{i}=0\}|}{\sum_{j=1}^{N}|\abs (v^{(j)})|}$
        \item Rejection Rate Negative: $RN(\hat{S})=\frac{\sum_{j=1}^{N}|\{i\in[k]:y^{(j)}_{i} =-1 \wedge v^{(j)}_{i}=0\}|}{\sum_{j=1}^{N}|\abs (v^{(j)})|}$
    \end{enumerate}
\end{enumerate}



\subsection{Model training details}\label{app:model-training}

 For training of the model, we performed 20 epochs using SGD as our optimizer.
 We set an initial learning rate of $.1$ along with exponential decay over the learning rate with a decay rate of $.96$ for every $10000$ steps, and gradient clipping between $\pm1$ to help stabilize the training.
 We additionally saved the model's best weights with respect to the validation loss after each epoch and performed evaluation over test data using the best weights from training with respect to validation loss.



\end{document}